\documentclass{article} 
\usepackage{iclr2025_conference,times,enumitem, subcaption, wrapfig, rotating}
\usepackage{xcolor, xtab}
\usepackage{colortbl}
\renewcommand{\div}[1]{\mathrm{div}\cdot\left[#1\right]}
\renewcommand{\paragraph}[1]{\noindent\textbf{#1}}
\usepackage{hyperref, url, smile}
\renewcommand{\mod}{\, \mathrm{mod}\,}
\newcommand{\RF}{\cellcolor{red!50}}
\newcommand{\RS}{\cellcolor{red!30}}
\newcommand{\REV}[1]{{#1}}
\setlength{\abovecaptionskip}{0.5em}
\setlength{\belowcaptionskip}{0.5em}
\setlength{\parskip}{0.5em}
\newcommand{\algname}{Q-weighted iterative policy optimization}
\title{Energy-Weighted Flow Matching for Offline Reinforcement Learning}

\author{Shiyuan Zhang$^{1}$~~~Weitong Zhang$^{2}$~~~Quanquan Gu$^{3}$
\\
$^{1}$Tsinghua University~~~$^{2}$UNC-Chapel Hill~~~$^{3}$University of California, Los Angeles\\
{\tt shiyuan-21@mails.tsinghua.edu.cn, weitongz@unc.edu, qgu@cs.ucla.edu}
}

\setlength{\fboxsep}{1pt}
\iclrfinalcopy 
\begin{document}

\maketitle

\begin{abstract}
This paper investigates energy guidance in generative modeling, where the target distribution is defined as $q(\xb) \propto p(\xb)\exp(-\beta \cE(\xb))$, with $p(\xb)$ being the data distribution and $\cE(\xb)$ as the energy function. To comply with energy guidance, existing methods often require auxiliary procedures to learn intermediate guidance during the diffusion process. To overcome this limitation, we explore energy-guided flow matching, a generalized form of the diffusion process. We introduce energy-weighted flow matching (EFM), a method that directly learns the energy-guided flow without the need for auxiliary models. Theoretical analysis shows that energy-weighted flow matching accurately captures the guided flow. Additionally, we extend this methodology to energy-weighted diffusion models and apply it to offline reinforcement learning (RL) by proposing the Q-weighted Iterative Policy Optimization (QIPO). Empirically, we demonstrate that the proposed QIPO algorithm improves performance in offline RL tasks. Notably, our algorithm is the first energy-guided diffusion model that operates independently of auxiliary models and the first exact energy-guided flow matching model in the literature.
\end{abstract}
\section{Introduction}
Recent years have witnessed the success of applying diffusion models~\citep{ho2020denoising, song2020score} and flow matching models~\citep{chen2018neural, lipman2022flow} to generative models. Given this success, another important aspect is to \emph{guide} generative models to achieve specific, controlled outputs, such as generating images for a certain class~\citep{ho2021classifier, dhariwal2021diffusion}, designing molecular structures with desired properties~\citep{wang2024protein, hoogeboom2022equivariant}, or improving policies for reinforcement learning~\citep{wangdiffusion, lu2023contrastive}. Guidance can come from various sources, such as classifiers, including both classifier guidance~\citep{dhariwal2021diffusion} and classifier-free guidance~\citep{ho2021classifier}. In addition, \citet{lu2023contrastive} proposed using guidance from an \emph{energy function}, where the distribution is generated from $q(\xb) \propto p(\xb) \exp(-\beta \cE(\xb))$, where the model is guided to generate data $\xb$ with lower energy $\cE(\xb)$ from the original data distribution. 

Several recent efforts have been made to learn and sample from the guided distribution $q(\xb)$ using diffusion models. \citet{chenoffline} performed rejection sampling from the learned data distribution $p(\xb)$. \citet{lu2023contrastive} introduced an \emph{intermediate energy function} $\cE_t(\cdot)$, allowing the score function of $q_t(\xb)$ to be decomposed as $\nabla_{\xb} \log q_t(\xb) = \nabla_{\xb} \log p_t(\xb) - \nabla_{\xb} \cE_t(\xb)$ within the diffusion process. \citet{lu2023contrastive} also proposed contrastive energy prediction for training the intermediate energy $\cE_t$, relying on back-propagation to calculate its gradient with respect to $\xb$. \citet{wang2024protein} proposed to directly approximate the gradient of this intermediate energy function $\cE_t$ as the ‘force-field’ guidance. However, all these methods require either additional neural networks, back-propagation, or post-processing to compose the guided distribution $q(\xb)$, which introduces unnecessary errors and complexity. Therefore, the following question arises:
\begin{center}
    \emph{\textbf{Q1.} Can we directly obtain an energy-guided diffusion model without auxiliary models?}
\end{center}

Another challenge for energy-guided generative models lies in providing guidance in flow matching models~\citep{chen2018neural, lipman2022flow}, which is a more general, simulation-free counterpart to diffusion models. \citet{zheng2023guided} explored the use of classifier-free guidance for flow matching in offline RL. However, since flow matching models approximate the velocity field $\ub_t(\xb)$ for the dynamics of the probability density path $p_t(\xb)$, it is highly non-trivial to obtain the guided velocity field $\hat{\ub}_t(\xb)$ for the distribution $q_t(\xb)$ under energy guidance. This presents the second key question:
\begin{center}
\emph{\textbf{Q2.} Can we inject exact energy guidance into general flow matching models?}
\end{center}

In this paper, we answer the aforementioned two questions affirmatively by proposing an energy-guided velocity field and an \emph{energy-weighted} flow matching objective, with extensions to \emph{energy-weighted} diffusion models and applications in offline reinforcement learning. Our contributions are summarized as follows:
\begin{itemize}[leftmargin=*,nosep]
\item In response to \textbf{Q2.}, for general flow matching, we propose the energy-guided velocity field $\hat{\ub}_t(\xb)$, based on the conditional velocity field $\ub_{t0}(\xb | \xb_0)$. The proposed $\hat{\ub}_t(\xb)$ is theoretically guaranteed to generate the energy-guided distribution $q(\xb) \propto p(\xb)\exp(-\beta \cE(\xb))$.
\item We introduce the \emph{energy-weighted} flow matching loss to train a neural network $\vb^{\btheta}_t$ that approximates $\hat{\ub}_t(\xb)$. The energy-weighted flow matching only requires the conditional vector field $\ub_t(\xb | \xb_0)$ and the energy $\cE(\xb_0)$ for $\xb_0$ from the dataset. As the answer to \textbf{Q1.}, we extend this approach to diffusion models, proposing the energy-weighted diffusion model. Energy-weighted diffusion model learns an energy-guided diffusion model directly without any auxiliary model.
\item We apply these methods to offline reinforcement learning tasks to evaluate the performance of the energy-weighted flow matching and diffusion models. Under this framework, we introduce an \emph{iterative policy refinement} technique for offline reinforcement learning. Empirically, we demonstrate that the proposed method achieves superior performance across various offline RL tasks.
\end{itemize}

\paragraph{Notations.} Vectors are denoted by lowercase boldface letters, such as $\xb$, and matrices by uppercase boldface letters, such as $\Ab$. For any positive integer $k$, the set ${1, 2, \dots, k}$ is denoted by $[k]$, and we define $\overline{[k]} = [k] \cup \{0\}$. The natural logarithm of $x$ is denoted by $\log x$. We use $p_t$ to represent the marginal distribution of $\xb$ at time $t$, and $p_{0t}$ to represent the conditional distribution of $\xb_0$ given $\xb_t$. Similarly, $p_0$ denotes the original data distribution for the diffusion model at $t = 0$, while $p_{t0}$ represents the conditional distribution of $\xb_t$ given $\xb_0$ in the forward process of the diffusion model.

\section{Related work}
\paragraph{Diffusion Models and Flow Matching Models.}
Diffusion models~\citep{ho2020denoising} and score matching~\citep{song2020score} have emerged as powerful generative modeling techniques in tasks such as image synthesis~\citep{dhariwal2021diffusion}, text-to-image generation~\citep{podellsdxl}, and video generation~\citep{ho2022video}. In addition to these frontier applications, the success of diffusion models has been further enhanced by accelerated sampling processes~\citep{lu2022dpma, lu2022dpmb} and the extension of diffusion models to discrete value spaces~\citep{austin2021structured}. Alongside the success of diffusion models, flow matching models~\citep{lipman2022flow, chen2018neural} provide an alternative for simulation-free generation. Unlike score-based approaches, flow matching models aim to learn the velocity field that transports data points from the initial noise distribution to the target data distribution. This velocity field can be viewed as a generalized form of the reverse process in diffusion models and can be extended to optimal transport~\citep{villani2009optimal}, rectified flow~\citep{liu2022rectified}, and more complex flows. 

\paragraph{Guidance in Diffusion and Flow Matching Models.}
Beyond learning and generating the original data distribution with diffusion or flow matching models, significant efforts have been made to control the generation process to produce data with specific desired properties. \citet{dhariwal2021diffusion} introduced classifier guidance, which decomposes the conditional score function $\nabla \log p(\xb | y)$ into the sum of the data distribution gradient $\nabla \log p(\xb)$ and the gradient from a classifier $\nabla \log p(y | \xb)$. To simplify this, \citet{ho2021classifier} proposed classifier-free guidance, which directly integrates $\nabla \log p(y | \xb)$ into the score function. \REV{\citet{sendera2024improved} studied diffusion-structured samplers by introducing the inductive bias in Langevin process.} \citet{lu2023contrastive, chenoffline} further explored energy-based guidance, where the target distribution is defined as $q(\xb) \propto p(\xb) \exp(-\beta \cE(\xb))$. Unlike classifier guidance, energy-based guidance extends to real-valued energy functions $\cE$, making it particularly relevant for tasks such as molecular structure generation. Specifically, \citet{chenoffline, cremer2024pilot} employed rejection sampling to implement energy guidance, while \citet{lu2023contrastive, wang2024protein} used auxiliary models to estimate the guidance from the energy function. We defer a more formal, technical comparison between the energy-based guidance and classifier-based guidance in Table~\ref{tab:compare} in Section~\ref{sec:cfg}. In the context of flow matching, \citet{zheng2023guided} introduced classifier-free guidance for flow matching in the domain of offline reinforcement learning.
\definecolor{pinegreen}{HTML}{01796f}
\definecolor{maplered}{HTML}{FF0000}
\begin{table}[t]
\centering
\caption{Comparison between guidance methods. \emph{Exact Guidance?} means if the model can generate $p(\xb)p^\beta(c| \xb)$ when $\beta \neq 1$. \emph{w/o Auxiliary Model?} means if the method can direct learn the guidance without auxiliary model ($\color{pinegreen} \checkmark$) or not ($\color{maplered} \times$). } \vspace{-.3em}
\begin{tabular}{l|c|c}
\toprule
Guidance & Exact Guidance? & w/o Auxiliary Model?\\
\midrule
Classifier-guidance~\citep{dhariwal2021diffusion} & $\color{maplered} \times$ & $\color{maplered} \times$ \\
Classifier-free guidance~\citep{ho2021classifier} & $\color{maplered} \times$ & $\color{pinegreen} \checkmark$ \\
Contrastive energy prediction~\citep{lu2023contrastive} & $ \color{pinegreen} \checkmark$ & $\color{maplered} \times$ \\
\textbf{Energy-weighted diffusion (ours)} & $\color{pinegreen} \checkmark$ & $\color{pinegreen} \checkmark$ \\
\bottomrule
\end{tabular}\vspace{-1em}
\label{tab:compare}
\end{table}

\paragraph{Diffusion and Flow Matching Models in Reinforcement Learning. } 
Recent advances in diffusion models and flow matching models have enabled a range of applications in reinforcement learning (RL). \citet{janner2022planning, wangdiffusion} explore modeling behavior policies using diffusion models. Building on these results, \citet{chenoffline, lu2023contrastive} model the offline RL objective as an energy-guided diffusion process, while \citet{ajayconditional, zheng2023guided} apply the same policy optimization using classifier-free diffusion and flow matching models. \citet{chenscore, hansen2023idql} use diffusion models to regularize the distance between the optimal policy and the behavioral policy, and \citet{fang2024learning, he2023diffcps} leverage diffusion models for constrained policy optimization. Another line of research~\citep{jackson2024policy, lee2024gta, lu2024synthetic} focuses on using generative models to augment synthetic datasets.

\section{Preliminaries}
\subsection{Conditional Flow Matching for Generative Modeling}
Continuous Normalizing Flows (CNFs)~\citep{chen2018neural} considers the dynamic of the probability density function by \emph{probability density path} $p: [0, 1] \times \RR^d \mapsto \RR_{\ge 0}$ which transmits between the data distribution $p_0$ and the initial distribution (e.g., Gaussian distribution) $p_1$.  The \emph{flow} $\bphi: [0, 1] \times \RR^d \mapsto \RR^d$ is constructed by a \emph{vector field} $\vb: [0, 1] \times \RR^d \mapsto \RR^d$ describing the velocity of the particle at position $\xb$, i.e., $\frac{\mathrm d}{\mathrm dt} \bphi_t(\xb) = \vb_t(\bphi_t(\xb))$ where $\bphi_1(\xb) = \xb$. \footnote{We adapt the notation of diffusion to unify the diffusion and flow matching. The notation here is different from flow matching notations in \citet{chen2018neural, lipman2022flow}, where $p_1$ represent the data distribution. The flow matching result remains unchanged except switching $t = 1$ with $t = 0$.}

In order to ensure that the vector field $\vb$ \emph{generates} the probability density path $p_t$, the following \emph{continuity equation}~\citep{villani2009optimal} is required: 
\begin{align}
    \frac{\mathrm d}{\mathrm dt} p_t(\xb) + \div{p_t(\xb)\vb_t(\xb)}= 0,\quad \forall \xb \in \RR^d . \label{eq:continuity}
\end{align}

The objective of flow matching is to learn a neural network $\vb^{\btheta}_t$ to learn the ground truth vector field $\ub_t$ by minimizing their differences, i.e., $\cL_{\mathrm{FM}}(\btheta) = \EE_{t, p_t(\xb)} \|\vb^{\btheta}_t(\xb) - \ub_t(\xb)\|_2^2$ with respect to the network parameter $\btheta$. However, it is infeasible to calculate the ground truth vector field $\ub_t$. To address this issue, \citet{lipman2022flow} suggests to match the \emph{conditional vector field} $\ub_{t0}(\xb | \xb_0)$ instead of the vector field $\ub_t(\xb)$, as presented by the following theorem:
\begin{theorem}[Theorem~1,~2; \citealt{lipman2022flow}] \label{thm:flow-matching}
    Given the conditional vector field $\ub_{t0}(\xb | \xb_0)$ that generates the conditional distribution $p_{t0}(\xb | \xb_0)$, then the ``marginal'' vector field $\ub_t(\xb) = \int_{\xb_0} p_{0t}(\xb_0 | \xb) \ub_{t0}(\xb | \xb_0) \mathrm d \xb_0$ generates the marginal distribution $p_t(\xb)$. In addition, up to a constant factor independent of $\btheta$, \emph{Flow Matching} loss $\cL_{\text{FM}}(\btheta)$ and \emph{Conditional Flow Matching} loss $\cL_{\text{CFM}}(\btheta)$ are equal, where
    \begin{align}
        \cL_{\mathrm{FM}}(\btheta) = \EE_{t, \xb} \|\vb^{\btheta}_t(\xb) - \ub_t(\xb)\|_2^2, \ \cL_{\mathrm{CFM}}(\btheta) = \EE_{t, \xb_0, \xb} \|\vb^{\btheta}_t(\xb) - \ub_{t0}(\xb | \xb_0)\|_2^2,
    \end{align}
    where $t \sim \lambda(t)$, $\xb_0$ follows the data distribution $p_0(\cdot)$ and $\xb \sim p_{t0}(\cdot | \xb_0)$ where $p_{t0}$ is generated by conditional vector field $\ub_{t0}$. Hence $\nabla_{\btheta} \cL_{\text{FM}}(\btheta) = \nabla_{\btheta} \cL_{\text{CFM}}(\btheta)$.
\end{theorem}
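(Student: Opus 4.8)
The plan is to establish the two claims in sequence, since the loss-equivalence statement relies on an identity that emerges while proving the marginalization statement. For the marginalization claim, the strategy is to verify directly that the pair $(p_t, \ub_t)$ satisfies the continuity equation~\eqref{eq:continuity}, using the hypothesis that for every fixed $\xb_0$ the conditional pair $(p_{t0}(\cdot\,|\,\xb_0), \ub_{t0}(\cdot\,|\,\xb_0))$ satisfies the corresponding conditional continuity equation. For the loss-equivalence claim, the approach is to expand both squared norms, discard the terms independent of $\btheta$, and show the surviving $\btheta$-dependent terms coincide after marginalizing over $\xb_0$.

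First I would write $p_t(\xb) = \int_{\xb_0} p_{t0}(\xb\,|\,\xb_0)\, p_0(\xb_0)\,\mathrm d\xb_0$ and differentiate under the integral sign, substituting the conditional continuity equation $\frac{\mathrm d}{\mathrm dt} p_{t0}(\xb\,|\,\xb_0) = -\div{p_{t0}(\xb\,|\,\xb_0)\,\ub_{t0}(\xb\,|\,\xb_0)}$. Pulling the $\xb$-divergence outside the $\xb_0$-integral gives $\frac{\mathrm d}{\mathrm dt} p_t(\xb) = -\div{\int_{\xb_0} p_{t0}(\xb\,|\,\xb_0)\, p_0(\xb_0)\, \ub_{t0}(\xb\,|\,\xb_0)\,\mathrm d\xb_0}$. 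The crux is then the Bayes identity $p_{0t}(\xb_0\,|\,\xb)\, p_t(\xb) = p_{t0}(\xb\,|\,\xb_0)\, p_0(\xb_0)$, which converts the definition of $\ub_t$ into the key flux identity
\begin{align}
p_t(\xb)\,\ub_t(\xb) = \int_{\xb_0} p_{t0}(\xb\,|\,\xb_0)\, p_0(\xb_0)\, \ub_{t0}(\xb\,|\,\xb_0)\,\mathrm d\xb_0. \nonumber
\end{align}
Substituting this back yields $\frac{\mathrm d}{\mathrm dt} p_t(\xb) + \div{p_t(\xb)\,\ub_t(\xb)} = 0$, which is exactly~\eqref{eq:continuity}, so $\ub_t$ generates $p_t$.

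For the second claim, I would expand $\|\vb^{\btheta}_t(\xb) - \ub_t(\xb)\|_2^2 = \|\vb^{\btheta}_t(\xb)\|_2^2 - 2\langle \vb^{\btheta}_t(\xb), \ub_t(\xb)\rangle + \|\ub_t(\xb)\|_2^2$ and similarly for the conditional loss. The final squared terms $\|\ub_t(\xb)\|_2^2$ and $\|\ub_{t0}(\xb\,|\,\xb_0)\|_2^2$ carry no $\btheta$-dependence, so they vanish under $\nabla_{\btheta}$. Because the joint law of $(\xb_0, \xb)$ with $\xb_0 \sim p_0$ and $\xb \sim p_{t0}(\cdot\,|\,\xb_0)$ has $\xb$-marginal exactly $p_t$, the quadratic term $\EE\|\vb^{\btheta}_t(\xb)\|_2^2$ agrees across the two losses. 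It remains to match the cross terms, and here I would invoke the flux identity again: writing $\EE_{t,\xb}\langle \vb^{\btheta}_t(\xb), \ub_t(\xb)\rangle = \EE_t \int \langle \vb^{\btheta}_t(\xb),\, p_t(\xb)\,\ub_t(\xb)\rangle\,\mathrm d\xb$ and replacing $p_t\,\ub_t$ by its integral representation recovers precisely $\EE_{t,\xb_0,\xb}\langle \vb^{\btheta}_t(\xb), \ub_{t0}(\xb\,|\,\xb_0)\rangle$. Hence the two losses differ by a $\btheta$-independent constant and share the same gradient.

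The main obstacle is not any single algebraic step but the analytic justification for interchanging differentiation, integration, and the divergence operator: pulling $\frac{\mathrm d}{\mathrm dt}$ and $\mathrm{div}$ through the $\xb_0$-integral and ensuring the flux integral is well defined. These require mild regularity and decay assumptions on $p_{t0}$, $\ub_{t0}$, and $p_0$ (e.g., integrability guaranteeing Leibniz's rule and vanishing boundary flux), which I would state as standing hypotheses. Once these are granted, the remainder reduces to the bookkeeping of Bayes' rule and linearity of expectation described above.
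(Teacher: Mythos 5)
Your proof is correct and takes essentially the same route as the paper: the paper itself imports this statement from \citet{lipman2022flow} without reproving it, but its own proofs of the weighted generalizations (Theorem~\ref{thm:main}, which verifies the continuity equation~\eqref{eq:continuity} for the marginal field via the Bayes identity $p_t(\xb)\,p_{0t}(\xb_0\,|\,\xb) = p_0(\xb_0)\,p_{t0}(\xb\,|\,\xb_0)$, and Theorem~\ref{thm:cond}, which expands the squared norms, drops $\btheta$-independent terms, and matches the cross terms) follow exactly the two steps you describe, with your flux identity being the unweighted ($\cE \equiv 0$) special case of~\eqref{eq:thm-step-2}. Your closing remarks on regularity are appropriately flagged as standing hypotheses; note only that no boundary-flux condition is actually needed here, since the divergence is pulled through the $\xb_0$-integral by linearity rather than by integration by parts.
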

In practice, the conditional distribution $p_{t0}(\xb | \xb_0)$ is usually modeled as a \emph{Gaussian path} with $p_{t0}(\xb | \xb_0) = \cN(\mu_t \xb_0, \sigma^2_t\Ib)$. \citet{zheng2023guided} suggests that in this case, conditional flow matching is equivalent to the score matching~\citep{song2020score}:
\begin{lemma}[Lemma~1, \citealt{zheng2023guided}]\label{lm:diff-flow-connection}
    Let $p_{t0}(\xb | \xb_0)$ be a Gaussian path with scheduler $(\mu_t, \sigma_t)$, i.e., $p_{t0}(\xb | \xb_0) = \cN(\mu_t \xb_0, \sigma^2_t\Ib)$, then the velocity field $\ub_{t0}(\xb | \xb_0)$ is related to the score function $\nabla_{\xb} \log p_{t0}(\xb | \xb_0)$ by
    \begin{align}
        \ub_{t0}(\xb | \xb_1) = \dot{\mu_t}\mu_t^{-1} \xb + (\dot{\mu_t}\sigma_t - \mu_t \dot{\sigma_t})\sigma_t\mu_t^{-1}\nabla_{\xb} \log p_{t0}(\xb | \xb_0),\label{eq:connection}
    \end{align}
    where $\dot{\mu_t}$ and $\dot{\sigma_t}$ are both the derivative of $\mu_t$ and $\sigma_t$ with respect to time $t$. 
\end{lemma}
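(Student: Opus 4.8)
The plan is to exploit the affine structure of Gaussian probability paths. Since $p_{t0}(\xb\mid\xb_0)=\cN(\mu_t\xb_0,\sigma_t^2\Ib)$, any sample at time $t$ can be written as $\xb=\mu_t\xb_0+\sigma_t\boldsymbol{\epsilon}$ with $\boldsymbol{\epsilon}\sim\cN(0,\Ib)$. This identifies the canonical (affine) conditional flow $\psi_t(\boldsymbol{\epsilon})=\mu_t\xb_0+\sigma_t\boldsymbol{\epsilon}$, whose pushforward of the reference Gaussian is exactly $p_{t0}(\cdot\mid\xb_0)$. I would use this flow to read off the conditional vector field and then eliminate the noise term in favor of the Gaussian score to obtain \eqref{eq:connection}.

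First, I would differentiate the flow along a fixed particle. Because $\psi_t$ transports $\boldsymbol{\epsilon}$ deterministically, the conditional velocity at the point $\xb=\psi_t(\boldsymbol{\epsilon})$ is $\ub_{t0}(\xb\mid\xb_0)=\frac{\mathrm d}{\mathrm dt}\psi_t(\boldsymbol{\epsilon})=\dot{\mu_t}\,\xb_0+\dot{\sigma_t}\,\boldsymbol{\epsilon}$. Inverting the flow gives $\boldsymbol{\epsilon}=\sigma_t^{-1}(\xb-\mu_t\xb_0)$, so that $\ub_{t0}(\xb\mid\xb_0)=\dot{\mu_t}\,\xb_0+\dot{\sigma_t}\sigma_t^{-1}(\xb-\mu_t\xb_0)$, a closed form in $\xb$ and $\xb_0$. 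Second, I would compute the Gaussian score directly, $\nabla_{\xb}\log p_{t0}(\xb\mid\xb_0)=-\sigma_t^{-2}(\xb-\mu_t\xb_0)$, which supplies the two substitutions I need: $\xb-\mu_t\xb_0=-\sigma_t^2\,\nabla_{\xb}\log p_{t0}(\xb\mid\xb_0)$ and hence $\xb_0=\mu_t^{-1}\bigl(\xb+\sigma_t^2\,\nabla_{\xb}\log p_{t0}(\xb\mid\xb_0)\bigr)$. Plugging both into the closed form of the previous step and collecting terms, the coefficient of $\xb$ becomes $\dot{\mu_t}\mu_t^{-1}$ and the coefficient of the score simplifies to $\dot{\mu_t}\mu_t^{-1}\sigma_t^2-\dot{\sigma_t}\sigma_t=(\dot{\mu_t}\sigma_t-\mu_t\dot{\sigma_t})\sigma_t\mu_t^{-1}$, which reproduces \eqref{eq:connection} exactly.

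The main obstacle is conceptual rather than computational: I must justify that the affine flow $\psi_t$ really is the conditional vector field $\ub_{t0}$ referenced by the lemma, i.e., that it satisfies the continuity equation \eqref{eq:continuity} for $p_{t0}(\cdot\mid\xb_0)$. This is precisely the canonical Gaussian construction underlying Theorem~\ref{thm:flow-matching}, so I would invoke that result: the affine map is the standard flow whose pushforward of $\cN(0,\Ib)$ has the prescribed time-dependent mean $\mu_t\xb_0$ and covariance $\sigma_t^2\Ib$. Once this identification is fixed, the remainder is the elementary algebra of matching coefficients; the only care required is to keep the scalar factors $\mu_t^{-1}$ and $\sigma_t^{-1}$ in their correct places so that the score coefficient collapses cleanly to the stated $(\dot{\mu_t}\sigma_t-\mu_t\dot{\sigma_t})\sigma_t\mu_t^{-1}$.
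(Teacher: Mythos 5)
Your proposal is correct: the paper imports this lemma from \citet{zheng2023guided} without reproving it, and your derivation---identifying the canonical affine conditional flow $\psi_t(\bepsilon)=\mu_t\xb_0+\sigma_t\bepsilon$, differentiating along a particle, and eliminating $\bepsilon$ and $\xb_0$ via the Gaussian score $\nabla_{\xb}\log p_{t0}(\xb\mid\xb_0)=-\sigma_t^{-2}(\xb-\mu_t\xb_0)$---is exactly the standard argument in that reference (and in Theorem~3 of \citealt{lipman2022flow}), with the coefficient algebra checking out. The only nitpick is attributional: the fact that the affine flow generates the Gaussian path follows from the elementary pushforward computation (and is the content of Lipman et al.'s canonical-flow theorem), not of Theorem~\ref{thm:flow-matching} in this paper, which concerns the marginal--conditional relationship instead.
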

In addition, \citet{zheng2023guided} proved that the reverse process of this diffusion process with Gaussian path $\mu_t, \sigma_t$ can be written by 
\begin{align}
    \frac{\mathrm d \xb}{\mathrm dt} = \frac{\dot{\mu_t}}{\mu_t}\xb + (\dot{\mu_t}\sigma_t - \mu_t \dot{\sigma_t})\frac{\sigma_t}{\mu_t}\nabla_{\xb} \log p_t(\xb) = \ub_t(\xb). \label{eq:diffusion}
\end{align}

\subsection{Energy-Guided Diffusion Models}
The standard diffusion model aims to learn and generate from the data distribution $p_0$. However, instead of generating from $p_0$, there are a series of applications consider sampling from an energy-guided distribution $q_0(\xb) \propto p_0(\xb) \exp(-\beta \cE(\xb))$ where $\cE: \RR^d \mapsto \RR$ is the energy function and $\beta \in \RR^+$ is the strength of the guidance. \citet{lu2023contrastive} suggested to construct the score function $\nabla_{\xb} \log q_t(\xb)$ from the original score function $\nabla_{\xb} \log p_t(\xb)$ by introducing the \emph{intermediate energy function} $\cE_t(\xb)$ through the following theorem:
\begin{theorem}[Theorem~3.1,~\citet{lu2023contrastive}\footnote{We swap the notation $p$ and $q$ to align with our notation systems.}] \label{thm:lu}
Let $q_0(\xb) \propto p_0(\xb) \exp(-\beta \cE(\xb))$ and define the forward process as $q_{t0}(\xb | \xb_0) = p_{t0}(\xb | \xb_0) = \cN(\mu_t \xb_0, \sigma_t^2\Ib)$, and the marginal distribution $q_t(\xb), p_t(\xb)$ at time $t$ defined by 
\begin{align*}
    \hfill q_t(\xb) = \int_{\xb_0} q_{t0}(\xb | \xb_0)q_0(\xb_0)\mathrm d \xb_0, \quad\hfill\quad p_t(\xb) = \int_{\xb_0} p_{t0}(\xb | \xb_0)p_0(\xb_0)\mathrm d \xb_0. \hfill
\end{align*}
Let the intermediate energy function be 
\begin{align}
    \cE_t(\xb) = -\log \EE_{p_{0t}(\xb_0 | \xb)}[\exp(-\beta \cE(\xb_0)], \label{eq:inter-energy}
\end{align}
then the marginal distribution $p_t$ and $q_t$ satisfy
\begin{align}
    q_t(\xb) \propto p_t(\xb) \exp(-\cE_t(\xb)), \quad \hfill \quad \nabla_{\xb} \log q_t(\xb) = \nabla_{\xb} \log p_t(\xb) - \nabla_{\xb} \cE_t(\xb).\label{eq:recon-score}
\end{align}
\end{theorem}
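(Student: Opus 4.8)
The plan is to compute the guided marginal $q_t(\xb)$ directly from its integral definition and show it factors as $p_t(\xb)$ times $\exp(-\cE_t(\xb))$ up to an $\xb$-independent constant; the score identity then follows immediately by taking logarithms. First I would write the guided data distribution with its normalizer made explicit, $q_0(\xb_0) = Z^{-1} p_0(\xb_0)\exp(-\beta\cE(\xb_0))$ where $Z = \int p_0(\xb_0)\exp(-\beta\cE(\xb_0))\,\mathrm d\xb_0$ is a constant independent of $\xb$. Substituting this into $q_t(\xb) = \int_{\xb_0} q_{t0}(\xb|\xb_0) q_0(\xb_0)\,\mathrm d\xb_0$ and invoking the hypothesis that the two forward processes coincide, $q_{t0}(\xb|\xb_0) = p_{t0}(\xb|\xb_0)$, gives $q_t(\xb) = Z^{-1}\int_{\xb_0} p_{t0}(\xb|\xb_0)p_0(\xb_0)\exp(-\beta\cE(\xb_0))\,\mathrm d\xb_0$.

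The crucial step is to rewrite the forward joint density in terms of the marginal and the posterior via Bayes' rule, $p_{t0}(\xb|\xb_0)p_0(\xb_0) = p_t(\xb)\,p_{0t}(\xb_0|\xb)$. Factoring the $\xb$-only term $p_t(\xb)$ out of the integral then yields $q_t(\xb) = Z^{-1} p_t(\xb)\int_{\xb_0} p_{0t}(\xb_0|\xb)\exp(-\beta\cE(\xb_0))\,\mathrm d\xb_0 = Z^{-1} p_t(\xb)\,\EE_{p_{0t}(\xb_0|\xb)}[\exp(-\beta\cE(\xb_0))]$. By the definition of the intermediate energy in \eqref{eq:inter-energy}, the remaining expectation is exactly $\exp(-\cE_t(\xb))$, so $q_t(\xb) = Z^{-1} p_t(\xb)\exp(-\cE_t(\xb))$, which establishes the first claim $q_t(\xb)\propto p_t(\xb)\exp(-\cE_t(\xb))$.

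For the gradient identity I would take the logarithm of this factorization, $\log q_t(\xb) = \log p_t(\xb) - \cE_t(\xb) - \log Z$, and differentiate in $\xb$; since $\log Z$ is constant it contributes nothing, leaving $\nabla_{\xb}\log q_t(\xb) = \nabla_{\xb}\log p_t(\xb) - \nabla_{\xb}\cE_t(\xb)$. I do not anticipate a genuinely hard obstacle, since the argument is a marginalization combined with a change of factorization. The single point demanding care is the Bayes swap $p_{t0}(\xb|\xb_0)p_0(\xb_0) = p_t(\xb)p_{0t}(\xb_0|\xb)$, as this is precisely what lets the posterior $p_{0t}$ from the definition of $\cE_t$ enter the computation, together with confirming that $Z$ is truly $\xb$-independent so that it drops out under the gradient.
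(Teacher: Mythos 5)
Your proof is correct and follows essentially the same route the paper takes: the Bayes swap $p_{t0}(\xb|\xb_0)p_0(\xb_0) = p_t(\xb)\,p_{0t}(\xb_0|\xb)$ followed by recognizing $\EE_{p_{0t}(\xb_0|\xb)}[\exp(-\beta\cE(\xb_0))] = \exp(-\cE_t(\xb))$ is exactly the manipulation underlying the paper's Lemma~\ref{lm:closed-form} (which makes the same $\xb$- and $t$-independent normalizer $\EE_{p_0}[\exp(-\beta\cE(\xb_0))]$ explicit) and the corresponding steps in the proof of Theorem~\ref{thm:main}. The score identity by taking logarithms and dropping the constant is likewise the standard conclusion, so there is no gap.
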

Therefore, \citet{lu2023contrastive} suggests to firstly learn the intermediate energy function $\cE_t$ using \emph{contrastive energy prediction} (CEP) and to learn the score function $\nabla_{\xb} \log p_t(\xb)$ using standard diffusion models (e.g., DDPM~\citep{ho2020denoising}). Then the score function of the energy-guided distribution $\nabla_{\xb} \log q_t(\xb)$ can therefore be composed according to~\eqref{eq:recon-score}.

\section{Methodology}\label{sec:bandit}
In this section, we propose a \emph{energy-weighted} method for training both CNFs and diffusion models to generate the energy-guided distribution $q(\xb) \propto p(\xb)\exp(-\beta \cE(\xb))$. Compared with \citet{lu2023contrastive, wang2024protein}, the energy-weighted method provide a more straightforward way to obtain the energy-guided generative models and removes the necessaries of estimating the intermediate energy function $\cE_t(\xb)$ and its gradient $\nabla_{\xb} \cE_t(\xb)$.
\subsection{Energy-Weighted Flow Matching}
In this subsection, we construct a new energy guided flow to generate the energy-guided probability distribution. We also proposed two equivalent loss function to train the neural networks for approximating the energy-guided flow. We start by the first theorem suggesting a energy-guided flow to generate the energy-guided probability distribution $q_t(\xb) \propto p_t(\xb)\exp(-\cE_t(\xb))$.
\begin{theorem}\label{thm:main}
    Given an energy function $\cE(\cdot)$ and a conditional flow $\ub_{t0}(\xb| \xb_0)$ that generates the probability distribution $p_{t0}(\xb| \xb_0)$, the energy guided distribution $q_t(\xb) \propto p_t(\xb)\exp(-\cE_t(\xb))$ is generated by the flow
    \begin{align}
        \hat \ub_t(\xb) = \int_{\xb_0} p_{0t}(\xb_0 | \xb) \ub_t(\xb | \xb_0) \frac{\exp(-\beta \cE(\xb_0))}{\exp(-\cE_t(\xb))}\mathrm d\xb_0,\label{eq:energy-guided-flow}
    \end{align}
    which will generate distribution $q_0(\xb) \propto p_0(\xb)\exp(-\beta\cE(\xb))$. The intermediate energy function is defined in~\eqref{eq:inter-energy}.
\end{theorem}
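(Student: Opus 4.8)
The plan is to recognize that the proposed field $\hat\ub_t$ is nothing but the \emph{marginal} velocity field associated with the energy-guided data distribution $q_0$, and then to invoke Theorem~\ref{thm:flow-matching} directly rather than re-deriving everything from scratch. The crucial structural fact is that the forward processes for $p$ and $q$ share the same conditional law, $q_{t0}(\xb|\xb_0) = p_{t0}(\xb|\xb_0)$, so the same conditional field $\ub_{t0}(\xb|\xb_0)$ generates both conditional paths. Applying Theorem~\ref{thm:flow-matching} with data distribution $q_0$ then tells us that the field $\ub_t^q(\xb) = \int_{\xb_0} q_{0t}(\xb_0|\xb)\,\ub_{t0}(\xb|\xb_0)\,\mathrm d\xb_0$ generates the marginal $q_t$, where $q_{0t}(\xb_0|\xb)$ denotes the posterior under $q$. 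It therefore suffices to show that the reweighting $p_{0t}(\xb_0|\xb)\exp(-\beta\cE(\xb_0))/\exp(-\cE_t(\xb))$ appearing in \eqref{eq:energy-guided-flow} is exactly this posterior $q_{0t}(\xb_0|\xb)$.

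Establishing that identity is the central step I would carry out. Writing $q_0(\xb_0) = Z^{-1}p_0(\xb_0)\exp(-\beta\cE(\xb_0))$ with normalizer $Z$, Bayes' rule gives $q_{0t}(\xb_0|\xb) = p_{t0}(\xb|\xb_0)q_0(\xb_0)/q_t(\xb)$. Substituting $p_{t0}(\xb|\xb_0)p_0(\xb_0) = p_t(\xb)p_{0t}(\xb_0|\xb)$ in the numerator and the relation $q_t(\xb) = Z^{-1}p_t(\xb)\exp(-\cE_t(\xb))$ from Theorem~\ref{thm:lu} in the denominator, the factors $Z^{-1}$ and $p_t(\xb)$ cancel, leaving precisely $q_{0t}(\xb_0|\xb) = p_{0t}(\xb_0|\xb)\exp(-\beta\cE(\xb_0))/\exp(-\cE_t(\xb))$. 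Plugging this back into $\ub_t^q$ reproduces $\hat\ub_t$ verbatim, so $\hat\ub_t = \ub_t^q$ generates $q_t$.

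As a self-contained alternative I would verify the continuity equation \eqref{eq:continuity} for the pair $(q_t, \hat\ub_t)$ directly. After the same cancellation the product simplifies to $q_t(\xb)\hat\ub_t(\xb) = Z^{-1}\int_{\xb_0} p_{t0}(\xb|\xb_0)\ub_{t0}(\xb|\xb_0)p_0(\xb_0)\exp(-\beta\cE(\xb_0))\,\mathrm d\xb_0$; taking the divergence in $\xb$, passing it inside the integral, and invoking the conditional continuity equation $\nabla_{\xb}\cdot[p_{t0}(\xb|\xb_0)\ub_{t0}(\xb|\xb_0)] = -\partial_t p_{t0}(\xb|\xb_0)$ turns the integrand into $-\partial_t p_{t0}$, which after pulling $\partial_t$ back out of the integral equals $-\partial_t q_t(\xb)$. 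Either route also needs the boundary value: at $t=0$ the posterior $p_{00}(\xb_0|\xb)$ collapses to $\delta(\xb_0-\xb)$, so by \eqref{eq:inter-energy} we get $\cE_0(\xb) = \beta\cE(\xb)$ and hence $q_0(\xb) \propto p_0(\xb)\exp(-\beta\cE(\xb))$ as claimed.

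I expect the main obstacle to be the weight-identification step: it hinges on combining Bayes' rule with the precise definition of $\cE_t$ in \eqref{eq:inter-energy} so that the intractable normalizer $\exp(-\cE_t(\xb))$ in the denominator of \eqref{eq:energy-guided-flow} cancels against the same factor hidden inside $q_t$ — this cancellation is what makes $\hat\ub_t$ a genuine posterior-weighted marginal field rather than an ad hoc construction. A secondary technical point, common to both routes, is justifying the interchange of differentiation (or divergence) with the integral over $\xb_0$, which requires mild regularity and integrability assumptions on $p_{t0}$, $\ub_{t0}$, and $\cE$.
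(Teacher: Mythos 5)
Your proposal is correct, and your primary route is genuinely different from the paper's. The paper proves the theorem by what you describe as your ``self-contained alternative'': it first establishes (Lemma~\ref{lm:closed-form}) that the normalizer of $q_t(\xb) \propto p_t(\xb)\exp(-\cE_t(\xb))$ is the time-independent constant $\EE_{p_0}[\exp(-\beta\cE(\xb_0))]$, then differentiates $q_t$ in time after rewriting it via Bayes' rule as $Z^{-1}\int_{\xb_0} p_{t0}(\xb|\xb_0)p_0(\xb_0)\exp(-\beta\cE(\xb_0))\,\mathrm d\xb_0$, substitutes the conditional continuity equation $\partial_t p_{t0} = -\div{p_{t0}\ub_{t0}}$, and uses linearity of the divergence to reassemble the integrand into $-\div{q_t(\xb)\hat\ub_t(\xb)}$ --- exactly your second argument, run in the opposite direction. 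Your main route instead identifies the reweighting factor $p_{0t}(\xb_0|\xb)\exp(-\beta\cE(\xb_0))/\exp(-\cE_t(\xb))$ as the Bayes posterior $q_{0t}(\xb_0|\xb)$ under the tilted data distribution and then invokes Theorem~\ref{thm:flow-matching} with data distribution $q_0$, which is legitimate since $q_{t0}(\xb|\xb_0)=p_{t0}(\xb|\xb_0)$ and the marginalization theorem of \citet{lipman2022flow} holds for an arbitrary data distribution. What each buys: your route is shorter and more conceptual --- it exposes $\hat\ub_t$ as nothing more than the standard marginal field of the reweighted distribution, with the intractable $\exp(-\cE_t(\xb))$ canceling against the same factor hidden in $q_t$, and it makes the boundary condition $\cE_0(\xb)=\beta\cE(\xb)$ explicit where the paper leaves it implicit; the paper's route is self-contained (it does not outsource the continuity-equation argument to the cited theorem) and its intermediate Lemma~\ref{lm:closed-form}, which your weight identification essentially re-derives inline, is reused later in the proofs of Theorem~\ref{thm:cond} and the importance-sampling remark. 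Both routes share the same unstated regularity needs (interchanging $\partial_t$ or $\mathrm{div}$ with the integral over $\xb_0$), which you flag and the paper does not.
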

\begin{remark}
Theorem~\ref{thm:main} suggests a method to construct the vector field $\hat \ub_t(\xb)$ from the conditional vector field $\ub_{t0}(\xb | \xb_0)$ and the intermediate energy function $\cE_t(\xb)$ in the closed-form solution. It holds universally to any conditional flow including the the optimal transport, Gaussian path or rectify flow. We will extend the discussion on the diffusion models in the next subsection. 
\end{remark}
Despite the closed-form expression for the energy-guided flow, it remains challenging to learn a neural network $\vb^{\btheta}_t$ to match $\hat \ub_t$ since the following two reasons. First, $\hat \ub_t$ in~\eqref{eq:energy-guided-flow} requires to sample over data distribution $\xb_0$. Secondly, the expression of $\hat \ub_t$ still requires the estimation of the intermediate energy function $\cE_t$. Previous methods are both using auxiliary neural networks to approximate either $\cE_t$~\citep{lu2023contrastive} or its gradient $\nabla_{\xb} \cE_t(\xb)$~\citep{wang2024protein}. To overcome these two challenges, the following theorem suggests a \emph{weighted} flow matching objective which can be directly used to learn $\hat \ub_t$ without the aforementioned procedures. 

\begin{theorem}\label{thm:cond}
    Define the \textbf{E}nergy-weighted \textbf{F}low \textbf{M}atching loss $\cL_{\mathrm{EFM}}$ as 
    \begin{align}
        \cL_{\mathrm{EFM}}(\btheta) = \EE_{t, \xb}\left[\frac{\exp(- \cE_t(\xb))}{\EE_{p_t(\tilde \xb)}[\exp(- \cE_t(\tilde \xb))]} \|\vb_t^{\btheta}(\xb) - \hat \ub_t(\xb)\|_2^2\right], \label{eq:wgfm}
    \end{align}
    and the \textbf{C}onditional \textbf{E}nergy-weighted \textbf{F}low \textbf{M}atching loss $\cL_{\mathrm{CEFM}}$ as
    \begin{align}
        \cL_{\mathrm{CEFM}}(\btheta) = \EE_{t, \xb, \xb_0}\left[\frac{\exp(-\beta \cE(\xb_0))}{\EE_{p_0(\tilde{\xb_0})}[\exp(-\beta \cE(\tilde{\xb_0}))]} \|\vb_t^{\btheta}(\xb) - \ub_{t0}(\xb | \xb_0)\|_2^2\right],\label{eq:cgfm}
    \end{align}
    where the expectation on $t$ is taken over some predefined distribution $\lambda(t)$, $\xb_0$ is sampled from the data distribution $p_0(\cdot)$ and $\xb$ at time $t$ is sampled by $p_t(\xb)$ with conditional distribution $p_{t0}(\xb |\xb_0)$ generated by the flow $\ub_{t0}(\xb | \xb_0)$. $\cL_{\text{EFM}}(\btheta)$ and $\cL_{\text{CEFM}}(\btheta)$ are equal up to a constant factor. Hence $\nabla_{\btheta} \cL_{\text{EFM}}(\btheta) = \nabla_{\btheta} \cL_{\text{CEFM}}(\btheta)$. 
\end{theorem}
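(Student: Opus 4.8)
The plan is to follow the standard conditional flow matching argument behind Theorem~\ref{thm:flow-matching}, adapting it to the energy-weighted setting: I will expand both squared norms, factor out the common normalizing constant, and show that the two losses agree on every term that depends on $\btheta$, so that they differ only by a $\btheta$-independent quantity and hence share the same gradient.

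First I would verify that the two normalizing denominators coincide. Writing $Z := \EE_{p_0(\tilde\xb_0)}[\exp(-\beta\cE(\tilde\xb_0))]$, I would use the definition $\exp(-\cE_t(\xb)) = \EE_{p_{0t}(\xb_0|\xb)}[\exp(-\beta\cE(\xb_0))]$ from~\eqref{eq:inter-energy} together with the Bayes identity $p_t(\xb)\,p_{0t}(\xb_0|\xb) = p_0(\xb_0)\,p_{t0}(\xb|\xb_0)$ and Fubini's theorem to compute
\[
\EE_{p_t(\tilde\xb)}[\exp(-\cE_t(\tilde\xb))] = \int\!\!\int p_0(\xb_0)\,p_{t0}(\xb|\xb_0)\exp(-\beta\cE(\xb_0))\,\mathrm d\xb\,\mathrm d\xb_0 = Z,
\]
since $\int p_{t0}(\xb|\xb_0)\,\mathrm d\xb = 1$. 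Thus both losses carry the same prefactor $1/Z$, and I can compare the unnormalized integrands directly.

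Next I would expand $\|\vb^{\btheta}_t(\xb) - \hat\ub_t(\xb)\|_2^2$ and $\|\vb^{\btheta}_t(\xb) - \ub_{t0}(\xb|\xb_0)\|_2^2$ each into the three pieces $\|\vb^{\btheta}_t\|_2^2$, a cross term, and a target-only term. For the quadratic piece, in $\cL_{\mathrm{CEFM}}$ I would integrate out $\xb_0$ with the same Bayes identity: the weight multiplying $\|\vb^{\btheta}_t(\xb)\|_2^2$ becomes $\int p_0(\xb_0)\,p_{t0}(\xb|\xb_0)\exp(-\beta\cE(\xb_0))\,\mathrm d\xb_0 = p_t(\xb)\exp(-\cE_t(\xb))$, which is exactly the weight appearing in $\cL_{\mathrm{EFM}}$.

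The crux is the cross term, where the definition of the energy-guided flow in~\eqref{eq:energy-guided-flow} does the work. Collecting the inner products against $\vb^{\btheta}_t(\xb)$ in $\cL_{\mathrm{CEFM}}$ and pulling $\vb^{\btheta}_t(\xb)$ out of the $\xb_0$-integral, I would recognize
\[
\int p_{0t}(\xb_0|\xb)\exp(-\beta\cE(\xb_0))\,\ub_{t0}(\xb|\xb_0)\,\mathrm d\xb_0 = \exp(-\cE_t(\xb))\,\hat\ub_t(\xb),
\]
which is precisely~\eqref{eq:energy-guided-flow} rearranged. This makes the cross term of $\cL_{\mathrm{CEFM}}$ equal to $-2\,p_t(\xb)\exp(-\cE_t(\xb))\langle \vb^{\btheta}_t(\xb), \hat\ub_t(\xb)\rangle$, matching the cross term of $\cL_{\mathrm{EFM}}$. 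Since the remaining target-only pieces ($\|\hat\ub_t\|_2^2$ on one side, $\|\ub_{t0}\|_2^2$ on the other) do not involve $\btheta$, the two losses differ only by a constant in $\btheta$, giving $\nabla_{\btheta}\cL_{\mathrm{EFM}} = \nabla_{\btheta}\cL_{\mathrm{CEFM}}$. I expect the main obstacle to be the bookkeeping of the change of measure between drawing $\xb\sim p_t$ and drawing $(\xb_0\sim p_0,\ \xb\sim p_{t0})$, and checking that the energy weights $\exp(-\beta\cE(\xb_0))$ and $\exp(-\cE_t(\xb))$ recombine correctly through~\eqref{eq:inter-energy}; once this is set up, the matching of the $\btheta$-dependent terms is immediate.
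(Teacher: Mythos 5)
Your proposal is correct and follows essentially the same route as the paper's proof: the same bilinear expansion of the squared norms, the same Bayes identity $p_0(\xb_0)p_{t0}(\xb|\xb_0) = p_t(\xb)p_{0t}(\xb_0|\xb)$ to recombine the weights, the matching of denominators via~\eqref{eq:e2}, and the recognition of the cross term as $\exp(-\cE_t(\xb))\,\hat\ub_t(\xb)$ through the definition~\eqref{eq:energy-guided-flow}. The only cosmetic difference is that you compare the losses themselves and conclude they differ by a $\btheta$-independent additive term, while the paper takes $\nabla_{\btheta}$ first and drops the target-only terms; these are the same argument.
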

Theorem~\ref{thm:cond} suggests that minimizing $\cL_{\text{CEFM}}$ is equivalent to minimizing $\cL_{\text{EFM}}$. It is obvious that the global minimum of $\cL_{\text{CEFM}}$ is $\vb_t^{\btheta}(\xb) = \hat \ub_t(\xb)$, given enough neural network capacity and infinite data. Therefore, one can use $\cL_{\text{CEFM}}$ to directly learn the guided flow $\hat \ub_t(\xb)$, without calculating the intermediate energy function $\cE_t(\xb)$ or its gradient. 

Besides the aforementioned message, Theorem~\ref{thm:cond} suggests several understandings and intuitions in training the neural network $\vb^{\btheta}_t$ which are discussed as follows
\begin{remark}[Regarding the weighted energy guided loss $\cL_{\text{EFM}}$]
Instead of directly minimizing $\EE_{t, \xb} \|\vb_t^{\btheta}(\xb) - \hat \ub_t(\xb)\|_2^2$, $\cL_{\text{EFM}}$ places higher weight on the input $\xb$ with a lower intermediate energy $ \cE_t(\xb)$. Intuitively speaking, $\exp(- \cE(\xb))$ can be viewed as a prior distribution in generating $q_t(\xb) \propto p_t(\xb) \exp(- \cE_t(\xb))$. Therefore, for all time $t$, areas with higher $\exp(- \cE(\xb))$ will be more likely to be visited. As a result, it would be more efficient placing more importance on $\xb$ in these areas instead of learning $\hat \ub_t(\xb)$ uniformly for all $\xb \in \RR^d$.
\end{remark}
\begin{remark}[Regarding the conditional weighted energy guided loss $\cL_{\text{CEFM}}$]
The weight $\exp(-\beta \cE(\xb_0))$ suggests how the energy ``guides'' the conditional flow matching. Fixing $t$ and $\xb$ and changing the form of expectations in~\eqref{eq:cgfm}, $\cL_{\text{CEFM}}(\btheta)$ becomes
\begin{align}
    \cL_{\mathrm{CEFM}}(\btheta; t, \xb) = \EE_{p_{0t}(\xb_0 | \xb)}\left[\frac{\exp(-\beta \cE(\xb_0)) }{\EE_{p_0(\tilde{\xb_0})}[\exp(-\beta \cE(\tilde{\xb_0}))]}\|\vb_t^{\btheta}(\xb) - \ub_{t0}(\xb | \xb_0)\|_2^2\right]. \notag 
\end{align}
Intuitively speaking, velocity field $\ub_{t0}(\xb | \xb_0)$ will move the particle $\xb$ to $\xb_0$. Therefore, when the energy guidance does not exist (i.e, $\cE(\xb) = 0$), $\vb_t^{\btheta}(\xb)$ is essentially finding the ``center'' of all $\xb_0$ possibly generated from $\xb$ following $p(\xb_0 | \xb)$. In the presence of the energy function $\cE(\xb_0)$, the learnt vector field $\vb_t^{\btheta}(\xb)$ is biased to the conditional vector field $\ub_{t0}(\xb | \xb_0)$ with higher weight $\exp(-\beta \cE(\xb_0))$. As a result, the learnt velocity field $\vb_t^{\btheta}(\xb)$ will generate $\xb_0$ with lower energy $\cE(\xb_0)$.  
\end{remark}
\begin{remark}[Connection with the importance sampling]
The conditional weighted energy guided loss $\cL_{\text{CEFM}}$ can be also interpreted from the importance sampling techniques. Suppose we can sample directly from the data $q_0(\xb) \propto p_0(\xb) \exp(-\beta \cE(\xb))$, minimizing the following loss $\cL_q$ will get a velocity field $\vb_t$ for generating distribution $q_0$
\begin{align*}
    \cL_q(\theta)  = \EE_{t, \xb_0 \sim q_0(\xb), \xb \sim q_{t0}(\xb | \xb_0)} [\|\vb_t^{\btheta}(\xb) - \ub_{t0}(\xb | \xb_0)\|_2^2],
\end{align*}
where $q_{t0}(\xb | \xb_0) = p_{t0}(\xb |\xb_0)$. Since , where $Z$ is a constant, changing the data distribution from $q_0$ to $p_0$ yields that
\begin{align*}
    \cL_q(\theta) &= \EE_{t, \xb_0 \sim p_0(\xb), \xb \sim q_{t0}(\xb | \xb_0)}\left[\frac{q_0(\xb)}{p_0(\xb)}\|\vb_t^{\btheta}(\xb) - \ub_{t0}(\xb | \xb_0)\|_2^2\right] \\
    &= \EE_{t, \xb_0 \sim p_0(\xb), \xb \sim p_{t0}(\xb | \xb_0)}\left[\frac{\exp(-\beta \cE(\xb_0))}{\EE_{p_0(\tilde \xb_0)}[\exp(-\beta \cE(\tilde \xb_0)]}\|\vb_t^{\btheta}(\xb) - \ub_{t0}(\xb | \xb_0)\|_2^2\right] = \cL_{\text{CEFM}}(\theta),
\end{align*}
where the second equation is given by $q_0(\xb) = p_0(\xb) \exp(-\beta \cE(\xb)) / \EE_{\xb_0}[\exp(-\beta \cE(\xb_0)]$ according to Lemma~\ref{lm:closed-form}.
\end{remark}
\subsection{Weighted Diffusion Models}
Theorem~\ref{thm:cond} suggests a general method to learn an energy-guided flow $\vb^{\btheta}$ given any condition flow $\ub_{t0}(\xb | \xb_0)$, including diffusion flow~\citep{song2020score}, optimal transport~\citep{lipman2022flow}, rectified flow~\citep{liu2022rectified} or even more complicated $\ub_{t0}(\xb | \xb_0)$. In this subsection, we restrict the analysis to the diffusion flow and present several useful analysis for the diffusion and score matching models. The first corollary provides the closed-form score function $\nabla_{\xb} \log q_t(\xb)$ for the energy-guided distribution $q_t(\xb) \propto p_t(\xb) \exp(-\cE_t(\xb))$: 
\begin{corollary}\label{col:gaussian-path}
    Under the assumptions claimed in Lemma~\ref{lm:diff-flow-connection}, when $p_{t0}(\xb | \xb_0)$ is a Gaussian path with scheduler $(\mu_t, \sigma_t)$, we have $\nabla_{\xb} \log q_t(\xb)  = \nabla_{\xb} \log p_t(\xb) - \nabla_{\xb} \cE_t(\xb)$ and 
    \begin{align}
         \nabla_{\xb} \log q_t(\xb) = \int_{\xb_0} p_{0t}(\xb_0 | \xb) \nabla_{\xb} \log p_{t0}(\xb | \xb_0) \frac{\exp(-\beta \cE(\xb_0))}{\exp(-\cE_t(\xb))} \mathrm d \xb_0,  \label{eq:energy-diffusion}
    \end{align}
    where $\nabla_{\xb} \log p_{t0}(\xb | \xb_0) = -(\xb - \mu_t \xb_0)/\sigma_t^2 = -\bepsilon / \sigma_t$, $\bepsilon \sim \cN(0, \Ib_d)$.
\end{corollary}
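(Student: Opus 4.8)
The first identity $\nabla_{\xb}\log q_t(\xb) = \nabla_{\xb}\log p_t(\xb) - \nabla_{\xb}\cE_t(\xb)$ is nothing more than a restatement of \eqref{eq:recon-score} from Theorem~\ref{thm:lu}: the corollary's hypothesis (a shared Gaussian forward path $p_{t0}(\xb|\xb_0)=\cN(\mu_t\xb_0,\sigma_t^2\Ib)$ for the $p$- and $q$-processes) is exactly the setting of Theorem~\ref{thm:lu}, so I would simply invoke it. The substance is the closed form \eqref{eq:energy-diffusion}, and the plan is to specialize the general energy-guided flow $\hat\ub_t$ of Theorem~\ref{thm:main} to the Gaussian path. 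First I would use Lemma~\ref{lm:diff-flow-connection} to write the conditional velocity as an affine function of the conditional score, $\ub_{t0}(\xb|\xb_0) = a_t\xb + b_t\nabla_{\xb}\log p_{t0}(\xb|\xb_0)$, with $a_t = \dot{\mu_t}/\mu_t$ and $b_t = (\dot{\mu_t}\sigma_t - \mu_t\dot{\sigma_t})\sigma_t/\mu_t$, and substitute this into \eqref{eq:energy-guided-flow}.

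The key observation driving the substitution is that the reweighting factor is already normalized. By the definition of the intermediate energy in \eqref{eq:inter-energy}, $\int p_{0t}(\xb_0|\xb)\exp(-\beta\cE(\xb_0))\,\mathrm d\xb_0 = \exp(-\cE_t(\xb))$, so that $\int p_{0t}(\xb_0|\xb)\frac{\exp(-\beta\cE(\xb_0))}{\exp(-\cE_t(\xb))}\,\mathrm d\xb_0 = 1$. Consequently the drift term $a_t\xb$ factors straight out of the integral, leaving $\hat\ub_t(\xb) = a_t\xb + b_t\int p_{0t}(\xb_0|\xb)\nabla_{\xb}\log p_{t0}(\xb|\xb_0)\frac{\exp(-\beta\cE(\xb_0))}{\exp(-\cE_t(\xb))}\,\mathrm d\xb_0$. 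Next I would identify the affine coefficient of the score: since the $q$-process shares the Gaussian forward path ($q_{t0}=p_{t0}$) and $\hat\ub_t$ generates $q_t$ by Theorem~\ref{thm:main}, the reverse-process form \eqref{eq:diffusion} applies verbatim to $q_t$, giving $\hat\ub_t(\xb) = a_t\xb + b_t\nabla_{\xb}\log q_t(\xb)$. Matching this with the previous display, cancelling $a_t\xb$ and dividing by $b_t$, yields \eqref{eq:energy-diffusion}. The final expression $\nabla_{\xb}\log p_{t0}(\xb|\xb_0) = -(\xb-\mu_t\xb_0)/\sigma_t^2 = -\bepsilon/\sigma_t$ is just the gradient of the Gaussian log-density together with the reparameterization $\xb = \mu_t\xb_0 + \sigma_t\bepsilon$.

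I expect the one step needing care to be the identification $\hat\ub_t(\xb) = a_t\xb + b_t\nabla_{\xb}\log q_t(\xb)$, since it presumes the $q$-marginal velocity takes the reverse-diffusion form \eqref{eq:diffusion}; strictly this appeals to the generating field of a Gaussian path being fixed by \eqref{eq:diffusion}. To sidestep this delicate point I would instead derive \eqref{eq:energy-diffusion} directly by Bayes' rule. Writing $q_{0t}(\xb_0|\xb) = p_{0t}(\xb_0|\xb)\exp(-\beta\cE(\xb_0))/\exp(-\cE_t(\xb))$, which follows from $q_{t0}=p_{t0}$, $q_0 \propto p_0\exp(-\beta\cE)$, and \eqref{eq:inter-energy}, the right-hand side of \eqref{eq:energy-diffusion} becomes $\EE_{q_{0t}(\xb_0|\xb)}[\nabla_{\xb}\log q_{t0}(\xb|\xb_0)]$. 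This equals $\nabla_{\xb}\log q_t(\xb)$ by the standard mixture-score identity $\nabla_{\xb}\log q_t(\xb) = \int q_{0t}(\xb_0|\xb)\nabla_{\xb}\log q_{t0}(\xb|\xb_0)\,\mathrm d\xb_0$, obtained by differentiating $q_t(\xb) = \int q_{t0}(\xb|\xb_0)q_0(\xb_0)\,\mathrm d\xb_0$ and dividing by $q_t(\xb)$. This second route reduces the corollary to a bookkeeping exercise and avoids the only genuinely subtle point, so I would present it as the primary argument and use the velocity-field computation above as the motivating derivation.
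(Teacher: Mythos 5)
Your proposal is correct, and it actually contains two complete arguments: your ``motivating derivation'' is essentially identical to the paper's proof, while your primary Bayes-rule argument is a genuinely different and cleaner route. The paper proceeds exactly as in your first derivation: it sets $f_t = \dot{\mu_t}/\mu_t$ and $g_t^2 = (\dot{\mu_t}\sigma_t - \mu_t\dot{\sigma_t})\sigma_t/\mu_t$, writes both $\hat\ub_t(\xb) = f_t\xb + g_t^2\nabla_{\xb}\log q_t(\xb)$ and $\ub_{t0}(\xb|\xb_0) = f_t\xb + g_t^2\nabla_{\xb}\log p_{t0}(\xb|\xb_0)$ by appeal to Lemma~\ref{lm:diff-flow-connection}, substitutes the latter into~\eqref{eq:energy-guided-flow}, pulls $f_t\xb$ out of the integral using $\int_{\xb_0} p_{0t}(\xb_0|\xb)\exp(-\beta\cE(\xb_0))\,\mathrm d\xb_0 = \exp(-\cE_t(\xb))$, and cancels $f_t\xb$ on both sides --- precisely your substitution-and-matching argument. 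Notably, the delicate step you flag --- asserting $\hat\ub_t(\xb) = f_t\xb + g_t^2\nabla_{\xb}\log q_t(\xb)$, which does not follow from ``$\hat\ub_t$ generates $q_t$'' alone since generating fields for a density path are not unique --- is stated in the paper without further justification, so your Bayes-rule route genuinely repairs a gloss rather than merely restyling the proof: writing $q_{0t}(\xb_0|\xb) = p_{0t}(\xb_0|\xb)\exp(-\beta\cE(\xb_0))/\exp(-\cE_t(\xb))$ (which follows from $q_{t0}=p_{t0}$, Lemma~\ref{lm:closed-form}, and~\eqref{eq:inter-energy}) and invoking the mixture-score identity $\nabla_{\xb}\log q_t(\xb) = \EE_{q_{0t}(\xb_0|\xb)}[\nabla_{\xb}\log q_{t0}(\xb|\xb_0)]$ establishes~\eqref{eq:energy-diffusion} pointwise by direct computation. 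Your route also buys generality the paper's does not: it shows~\eqref{eq:energy-diffusion} holds for any differentiable conditional density path, with Gaussianity entering only through the explicit formula $\nabla_{\xb}\log p_{t0}(\xb|\xb_0) = -(\xb - \mu_t\xb_0)/\sigma_t^2$; what the paper's velocity-field route buys in exchange is a uniform treatment alongside Theorem~\ref{thm:main} within the flow-matching framework. Your handling of the first identity $\nabla_{\xb}\log q_t = \nabla_{\xb}\log p_t - \nabla_{\xb}\cE_t$ by citing Theorem~\ref{thm:lu} matches the paper, which likewise does not rederive it.
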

Corollary~\ref{col:gaussian-path} suggests a method to estimate the guided score function without calculating the gradient of the intermediate energy function $\nabla_{\xb} \cE_t(\xb)$ as conducted in~\citet{lu2023contrastive}. 
Then the following corollary suggests a similar energy-weighted diffusion model to train this score function $\nabla_{\xb} \log q_t(\xb)$ in practice. 
\begin{corollary}\label{col:cwd}
    Define the \textbf{E}nergy-weighted \textbf{D}iffusion loss $\cL_{\text{ED}}$ and the \textbf{C}onditional \textbf{E}nergy-weighted \textbf{D}iffusion loss $\cL_{\text{CED}}$ separately as 
    \begin{align*}
        \cL_{\text{ED}}(\btheta) &= \EE_{t, \xb}\left[\frac{\exp(- \cE_t(\xb))}{\EE_{p_t(\tilde{\xb})}[\exp(- \cE_t(\tilde{\xb}))]} \|\sbb_t^{\btheta}(\xb) - \nabla_{\xb} \log q_t(\xb)\|_2^2\right],\\
        \cL_{\text{CED}}(\btheta) &= \EE_{t, \xb, \xb_0} \left[\frac{\exp(-\beta \cE(\xb_0))}{\EE_{p_0(\tilde{\xb_0})}[\exp(-\beta \cE(\tilde{\xb_0}))]} \Big\|\sbb_t^{\btheta}(\xb) - \nabla_{\xb} \log p_{t0}(\xb | \xb_0)\Big\|_2^2\right],
    \end{align*}
    where the expectation is taken from $t \sim \lambda(t), \xb_0 \sim p_0(\xb_0)$ and $\xb \sim p_{t0}(\xb | \xb_0)$. Thus the marginal distribution of $\xb$ is is $p_t(\xb)$. $\cL_{\text{ED}}(\btheta)$ is equal with $\cL_{\text{CED}}(\btheta)$ up to a constant and thus $\nabla_{\btheta} \cL_{\text{ED}}(\btheta) = \nabla_{\btheta} \cL_{\text{CED}}(\btheta)$.
\end{corollary}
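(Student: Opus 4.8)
The plan is to mirror the proof of Theorem~\ref{thm:cond}, replacing the velocity fields by score functions throughout. The crucial observation is that Corollary~\ref{col:gaussian-path} is the exact score-analogue of the energy-guided flow in~\eqref{eq:energy-guided-flow}: it writes $\nabla_{\xb}\log q_t(\xb)$ as the same $\frac{\exp(-\beta\cE(\xb_0))}{\exp(-\cE_t(\xb))}$-reweighted posterior average of the conditional scores $\nabla_{\xb}\log p_{t0}(\xb|\xb_0)$ that~\eqref{eq:energy-guided-flow} forms out of the conditional velocities $\ub_{t0}(\xb|\xb_0)$. Since $\cL_{\text{ED}}$ and $\cL_{\text{CED}}$ are obtained from $\cL_{\text{EFM}}$ and $\cL_{\text{CEFM}}$ by literally the same substitution $\vb_t^{\btheta}\mapsto\sbb_t^{\btheta}$, $\hat\ub_t\mapsto\nabla_{\xb}\log q_t$, $\ub_{t0}\mapsto\nabla_{\xb}\log p_{t0}$, the whole argument carries over. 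I would first record a small bookkeeping fact: the two normalizers agree, $Z_t:=\EE_{p_t(\tilde\xb)}[\exp(-\cE_t(\tilde\xb))]=\EE_{p_0(\tilde\xb_0)}[\exp(-\beta\cE(\tilde\xb_0))]=:Z_0$ for every $t$. This follows by expanding $Z_t$ through the definition~\eqref{eq:inter-energy} of $\cE_t$, swapping the joint density $p_t(\xb)p_{0t}(\xb_0|\xb)=p_0(\xb_0)p_{t0}(\xb|\xb_0)$, and integrating out $\xb$; call the common value $Z$.

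Next I would reinterpret both losses as ordinary (denoising) score-matching losses over the guided distribution $q$. In $\cL_{\text{ED}}$ the weight $\exp(-\cE_t(\xb))/Z$ converts the sampling density $p_t$ into $q_t=p_t\exp(-\cE_t)/Z$ by Theorem~\ref{thm:lu}, so $\cL_{\text{ED}}(\btheta)=\EE_{t}\EE_{q_t(\xb)}\|\sbb_t^{\btheta}(\xb)-\nabla_{\xb}\log q_t(\xb)\|_2^2$. In $\cL_{\text{CED}}$ the weight $\exp(-\beta\cE(\xb_0))/Z$ converts $p_0$ into $q_0$, and because $q_{t0}(\xb|\xb_0)=p_{t0}(\xb|\xb_0)$ we may replace $\nabla_{\xb}\log p_{t0}$ by $\nabla_{\xb}\log q_{t0}$, giving $\cL_{\text{CED}}(\btheta)=\EE_{t}\EE_{q_0(\xb_0)}\EE_{q_{t0}(\xb|\xb_0)}\|\sbb_t^{\btheta}(\xb)-\nabla_{\xb}\log q_{t0}(\xb|\xb_0)\|_2^2$. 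Thus the claim reduces to the classical equivalence between score matching against the marginal $q_t$ and denoising score matching against the conditional $q_{t0}$, now stated for the $q$-system rather than the $p$-system.

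To close it I would fix $t$ and expand both squared norms. The quadratic terms $\EE\|\sbb_t^{\btheta}\|_2^2$ coincide, since the joint $q_0(\xb_0)q_{t0}(\xb|\xb_0)$ marginalizes in $\xb_0$ to $q_t(\xb)$; the terms with no $\sbb_t^{\btheta}$ are $\btheta$-independent and collect into a constant. The only nontrivial step is matching the cross terms, i.e. showing $\EE_{q_t(\xb)}\langle\sbb_t^{\btheta}(\xb),\nabla_{\xb}\log q_t(\xb)\rangle=\EE_{q_0(\xb_0)}\EE_{q_{t0}(\xb|\xb_0)}\langle\sbb_t^{\btheta}(\xb),\nabla_{\xb}\log q_{t0}(\xb|\xb_0)\rangle$, which rests on the identity $\nabla_{\xb}\log q_t(\xb)=\EE_{q_{0t}(\xb_0|\xb)}[\nabla_{\xb}\log q_{t0}(\xb|\xb_0)]$ — the marginal score is the $q$-posterior mean of the conditional score. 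For the Gaussian path this is exactly Corollary~\ref{col:gaussian-path}, and this is the step I expect to be the main obstacle, since it is where the energy reweighting and the equality $Z_t=Z_0$ are quietly required to make the posterior average consistent. Substituting the identity and using the joint-density swap collapses the two cross terms, so at each $t$ the two integrands differ by a $\btheta$-independent constant; integrating over $t\sim\lambda(t)$ yields $\cL_{\text{ED}}(\btheta)=\cL_{\text{CED}}(\btheta)+C$ with $C$ independent of $\btheta$, and hence $\nabla_{\btheta}\cL_{\text{ED}}(\btheta)=\nabla_{\btheta}\cL_{\text{CED}}(\btheta)$. The remaining manipulations are routine bookkeeping that parallels Theorem~\ref{thm:flow-matching} and Theorem~\ref{thm:cond}.
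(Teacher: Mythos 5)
Your proposal is correct, and its ingredients are exactly those of the paper's proof — the normalizer identity $Z_t = Z_0$ (the paper's Lemma~\ref{lm:closed-form} and equations~\eqref{eq:e1}--\eqref{eq:e2}), the Bayes swap $p_0(\xb_0)p_{t0}(\xb|\xb_0) = p_t(\xb)p_{0t}(\xb_0|\xb)$, and Corollary~\ref{col:gaussian-path} as the ``marginal score equals reweighted posterior mean of conditional scores'' identity — but your organization is genuinely different. The paper's proof simply reruns the gradient computation of Theorem~\ref{thm:cond} verbatim under the base measure $p$: expand $\nabla_{\btheta}\cL_{\text{CED}}$, drop the $\btheta$-free term, reorganize the double integral, recognize the inner weighted integral as $\nabla_{\xb}\log q_t(\xb)$ via Corollary~\ref{col:gaussian-path}, and recomplete the square to land on $\nabla_{\btheta}\cL_{\text{ED}}$. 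You instead perform a change of measure first — using $Z_t = Z_0$ to absorb both weights as Radon--Nikodym derivatives, so that $\cL_{\text{ED}}$ becomes plain score matching under $q_t$ and $\cL_{\text{CED}}$ becomes plain denoising score matching under $q_0(\xb_0)q_{t0}(\xb|\xb_0)$ — and then invoke the classical Vincent-style marginal/conditional equivalence for the $q$-system, whose cross-term step you correctly identify with Corollary~\ref{col:gaussian-path} after unpacking $q_{0t}(\xb_0|\xb) = p_{0t}(\xb_0|\xb)\exp(-\beta\cE(\xb_0))/\exp(-\cE_t(\xb))$. Interestingly, the paper itself sketches precisely your change-of-measure view, but only as the informal ``Connection with the importance sampling'' remark for $\cL_{\text{CEFM}}$, not as a proof. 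Your route buys conceptual transparency (the weights are exactly the likelihood ratios tilting $p$ to $q$, and the result reduces to a known theorem) and yields the loss-level statement $\cL_{\text{ED}} = \cL_{\text{CED}} + C$ directly, which is literally what the corollary claims; the paper's route is more self-contained, working only with gradients and never needing the $q$-system's forward process as an object in its own right. One small caveat: your reduction leans on $q_t$ being simultaneously the tilted marginal $p_t\exp(-\cE_t)/Z$ and the diffused marginal of $q_0$ under $q_{t0}=p_{t0}$, which is Theorem~\ref{thm:lu}; you should cite it explicitly when claiming the quadratic terms coincide, since that consistency is exactly what makes the $q$-system well posed.
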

\begin{remark}
    A similar approach is proposed in~\citet{wang2024protein} for estimating $\nabla_{\xb}  \cE_t(\xb)$ using a neural network by
    \begin{align}
        \nabla_{\xb} \cE_t(\xb) = \frac{\EE_{p_{0t}(\xb_0 | \xb)}\left[\exp(-\beta \cE(\xb_0))\left(\nabla_{\xb} \log p_t(\xb) - \nabla_{\xb} \log p_{t0}(\xb | \xb_0)\right)\right]}{\exp(-\cE_t(\xb))}, \label{eq:unness}
    \end{align}
    and then plugging~\eqref{eq:unness} back to $\nabla_{\xb} \log q_t(\xb) = \nabla_{\xb} \log p_t(\xb) - \nabla_{\xb}  \cE_t(\xb)$ for generating the score function $\nabla_{\xb} \log q_t(\xb)$. However, in order to obtain $\nabla_{\xb} \cE_t(\xb)$, \citet{wang2024protein} requires to estimate $\exp(\cE_t(\xb))$ by sampling and approximate $\nabla_{\xb} \cE_t(\xb)$ via another neural network. In contrast, using $\cL_{\text{CED}}$ as discussed in Corollary~\ref{col:cwd} removes the necessity of estimating both $\nabla_{\xb} \cE_t(\xb)$ and $\cE_t(\xb)$. Therefore, Energy-weighted diffusion can directly obtain the score function for guided distribution without additional sampling or back-propagation.  
\end{remark}
\begin{algorithm}[t]
\caption{Training Energy-Weighted Diffusion Model}\label{alg:main}
\begin{algorithmic}[1]
\REQUIRE Score function $\sbb_t^{\btheta}(\cdot)$, schedule $(\mu_t, \sigma_t)$, guidance scale $\beta$, batch size $B$, time weight $\lambda(t)$
\FOR {batch $\{\xb_0^i, \cE(\xb_0^i)\}_i$}
\FOR {index $i \in [B]$}
\STATE Calculate guidance $g_i = \mathrm{softmax}(-\beta \cE(\xb_0^i)) = \exp(-\beta \cE(\xb_0^i)) / \sum_j \exp(-\beta \cE(\xb_0^j))$\label{ln:guidance}
\STATE Sample $t_i \sim U(0, 1)$, calculate $\mu_{t_i}, \sigma_{t_i}$, sample $\bepsilon_i \sim \cN(0, \Ib_d)$ and $\xb_{t_i} = \mu_{t_i} \xb_0^i + \sigma_{t_i}\bepsilon_i$
\ENDFOR
\STATE Calculate and take a gradient step using $\cL_{\text{CED}}(\btheta) = \sum_i \lambda(t_i) g_i\|\sbb^{\btheta}_{t_i}(\xb_{t_i}) + \bepsilon_i / \sigma_{t_i}\|_2^2$.
\ENDFOR
\end{algorithmic}
\end{algorithm}
In the implementation of the  diffusion models, since $\xb \sim \cN(\mu_t\xb_0, \sigma_t^2 \Ib)$, the conditional score function $\nabla_{\xb} \log p_{t0}(\xb | \xb_0) = -\bepsilon / \sigma_t$ where $\bepsilon = (\xb_t - \mu \xb_0) / \sigma_t \sim \cN(\zero, \Ib)$. In addition, the denominator $\EE_{p_0(\xb_0)}[\exp(-\beta \cE(\xb_0))]$ can be approximated by the empirical average $\sum_{i \in N} \exp(-\beta \cE(\xb_0^i)) / N$ in a batch. A practical algorithm is presented in Algorithm~\ref{alg:main}. The training process is similar with the standard DDPM~\citep{ho2020denoising} or score matching~\citep{song2020score}. The only difference is that we incorporate the weight by calculating a energy guidance $g_i$ using the softmax value of using $-\beta \cE(\xb_0^i)$ from the current batch in Line~\ref{ln:guidance}.

\subsection{Comparison between CEP and classifier (free) guidance}\label{sec:cfg}
In this subsection, we compare our method with Contrastive Energy Prediction (CEP, \citealt{lu2023contrastive}), Classifier-Guidance (CG, \citealt{dhariwal2021diffusion}) and Classifier-Free Guidance (CFG, \citealt{ho2021classifier}). We consider the guided distribution $q_0(\xb) \propto p_0(\xb) p^\beta(c | \xb)$ where $p_0(c | \xb)$ is the classifier, $\beta$ is the guidance scale, $c$ is the desired class which we fix during the analysis. Comparing with the formulation of the energy guidance $q_1(\xb) \propto p_1(\xb) \exp(-\beta \cE(\xb))$, the ``energy function'' can be interpreted as $\cE(\xb) = -\log p(c | \xb)$. To begin with, the following lemma provides the closed-form solution for the energy-guided diffusion and classifier-guided diffusion
\begin{lemma}\label{lm:compare}
Given the same guidance scale $\beta$ and the same diffusion process, let the energy function be defined by $\cE(\xb) = -\log p(c | \xb)$, the score function for CG and CFG are both:
\begin{align}
\nabla_{\xb} \log q_t(\xb) = \nabla_{\xb} \log p_t(\xb) + \nabla_{\xb} \log \left[\EE_{p_{0t}(\xb_0 | \xb)}p(c | \xb_0)\right]^\beta, \label{eq:CG1}
\end{align}
while the score function for energy-weighted diffusion and CEP are both 
\begin{align}
\nabla_{\xb} \log q_t(\xb) = \nabla_{\xb} \log p_t(\xb) + \nabla_{\xb}\log \EE_{p_{0t}(\xb_0 | \xb)}p^\beta(c | \xb_0). \label{eq:CG2}
\end{align}
\end{lemma}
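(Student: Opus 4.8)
The plan is to prove the two displayed identities separately, and within each of the two pairs (CG with CFG for~\eqref{eq:CG1}, energy-weighted diffusion with CEP for~\eqref{eq:CG2}) to show that both methods yield the same score. The conceptual heart of the lemma is the \emph{order} in which two operations are applied: exponentiation by $\beta$ and marginalization via $\EE_{p_{0t}(\xb_0 | \xb)}[\cdot]$. I would organize the argument around tracking this ordering.

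For the energy-weighted and CEP side~\eqref{eq:CG2}, the first observation is that under the identification $\cE(\xb) = -\log p(c | \xb)$ we have $\exp(-\beta \cE(\xb_0)) = p^\beta(c | \xb_0)$, so the weight inside the intermediate energy~\eqref{eq:inter-energy} is exactly $p^\beta(c | \xb_0)$. I would then invoke Theorem~\ref{thm:lu}, which gives $\nabla_{\xb} \log q_t(\xb) = \nabla_{\xb} \log p_t(\xb) - \nabla_{\xb} \cE_t(\xb)$ with $\cE_t(\xb) = -\log \EE_{p_{0t}(\xb_0 | \xb)}[p^\beta(c | \xb_0)]$. Substituting and using $-\nabla_{\xb} \cE_t(\xb) = \nabla_{\xb} \log \EE_{p_{0t}(\xb_0 | \xb)}[p^\beta(c | \xb_0)]$ yields~\eqref{eq:CG2} immediately; the identical formula holds for CEP, since CEP is precisely the method built on Theorem~\ref{thm:lu}. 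The structural feature to emphasize is that $\beta$ sits \emph{inside} the expectation, because it is applied to the clean classifier before marginalizing over $p_{0t}(\xb_0 | \xb)$.

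For the CG/CFG side~\eqref{eq:CG1}, I would start from the defining classifier-guided score $\nabla_{\xb} \log q_t(\xb) = \nabla_{\xb} \log p_t(\xb) + \beta \nabla_{\xb} \log p_t(c | \xb)$. The main step is to express the noisy classifier $p_t(c | \xb)$ through the clean one: assuming the standard conditional independence $c \perp \xb \mid \xb_0$ (the label is generated only from the clean sample), marginalization gives $p_t(c | \xb) = \int p(c | \xb_0) p_{0t}(\xb_0 | \xb)\,\mathrm d \xb_0 = \EE_{p_{0t}(\xb_0 | \xb)}[p(c | \xb_0)]$. Substituting and rewriting $\beta \log(\cdot) = \log(\cdot)^\beta$ produces~\eqref{eq:CG1}. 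To fold in CFG, I would apply Bayes' rule $\nabla_{\xb} \log p_t(\xb | c) = \nabla_{\xb} \log p_t(c | \xb) + \nabla_{\xb} \log p_t(\xb)$, where the $p_t(c)$ term vanishes under $\nabla_{\xb}$; then the CFG combination $\nabla_{\xb} \log p_t(\xb) + \beta(\nabla_{\xb} \log p_t(\xb | c) - \nabla_{\xb} \log p_t(\xb))$ collapses to exactly the CG score, and the same marginalization gives~\eqref{eq:CG1}.

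The main obstacle I anticipate is the marginalization step $p_t(c | \xb) = \EE_{p_{0t}(\xb_0 | \xb)}[p(c | \xb_0)]$ for CG/CFG, since this is precisely where $\beta$ ends up \emph{outside} the expectation: the classifier is marginalized first and exponentiated afterward, whereas in the energy/CEP derivation the clean classifier is exponentiated first and marginalized afterward. Making this rigorous requires carefully stating the conditional independence assumption and respecting the order of exponentiation versus $\EE_{p_{0t}(\xb_0 | \xb)}[\cdot]$. This ordering, together with the fact that $\EE_{p_{0t}(\xb_0 | \xb)}[p^\beta(c | \xb_0)] \neq (\EE_{p_{0t}(\xb_0 | \xb)}[p(c | \xb_0)])^\beta$ in general for $\beta \neq 1$ by Jensen's inequality, is exactly what distinguishes~\eqref{eq:CG1} from~\eqref{eq:CG2} and underlies the ``exact guidance'' comparison in Table~\ref{tab:compare}.
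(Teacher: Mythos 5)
Your proposal is correct and takes essentially the same route as the paper's proof: for~\eqref{eq:CG2} you substitute $\cE(\xb) = -\log p(c \mid \xb)$ into the identity $\nabla_{\xb}\log q_t(\xb) = \nabla_{\xb}\log p_t(\xb) - \nabla_{\xb}\cE_t(\xb)$ from Theorem~\ref{thm:lu} (the paper defers this side to Corollary~\ref{col:gaussian-path}, which rests on the same identity), and for~\eqref{eq:CG1} you start from the CG score, collapse the CFG combination via Bayes' rule, marginalize $p_t(c \mid \xb) = \EE_{p_{0t}(\xb_0 \mid \xb)}[p(c \mid \xb_0)]$, and absorb $\beta$ into the logarithm, exactly as the paper does. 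Your explicit statement of the conditional independence $c \perp \xb \mid \xb_0$ underlying the marginalization step is a minor gain in rigor over the paper's implicit use of it, but not a different argument.
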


\begin{figure}[ht]
\centering
\begin{subfigure}[b]{\textwidth}
\centering
\includegraphics[width=\textwidth]{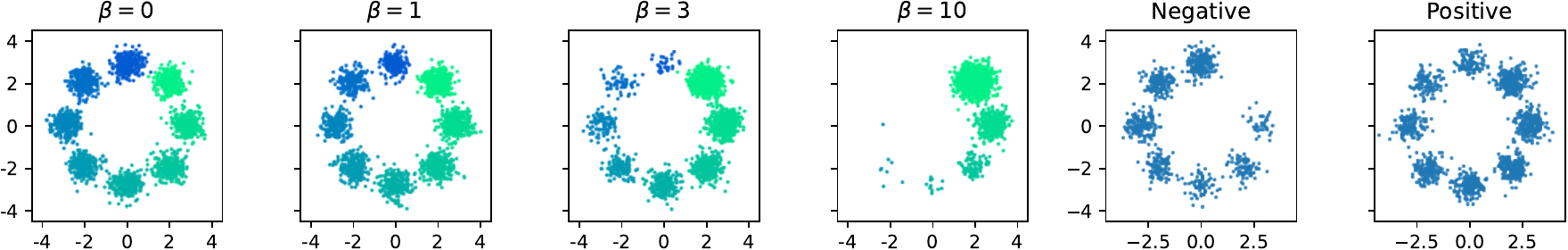}
\caption{Ground truth of $p(\xb)p^\beta(c=1 | \xb)$; the distribution of $p(\xb | c = 0)$ (Negative), $p(\xb | c = 1)$ (Positive). }
\label{fig:gt}
\end{subfigure}
\begin{subfigure}[b]{0.49\textwidth}
\centering
\includegraphics[width=\textwidth]{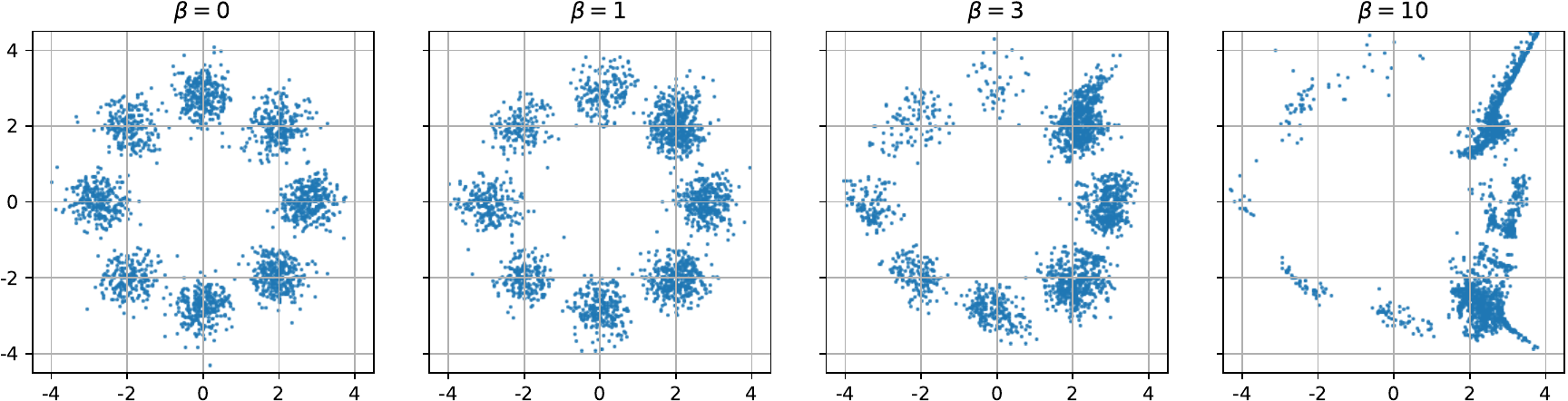}
\caption{Data sampled from classifier-free diffusion.}
\label{fig:cfg}
\end{subfigure}
\hfill
\begin{subfigure}[b]{0.49\textwidth}
\centering
\includegraphics[width=\textwidth]{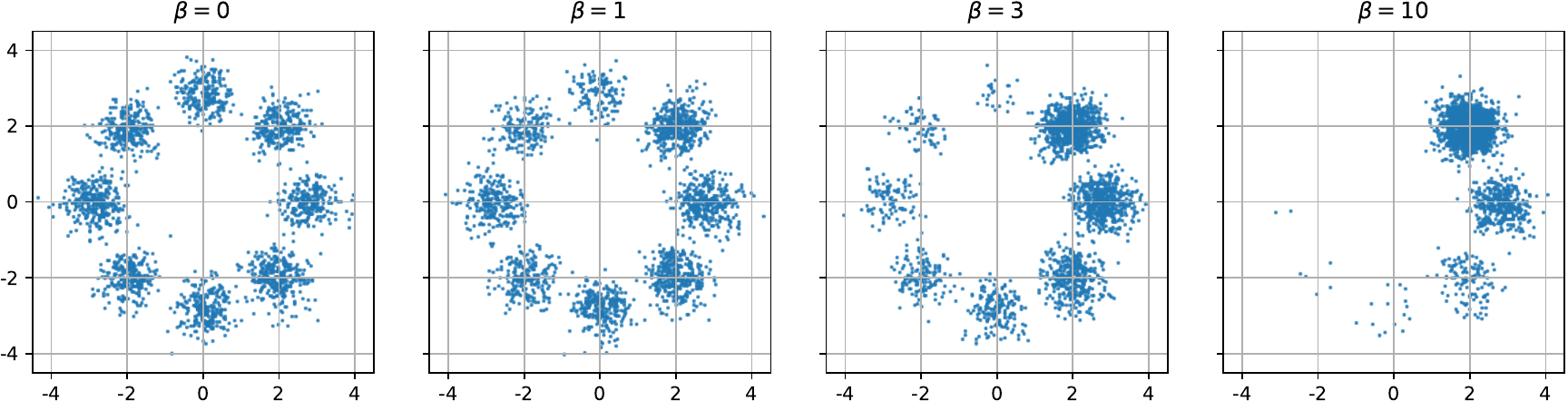}
\caption{Data sampled from energy-weighted diffusion.}
\label{fig:cep}
\end{subfigure}
\caption{Visualization of the ground-truth distribution $p(\xb) p^\beta(c=1 | \xb)$ with different values of $\beta$, the posterior distribution $p(\xb | c)$ with $c \in \{0, 1\}$, and the data sampled from classifier-free diffusion and energy-weighted diffusion. The energy-weighted diffusion process demonstrates better performance when $\beta > 1$. More examples and details of this experiments are provided in Appendix~\ref{app:bandit}.}
\label{fig:bandit}
\end{figure}
The score function~\eqref{eq:CG2} is exactly the score function that generates $q_0(\xb) \propto p_0(\xb) \exp(-\beta \cE(\xb)) = p_0(\xb) p(c | \xb)^\beta$ according to Theorem~\ref{thm:lu}. In a sharp contrast, \eqref{eq:CG1} is not guaranteed to generate the desired distribution $q_0$ when $\beta \neq 1$ because $[\EE_{p_{0t}(\xb_0 | \xb)} p(c | \xb_0)]^\beta \neq \EE_{p_{0t}(\xb_0 | \xb)} p^\beta(c | \xb_0)$. As demonstrated in Figure~\ref{fig:bandit}, when $\beta = 1$, the distributions generated by CFG and energy-guided diffusion are both similar to the ground-truth distribution. However, when $\beta > 1$, the distribution generated by CFG differs from the ground-truth distribution, whereas the energy-guided diffusion can still generate the ground-truth distribution. Finally, the following lemma also verifies that when $\beta = 1$, the energy-weighted diffusion process is the same with the classifier-free guidance to learn the score function of the posterior distribution $\nabla_{\xb} \log p_t(\xb | c)$:
\begin{lemma} \label{lm:eq}
Let $\beta = 1$ and assume that $\cE(\xb) = -\log p(c | \xb)$ for some fixed $c$, then $\cL_{\text{CED}}(\btheta)$ is
\begin{align}
    \cL_{\text{CED}}(\btheta) = \EE_{t, \xb, \xb_0 | c} \|\sbb^{\btheta}_t(\xb) - \nabla_{\xb} \log p_{t0}(\xb | \xb_0)\|_2^2,\label{eq:eq1}
\end{align}
where the expectation is taken over $t \sim \lambda(t), \xb_0 \sim p_0(\cdot | c)$ and $\xb \sim p_{t0}(\cdot | \xb_0)$. The global optimal for~\eqref{eq:eq1} is $\sbb^{\btheta}_t(\xb) = \nabla_{\xb} \log p_t(\xb | c)$.
\end{lemma}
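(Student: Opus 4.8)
The plan is to prove the equality~\eqref{eq:eq1} by collapsing the reweighting factor in $\cL_{\text{CED}}$ and then recognizing the resulting objective as a standard conditional denoising score matching loss. First I would substitute $\beta = 1$ and $\cE(\xb_0) = -\log p(c \mid \xb_0)$ into the numerator of the weight, which collapses $\exp(-\beta \cE(\xb_0))$ to $\exp(\log p(c \mid \xb_0)) = p(c \mid \xb_0)$. For the denominator I would compute $\EE_{p_0(\tilde{\xb_0})}[\exp(-\cE(\tilde{\xb_0}))] = \int p(c \mid \tilde{\xb_0}) p_0(\tilde{\xb_0}) \,\mathrm d \tilde{\xb_0} = p(c)$, the marginal class probability. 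Hence the entire reweighting factor reduces to $p(c \mid \xb_0)/p(c)$.

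Next I would fold this weight into the sampling measure. Writing the outer expectation over $\xb_0 \sim p_0(\cdot)$ explicitly, the weighted density is $\tfrac{p(c \mid \xb_0)}{p(c)} p_0(\xb_0)$, which by Bayes' rule is exactly the posterior $p_0(\xb_0 \mid c)$. Because the factor depends on the randomness only through $\xb_0$ and the inner measure $\xb \sim p_{t0}(\cdot \mid \xb_0)$ is untouched, the reweighted expectation over $\xb_0 \sim p_0(\cdot)$ converts into an ordinary expectation over $\xb_0 \sim p_0(\cdot \mid c)$ with $\xb \sim p_{t0}(\cdot \mid \xb_0)$. This directly yields the claimed identity~\eqref{eq:eq1}.

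For the statement on the global optimum, I would observe that~\eqref{eq:eq1} is precisely the conditional score matching objective whose data distribution at $t = 0$ is the posterior $p_0(\cdot \mid c)$. By the score matching analogue of Theorem~\ref{thm:flow-matching} (the equivalence between the marginal and conditional objectives, which parallels the velocity-field argument transported through Lemma~\ref{lm:diff-flow-connection}), the minimizer matches the marginal score of the induced path $p_t(\xb \mid c) = \int_{\xb_0} p_{t0}(\xb \mid \xb_0) p_0(\xb_0 \mid c) \,\mathrm d \xb_0$, giving $\sbb_t^{\btheta}(\xb) = \nabla_{\xb} \log p_t(\xb \mid c)$. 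The only genuinely delicate step is the Bayes'-rule identification of $p(c \mid \xb_0) p_0(\xb_0)/p(c)$ with $p_0(\xb_0 \mid c)$ together with verifying that the normalizing constant is exactly $p(c)$; once this bookkeeping is in place, the remainder is a direct appeal to known score matching optimality, so I anticipate no substantive obstacle beyond this identification.
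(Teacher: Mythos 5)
Your proposal is correct and follows essentially the same route as the paper's proof: substitute $\beta = 1$ and $\cE(\xb_0) = -\log p(c \mid \xb_0)$, identify the denominator $\EE_{p_0(\xb_0)}[p(c \mid \xb_0)] = p(c)$, absorb the weight $p(c \mid \xb_0)/p(c)$ into the sampling measure via Bayes' rule to obtain $p_0(\xb_0 \mid c)$, and then appeal to standard denoising score matching optimality for the conclusion $\sbb_t^{\btheta}(\xb) = \nabla_{\xb} \log p_t(\xb \mid c)$. The paper performs exactly this calculation at the level of the triple integral over $t, \xb, \xb_0$ and cites the standard score-matching analysis for the final step, so there is no substantive difference.
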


In addition to the \emph{exact} guidance provided by the CEP and energy-weighted diffusion, it is important to highlight that the energy-weighted diffusion model eliminates the necessity for the intermediate energy model. This advantage is similar to the simplicity provided by the CFG, compared with CG. We summarized the difference and connection between energy-weighted diffusion, CEP, CFG and CG in Table~\ref{tab:compare}.

\section{Q-weighted iterative policy optimization for offline RL}\label{sec:rl}
We consider the episodic Markov Decision Processes denoted by $\cM(\cS, \cA, r, \PP, \gamma)$ with $\cS, \cA$ denoting the state and action space respectively. $r$ is the reward function, $\PP(\cdot | \sbb, \ab)$ is the transition kernel, and $\gamma$ is the discount factor. In offline RL, the data is collected by a \emph{behavioral policy} $\mu$. The policy optimization with KL regularization~\citep{peters2010relative, peng2019advantage} is formulated as
\begin{align}
\argmax_{\pi^{\btheta}} \EE_{\ab \sim \pi^{\btheta}(\cdot | \xb)} Q(\xb, \ab) - \textstyle{\frac1\beta} \mathrm{KL}(\pi^{\btheta} \parallel \mu),\label{eq:rl}
\end{align}
where $\xb$ denotes the state and $\ab$ denotes the action. $Q(\xb, \ab)$ is the estimation of the state-action value function $Q^{\pi}(\xb, \ab) = \EE[\sum_{t=0}^\infty \gamma_t r(\xb_t, \ab_t)| \xb_0 = xb, \ab_0 = \ab, \pi]$. The closed-form solution to~\eqref{eq:rl} is $\pi(\ab | \xb) \propto \mu(\ab | \xb) \exp(\beta Q(\xb, \ab))$. Following the procedure of the energy-weighted diffusion model discussed in Section~\ref{sec:bandit}, we propose learning the score function $\sbb_t^{\btheta}(\ab, \xb)$ or the velocity field $\vb_t^{\btheta}(\ab, \xb)$ that generates the optimal policy $\pi(\cdot | \xb)$ using the Q-weighted diffusion loss $\cL_{\text{QD}}$ or Q-weighted flow matching loss $\cL_{\text{QF}}$, respectively, defined as:
\begin{align}
\cL_{\text{QD}}(\btheta) &= \EE_{t, (\xb, \ab), \ab_t} \left[\frac{\exp(\beta Q(\xb, \ab))}{\EE_{\tilde \ab \sim \mu(\cdot | \xb)}[\exp(\beta Q(\xb, \tilde \ab))]} \Big\|\sbb_t^{\btheta}(\ab_t; \xb) - \nabla_{\ab_t} \log p_{t0}(\ab_t | \ab_0)\Big\|_2^2\right], \label{eq:qd} \\
\cL_{\text{QF}}(\btheta) &= \EE_{t, (\xb, \ab), \ab_t} \left[\frac{\exp(\beta Q(\xb, \ab))}{\EE_{\tilde \ab \sim \mu(\cdot | \xb)}[\exp(\beta Q(\xb, \tilde \ab))]} \Big\|\vb_t^{\btheta}(\ab_t; \xb) - \ub_{t0}(\ab_t | \ab)\Big\|_2^2\right], \label{eq:qf}
\end{align}
where the expectation is taken over $t \sim \lambda(t)$, $(\xb, \ab)$ is sampled from the offline RL dataset, and $\ab_t \sim p_{t0}(\cdot | \ab)$. Two components are essential for training either~\eqref{eq:qd} or~\eqref{eq:qf}: First, the behavioral policy $\mu(\cdot | \xb)$ can be trained via standard diffusion or flow matching models. Second, any $Q$ function derived from offline RL algorithms can be used as $Q(\xb, \ab)$ in~\eqref{eq:qd} or~\eqref{eq:qf}.

\subsection{Proposed Algorithm}
\begin{algorithm}[t]
\caption{Q-weighted iterative policy optimization for offline RL (diffusion)}\label{alg:rl}
\begin{algorithmic}[1]
\REQUIRE Score function $\sbb_t^{\btheta}(\cdot)$, schedule $(\mu_t, \sigma_t)$, guidance scale $\beta$, batch size $B$, weight on time $\lambda(t)$
\REQUIRE Offline RL dataset $\cD = \{(\xb, \ab, \xb', r)\}$, number of epochs $K_1, K_2$ and $K_3$, $Q$ function $Q^{\bpsi}$
\REQUIRE Support action set size $M$, support action set renew frequency $K_{\text{renew}}$
\FOR {diffusion model warm-up epoch $k \in [K_1]$}
\STATE Train $\sbb_t^{\btheta}$ for each batch $\{(\xb_i, \ab_i, \xb'_i, r_i)\}_i \subset \cD$ following standard diffusion model.
\ENDFOR
\FOR {Q-learning warm-up epoch $k \in [K_2]$}
\STATE Train $Q^{\bpsi}$ for each batch $\{(\xb_i, \ab_i, \xb'_i, r_i)\}_i \subset \cD$
\ENDFOR
\FOR {policy improvement step $k \in [K_3]$} \label{ln:start}
\FOR {batch $\{(\xb_i, \ab_i, \xb'_i, r_i)\}_i \subset \cD$}
\IF {$k\mod K_{\text{renew}} = 1$}
\STATE Sample support action set $\ab_{ij}$ using score function $\sbb^{\btheta}(\cdot |\xb_i)$ for all $i \in [B], j \in [M]$ \label{ln:sample}
\STATE Denote $\ab_{i0} = \ab_i$, sample $t_{ij} \sim U(0, 1)$ and $\ab_{ij, t_{ij}} \sim \cN(\mu_{t_{ij}}\ab_{ij}, \sigma^2_{t_{ij}} \Ib)$ for all $j \in \overline{[M]}$
\STATE Calculate guidance $g_{ij} = \text{softmax}_j(\beta Q^{\bpsi}(\xb_i, \ab_{ij})) = \frac{\exp(\beta Q^{\bpsi}(\xb_i, \ab_{ij}))}{\sum_{j=0}^M\exp(\beta Q^{\bpsi}(\xb_i, \ab_{ij}))}$
\ENDIF
\STATE Calculate loss $ \cL_{\text{QD}}(\btheta) = \sum_{i=1, j=0}^{B, M} 
\lambda(t_{ij})g_{ij} \big\|\sbb_{t_{ij}}^{\btheta}(\ab_{ij,t_{ij}}; \xb_i) - \nabla_{\ab_t} \log p_{t0}(\ab_{ij, t_{ij}} | \ab_{ij})\big\|_2^2$
\vspace{-1em}
\STATE Update $\btheta$ using the gradient of $\cL_{\text{QD}}(\btheta)$ \label{ln:2}
\ENDFOR
\ENDFOR 
\end{algorithmic}
\end{algorithm}
We present the algorithm sketch for the $Q$-weighted diffusion process in Algorithm~\ref{alg:rl}, and defer the detailed implementation to Appendix~\ref{app:alg}. From a high-level overview, the algorithm first trains the score model to match the behavioral policy $\mu(\ab, \xb)$ for $K_1$ episodes, then trains the $Q$ function for $K_2$ episodes. The algorithm then performs the \emph{\algname} (QIPO) as follows: First, in Line~\ref{ln:sample}, using the current score function $\sbb^{\btheta}$, the algorithm samples several support actions $\ab_{ij}$ to estimate the expectation $\mathbb{E}_{\tilde \ab \sim \mu(\cdot | \xb)}$ in~\eqref{eq:qd}. In Line~\ref{ln:2}, the algorithm optimizes $\btheta$ with respect to the empirical estimation of $\cL_{\text{QD}}(\btheta)$. Therefore, assuming the score function $\sbb_t^{\btheta}(\cdot | \xb)$ generates the behavior distribution $\mu(\cdot | \xb)$ after the warm-up for $K_1$ episodes, using the sampled support action $\ab_{ij}$, the optimal score function for Line~\ref{ln:2} corresponds to the policy $\pi_1 = \mu(\cdot | \xb) \exp(\beta Q^{\bpsi}(\cdot | \xb))$. When the number of episodes $k = K_{\text{renew}} + 1$, the algorithm revisits Line~\ref{ln:sample} with the new score function and regenerates the support action set with the new policy $\pi_1$. Thus the target score function for Line~\ref{ln:2} to optimize is $\pi_2(\cdot | \xb) \propto \pi_1(\cdot | \xb) \exp(\beta Q^{\bpsi}(\cdot | \xb)) \propto \mu(\cdot | \xb) \exp(2\beta Q^{\bpsi}(\cdot | \xb))$. As a result, denoting $l = (k-1)\mod K_{\text{renew}}$, the policy $\pi_l$ generated by the score function $\sbb_t^{\btheta_k}$ is:
\begin{align}
    \pi_{l+1}(\ab| \xb) \propto \pi_{l}(\ab| \xb)\exp(\beta Q^{\bpsi}(\ab, \xb)) \propto \cdots \propto \mu(\ab| \xb)\exp((l+1)\beta Q^{\bpsi}(\ab, \xb)),\label{eq:iter}
\end{align}
which will implicitly increase the guidance scale $\beta$. 

\REV{Similar weighted policy optimization approaches have been applied in \citet{kang2024efficient, ding2024diffusion}. However, QIPO builds the relationship between the KL-regularized policy optimization so that QIPO can iteratively improve the policy as described in~\eqref{eq:iter}.} Compared with directly setting a large guidance scale $\beta$, QIPO makes the support action set $\tilde a$ to be more concentrated in the space with higher $Q$ values. As a result, QIPO learns a more robust Q-weighted score function $\sbb_t^{\btheta}$ compared to one-step Q-weighted diffusion with a larger $\beta$. Second, QGPO~\citep{lu2023contrastive} introduces a scaling factor $s$ and composes the score function as $\nabla_{\ab_t} \log \pi_t(\ab_t | \xb) = \nabla_{\ab_t} \log \mu_t(\ab_t | \xb) + s\nabla_{\ab_t} Q_t(\xb, \ab_t)$, where $Q_t$ is the intermediate $Q$ function, similar to the $\cE_t$ in Section~\ref{sec:bandit}. However, as we discussed in Section~\ref{sec:cfg} about the comparison of the CFG, since
\begin{align*}
    sQ_t(\ab_t, \xb) = -s\log \EE_{p_{0t}(\ab | \ab_t)}[\exp(\beta Q(\ab, \xb)] \neq \log \EE_{p_{0t}(\ab | \ab_t)}[\exp(s\beta Q(\ab, \xb)],
\end{align*}
using a guidance scale $s > 1$ does not guarantee generating a policy strictly regularized by the behavioral policy $\mu(\ab | \xb)$. In contrast, as~\eqref{eq:iter} suggests, our approach strictly follows the formulation $\pi(\ab | \xb) \propto \mu(\ab | \xb) \exp(s\beta Q^{\bpsi}(\ab | \xb))$ regularized by the behavioral policy. 


\subsection{Experiment Results}
We evaluate the performance of QIPO with flow matching and diffusion model on the D4RL tasks~\citep{fu2020d4rl} in this subsection. 

\paragraph{Experiment configurations} We implement the flow matching model \textbf{QIPO-OT} using the optimal-transport conditional velocity fields~\citep{lipman2022flow} and the diffusion model \textbf{QIPO-Diff} using VP-SDE~\citep{song2020score}. We use the same network structure as QGPO for a fair comparison of the efficiency with QGPO. We defer the detailed experiment configurations in Appendix~\ref{app:exp}.

We compare our results with other state-of-the-art benchmarks, including Diffusion-QL~\citep{wangdiffusion}, QGPO~\citep{lu2023contrastive}, IDQL~\citep{hansen2023idql}, SRPO~\citep{chenscore} and Guided Flows~\citep{zheng2023guided} and present the results in Table~\ref{tbl:rl_results}. As the experiment results suggests, QIPO-Diff and QIPO-OT consistently outperform the baselines in various tasks. We defer more baseline algorithms for comparison to Table~\ref{tbl:rl_results_app} in Appendix~\ref{app:exp0}. 

\begin{table}[t]
\centering
\caption{Evaluation numbers of D4RL benchmarks (normalized as suggested by \citet{fu2020d4rl}). We report mean $\pm$ standard deviation of algorithm performance across 8 random seeds. The highest performance is \colorbox{red!50}{\textbf{boldfaced highlighted.}} The performance within 5\% of the maximum absolute value in every individual task are \colorbox{red!30}{highlighted.}}
\label{tbl:rl_results}
\resizebox{1.0\textwidth}{!}{%
\begin{tabular}{llcccccccc}
\toprule
\multicolumn{1}{c}{\bf Dataset} & 
\multicolumn{1}{c}{\bf Environment} & 
\multicolumn{1}{c}{SfBC}  & 
\multicolumn{1}{c}{QGPO} & 
\multicolumn{1}{c}{IDQL} & 
\multicolumn{1}{c}{SRPO} &
\multicolumn{1}{c}{Guided Flows} &
\multicolumn{1}{c}{\textbf{QIPO-Diff (ours)}} &
\multicolumn{1}{c}{\textbf{QIPO-OT (ours)}} \\
\midrule
Medium-Expert & HalfCheetah & \RS$92.6$ & \RS$93.5$ & \RS$95.9$ & \RS$92.2$ & \RF$\textbf{97}$ & \RS$94.14 \pm 0.48$ & \RS$94.45 \pm 0.49$\\
Medium-Expert & Hopper & \RS$108.6$ & \RS$108.0$ & \RS$108.6$ & $100.1$ & $105$ & \RF$\textbf{112.12} \pm\textbf{0.42}$ & \RS $108.02 \pm 5.19$ \\
Medium-Expert & Walker2d & \RS$109.8$ & \RS$110.7$ & \RS$112.7$ & \RF$\textbf{114.0}$ & $94$ & \RS$110.14\pm 0.51$ & \RS$110.87 \pm 1.04$ \\
\midrule
Medium & HalfCheetah & $45.9$ & $54.1$ & $51.0$ & \RF$\textbf{60.4}$ & $49$ & $48.19\pm	0.20$ & $54.16 \pm 1.27$ \\
Medium & Hopper & $57.1$ & \RF$\textbf{98.0}$ & $65.4$ & \RS$95.5$ & $84$ & $89.53\pm 9.96$ & \RS$94.05 \pm 13.27$\\
Medium & Walker2d & $77.9$ & \RS$86.0$ & $82.5$ & \RS$84.4$ & $77$ & \RS$84.99\pm0.46$ & \RF$\textbf{87.61} \pm \textbf{1.46}$ \\
\midrule
Medium-Replay & HalfCheetah & $37.1$ & $47.6$ & $45.9$ & \RF$\textbf{51.4}$ & $42$ & $45.27\pm0.42$ & $48.04 \pm 0.79$ \\
Medium-Replay & Hopper & $86.2$ & \RS$96.9$ & $92.1$ & \RS$101.2$ & $89$ & \RS$101.23\pm0.47$ & \RF$\textbf{101.25} \pm \textbf{2.18}$\\
Medium-Replay & Walker2d & $65.1$ & $84.4$ & $85.1$ & $84.6$ & $78$ & $90.08\pm4.53$ & $78.57 \pm 26.09$ \\
\midrule
\multicolumn{2}{c}{\bf Average (Locomotion)} & $75.6$ & \RS$86.6$ & $82.1$ & \RS$87.1$ & $79.4$ & \RS$86.2$ & \RS$86.3$ \\
\midrule
Default & AntMaze-umaze & $92.0$ & \RS$96.4$ & 
\RS$94.0$ & \RS$97.1$ & - & \RF$\textbf{97.5} \pm \textbf{0.53}$ & \RS$93.62 \pm 7.05$ \\
Diverse & AntMaze-umaze & \RF$\textbf{85.3}$ & $74.4$ & $80.2$ & \RS$82.1$  & - & $73.88 \pm 6.42$ & $76.12 \pm 9.93$\\
\midrule  
Play & AntMaze-medium & \RS$81.3$ & \RS$83.6$ & \RF$\textbf{84.5}$ & \RS$80.7$  & - & \RS$82.75 \pm 3.24$ & \RS$80.00 \pm 13.66$\\
Diverse & AntMaze-medium & \RS$82.0$ & \RS$83.8$ & \RS$84.8$ & $75.0$  & - & \RS$\textbf{86.00} \pm \textbf{8.65}$ & \RF$\textbf{86.42} \pm \textbf{5.44}$\\
\midrule
Play & AntMaze-large & $59.3$ & $66.6$ & $63.5$ & $53.6$ & - & \RF$\textbf{73.25} \pm \textbf{10.90}$ & $55.5 \pm 29.39$\\
Diverse & AntMaze-large & $45.5$ & \RS$64.8$ & \RF$\textbf{67.9}$ & $53.6$ & - & $40.5 \pm 20.40$ & $32.13 \pm 23.16$\\
\midrule
\multicolumn{2}{c}{\bf Average (AntMaze)} & $74.2$ & \RS$78.3$ & \RF$\textbf{79.1}$ & $73.6$ & - & \RS$77.3$ & $71.96$ \\
\bottomrule
\end{tabular}
}
\vspace{-2em}
\end{table}

Among these benchmark algorithms, we would like to highlight the comparison between \emph{Guided Flows}~\citep{zheng2023guided} and \emph{Q-Guided Policy Optimization} (QGPO,~\citealt{lu2023contrastive}). Firstly, compared with Guided Flows~\citep{zheng2023guided}, QIPO-OT enjoys higher performance across many different tasks. This improved performance is due to the fact that energy-based guidance will provide more accurate guidance compared with classifier-free guidance, as discussed in Section~\ref{sec:cfg}.

\begin{wraptable}[11]{r}{0.43\textwidth}
\vspace{-1em}
\centering
\caption{Comparison of the running time for action generation between QGPO~\citep{lu2023contrastive} and QIPO, averaged over 1500 runs. The percentage reduction in time compared to QGPO is also reported.} \label{tbl:time} \vspace{-.3em}
\begin{tabular}{c|c}
\toprule
Method & Time (ms) \\ \midrule
QGPO~\citep{lu2023contrastive} & 75.05  \\ 
\textbf{QIPO-OT (ours)} & \textbf{27.26 \tiny{(-63.68\%)}} \\
\textbf{QIPO-Diff (ours)} & 55.86 \tiny{(-25.57\%)}\\
\bottomrule
\end{tabular}
\end{wraptable}
Secondly, compared with QGPO~\citep{lu2023contrastive}, QIPO-Diff directly learns the energy-guided score function without estimating the intermediate energy function. As a result, QIPO does not require the back-propagation to calculate the gradient of the intermediate energy function $\nabla_{\xb} \cE_t(\xb)$ and therefore enjoys a faster sampling speed compared with QGPO when using the same score network, as presented in Table~\ref{tbl:time}. In addition, since the QIPO guarantees the strict formulation $\pi(\ab | \xb) \propto \mu(\ab | \xb) \exp(s\beta Q^{\bpsi}(\ab | \xb))$ as shown in~\eqref{eq:iter}, QIPO enjoys better performance compared with QGPO on various tasks. We defer more detailed discussions on the advantage of QIPO-OT to Appendix~\ref{app:time}.

\paragraph{Ablation Study.} We conduct ablation study on changing the support action set $M$, policy renew period $K_{\text{renew}}$ and the guidance scale $\beta$. We defer the detailed ablation study result in Appendix~\ref{app:ablate}.

\section{Conclusion and Future Work}
In this paper, we explored the energy guidance in both flow matching and diffusion models. We introduced Energy-weighted Flow Matching (EFM) and Energy-weighted Diffusion (ED) by incorporating the energy guidance directly into these generative models without relying on auxiliary models or post-processing steps. We applied the proposed methods in offline RL and introduced Q-weighted Iterative Policy Optimization (QIPO), which enjoys competitive empirical performance on the D4RL benchmark. To the best of our knowledge, this work is the first to present an energy-guided flow matching model and the first algorithm to \emph{directly} learn an energy-guided diffusion model. While the current QIPO focuses on offline RL without interacting with the environment, we leave the extension to online RL, where guidance from the $Q$-function can be updated through online interactions

\clearpage
\section*{Ethics Statement}
This paper focuses on the methodology of generative models and strictly adheres to ethical guidelines and standards. As with other deep generative modeling techniques, energy-weighted diffusion and flow matching model have the potential to generate harmful content, such as ``deepfakes'', and may reflect or amplify unwanted social biases present in the training data. However, these broader societal impacts are not directly relevant to the specific contributions of our work. Therefore, we do not believe any unique ethical concerns or negative social impacts arise from this paper.
\section*{Reproducibility Statement}
We provide detailed descriptions of our experimental setups, dataset construction processes, and code implementation in the supplementary materials to ensure the reproducibility of the Energy-weighted Diffusion and QIPO methods. The full experimental configurations are presented in Appendix~\ref{app:exp}. To further support research in the community, we will release the model checkpoints following the de-anonymization process.

\section*{Acknowledgments}
We thank the anonymous reviewers for their helpful comments. Partial of this work was done while WZ was a PhD student at UCLA. WZ and QG are supported in part by the NSF grants DMS-2323113, CPS-2312094, IIS-2403400 and the research fund from UCLA-Amazon Science Hub. WZ is also supported by UCLA dissertation year fellowship. Part of this work is supported by the Google Cloud Research Credits program with the award GCP376319164. The views and conclusions contained in this paper are those of the authors and should not be interpreted as representing any funding agencies.
\bibliography{iclr2025_conference}
\bibliographystyle{iclr2025_conference}
\clearpage
\appendix
\REV{
\section{Comparison with other Diffusion-based RL}
Besides the aforementioned works on RL, there is another series of work on using diffusion models in RL by considering the reward-weighted regression~\citep{peters2007reinforcement} where the score-matching loss function formulates $\EE[f(Q(s, a)) \|\bepsilon_{\btheta}(a'; s, t) - \bepsilon)\|_2^2]$. In particular, EDP \citep{kang2024efficient} considers the $f(Q(s, a))$ in the format of CRR or IQL formulated as $f_{\text{CRR}} = \exp [(Q_\phi(s, a) - \mathbb{E}_{a' \sim \hat{\pi}(a'|s)} Q(s, a'))/{\tau_{\text{CRR}}}]$ or $f_{\text{IQL}} = \exp [ (Q_\phi(s, a) - V_\psi(s))/{\tau_{\text{IQL}}}]$ and QVPO~\citep{ding2024diffusion} considers a more aggressive exploration in online RL with $f(Q(s, a)) = Q(s, a)$. Compared with QIPO, $f_{\text{CRR}}$ assumes $\hat \pi(a' | s) = \cN(\hat a_0 | s)$ with $\hat a_0 = \sqrt{1 / \bar{\alpha}^k} a^k - \sqrt{1 / \bar{\alpha}^k - 1}\bepsilon_{\btheta}(a^k, k; s)$. This oversimplifies the policy as a Gaussian policy and requires the score-network $\bepsilon_{\btheta}$ for calculate the weight. $f_{\text{IQL}}$ requires an additional value network for $V_{\psi}(s)$. Most importantly, both EDP~\citep{kang2024efficient} and QVPO~\citep{ding2024diffusion}, are designed to optimize the diffusion policy to maximize the $Q$ function. However, in the energy-guided generation for $\pi(a | s) \propto \mu(a | s)\exp(Q(s, a))$, where both maximizing the $Q$ value and aligning with the $Q$ function are equally important, these works cannot align with the original distribution $\mu$ and therefore cannot be applied to guide broader generative models applications like image synthesis or molecular conformation generation.
}
\section{Theoretical Proof of the Results}
We provide the theoretical proof for all theorems, corollaries and lemmas in this section.
\subsection{Proof of Theorem~\ref{thm:main}}
We start by the following lemma which provides the closed-form expression of the distribution $q_t(\xb) \propto p_t(\xb)\exp(- \cE_t(\xb))$
\begin{lemma}\label{lm:closed-form}
    The closed-form expression of $q_t(\xb) \propto p_t(\xb)\exp(-\cE_t(\xb))$ is 
    \begin{align}
        q_t(\xb) = \frac{p_t(\xb)\exp(-\cE_t(\xb))}{\EE_{\xb_0}[\exp(-\beta \cE(\xb_0))]},\label{eq:close-form-final}
    \end{align}
    where the expectation is taken with respect to the data distribution $p_0(\xb_0)$. The denominator $\EE_{\xb_0}[\exp(-\beta \cE(\xb_0))]$ is a constant with respect to $\xb$ and $t$. 
\end{lemma}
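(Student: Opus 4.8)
The plan is to compute the normalizing constant of $q_t$ explicitly and show it equals $\EE_{p_0(\xb_0)}[\exp(-\beta\cE(\xb_0))]$, which in particular is independent of both $\xb$ and $t$. Since $q_t(\xb)\propto p_t(\xb)\exp(-\cE_t(\xb))$, I write $q_t(\xb) = p_t(\xb)\exp(-\cE_t(\xb))/Z_t$, where $Z_t = \int_{\xb} p_t(\xb)\exp(-\cE_t(\xb))\,\mathrm d\xb$ is the (a priori $t$-dependent) normalizer, so that the whole task reduces to evaluating $Z_t$.

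The first step is to substitute the definition of the intermediate energy from~\eqref{eq:inter-energy}, which gives $\exp(-\cE_t(\xb)) = \EE_{p_{0t}(\xb_0|\xb)}[\exp(-\beta\cE(\xb_0))] = \int_{\xb_0} p_{0t}(\xb_0|\xb)\exp(-\beta\cE(\xb_0))\,\mathrm d\xb_0$. Plugging this into $Z_t$ produces a double integral over $(\xb,\xb_0)$ with integrand $p_t(\xb)\,p_{0t}(\xb_0|\xb)\exp(-\beta\cE(\xb_0))$. The key step is to recognize the Bayes/joint factorization $p_t(\xb)\,p_{0t}(\xb_0|\xb) = p_0(\xb_0)\,p_{t0}(\xb|\xb_0)$, i.e. both sides equal the joint density of $(\xb_0,\xb)$ along the forward process. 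Rewriting the integrand accordingly and interchanging the order of integration (justified by Tonelli's theorem, since every factor is nonnegative), I can first integrate out $\xb$ using $\int_{\xb} p_{t0}(\xb|\xb_0)\,\mathrm d\xb = 1$, which leaves exactly $Z_t = \int_{\xb_0} p_0(\xb_0)\exp(-\beta\cE(\xb_0))\,\mathrm d\xb_0 = \EE_{p_0(\xb_0)}[\exp(-\beta\cE(\xb_0))]$.

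This calculation is essentially a routine Tonelli/Fubini argument, so there is no serious obstacle; the one point worth stating carefully is that the resulting expression no longer depends on $t$, because the entire $t$-dependence is absorbed by the normalized conditional $p_{t0}(\cdot|\xb_0)$ once $\xb$ is integrated out. This $t$-independence is precisely the claim that the denominator is constant in both $\xb$ and $t$, and it is what makes the closed form~\eqref{eq:close-form-final} useful downstream (for instance as the normalizing constant appearing in the importance-sampling interpretation of $\cL_{\text{CEFM}}$).
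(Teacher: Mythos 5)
Your proposal is correct and follows essentially the same route as the paper's proof of Lemma~\ref{lm:closed-form}: expand $\exp(-\cE_t(\xb))$ via the definition~\eqref{eq:inter-energy}, apply the factorization $p_t(\xb)\,p_{0t}(\xb_0|\xb) = p_0(\xb_0)\,p_{t0}(\xb|\xb_0)$, and integrate out $\xb$ using $\int_{\xb} p_{t0}(\xb|\xb_0)\,\mathrm d\xb = 1$. The only difference is that you explicitly invoke Tonelli's theorem to justify swapping the order of integration, which the paper does silently; this is a harmless (indeed welcome) addition of rigor.
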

\begin{proof}
It is easy to verify that
\begin{align}
    \textstyle{\int}_{\xb} p_t(\xb) \exp(-\cE_t(\xb)) \mathrm{d}\xb &= \textstyle{\int}_{\xb} p_t(\xb) \textstyle{\int}_{\xb_0} p_{0t}(\xb_0 | \xb) \exp(-\beta \cE(\xb_0)) \mathrm d \xb_0 \mathrm d \xb \notag\\
    &= \textstyle{\int}_{\xb_0}\textstyle{\int}_{\xb} p_0(\xb_0) p_{t0}(\xb_t | \xb_0) \exp(-\beta \cE(\xb_0)) \mathrm d \xb_0 \mathrm d \xb \notag \\
    &= \textstyle{\int}_{\xb_0}p_0(\xb_0) \exp(-\beta \cE(\xb_0))\mathrm d \xb_0 \notag  \\
    &= \EE_{\xb_0}[\exp(-\beta \cE(\xb_0))], \label{eq:close-form}
\end{align}
where the second equation is due to the fact that $p_t(\xb)p_{0t}(\xb_0 | \xb) = p(\xb, \xb_0) = p_0(\xb_0)p_{0t}(\xb_0 | \xb)$; and the third equation is due to the fact that $\textstyle{\int}_{\xb} p_{t0}(\xb| \xb_0) \mathrm d \xb = 1$. Using~\ref{eq:close-form} to normalize $q_t(\xb) \propto p_t(\xb) \exp(-\cE_t(\xb))$ yields the closed-form solution presented in~\eqref{eq:close-form-final}.
\end{proof}

Then we are ready to proof Theorem~\ref{thm:main}.
\begin{proof}[Proof of Theorem~\ref{thm:main}]
We start by considering the condition flow $\ub_{t0}(\xb | \xb_0)$ that generates the conditional distribution sequence $p_{t0}(\xb | \xb_0)$. According to the continuity equation~\eqref{eq:continuity}, we have 
\begin{align}
    \frac{\mathrm d p_{t0}(\xb | \xb_0)}{\mathrm dt} = -\div{p_{t0}(\xb | \xb_0) \ub_{t0}(\xb | \xb_0)}.
\end{align}
Considering the dynamic of probability distribution $q_t(\xb)$, Lemma~\ref{lm:closed-form} yields
\begin{align}
    \frac{\mathrm d q_t(\xb)}{\mathrm dt} &=
    \frac{\mathrm d}{\mathrm dt} \frac{p_t(\xb)\exp(-\cE_t(\xb))}{\EE_{\xb_0}[\exp(-\beta \cE(\xb_0))]}\notag  \\
    &= \frac{\mathrm d}{\mathrm dt} \frac{p_t(\xb)\textstyle{\int}_{\xb_0} p_{0t}(\xb_0 | \xb)\exp(-\beta \cE(\xb_0))\mathrm d \xb_0}{\EE_{\xb_0}[\exp(-\beta \cE(\xb_0))]} \notag \\
    &= \frac{\mathrm d}{\mathrm dt}\frac{\textstyle{\int}_{\xb_0} p(\xb_0, \xb)\exp(-\beta \cE(\xb_0))\mathrm d \xb_0}{\EE_{\xb_0}[\exp(-\beta \cE(\xb_0))]} \notag \\
    &= \frac{\mathrm d}{\mathrm dt}\frac{\textstyle{\int}_{\xb_0} p_{t0}(\xb | \xb_0)p_0(\xb_0)\exp(-\beta \cE(\xb_0))\mathrm d \xb_0}{\EE_{\xb_0}[\exp(-\beta \cE(\xb_0))]} \notag \\
    &= \frac1{\EE_{\xb_0}[\exp(-\beta \cE(\xb_0))]} \int_{\xb_0} \frac{\mathrm d}{\mathrm dt}p_{t0}(\xb | \xb_0)p_0(\xb_0)\exp(-\beta \cE(\xb_0))\mathrm d \xb_0\label{eq:thm-step-1},
\end{align}
where the second equation is due to the definition of $\cE_t(\xb)$; the third and the forth equation is due to the Bayesian equality $p_t(\xb) p_{0t}(\xb_0 | \xb) = p(\xb_0, \xb) = p_0(\xb_0) p_{t0}(\xb | \xb_0)$; and the last equation is due to the fact that the denominator $\EE_{\xb_0} [-\exp(\beta \cE(\xb_0))]$ is a constant with respect to time $t$. Plugging the definition of continuity equation~\eqref{eq:continuity} into~\eqref{eq:thm-step-1} suggests that  
\begin{align}
    \frac{\mathrm d q_t(\xb)}{\mathrm dt} &= \frac1{\EE_{\xb_0}[\exp(-\beta \cE(\xb_0))]} {\int}_{\xb_0} \frac{\mathrm d}{\mathrm dt}p_{t0}(\xb | \xb_0)p_0(\xb_0)\exp(-\beta \cE(\xb_0))\mathrm d \xb_0\notag \\
    &= -\frac{\textstyle{\int}_{\xb_0} \div{p_{t0}(\xb | \xb_0) \ub_{t0}(\xb | \xb_0)}p_0(\xb_0)\exp(-\beta \cE(\xb_0))\mathrm d \xb_0}{\EE_{\xb_0}[\exp(-\beta \cE(\xb_0))]} \notag \\
    &= -\frac{\div{\textstyle{\int}_{\xb_0} p_{t0}(\xb | \xb_0) \ub_{t0}(\xb | \xb_0)p_0(\xb_0)\exp(-\beta \cE(\xb_0))\mathrm d \xb_0}}{\EE_{\xb_0}[\exp(-\beta \cE(\xb_0))]} \notag \\
    &= -\div{\frac{p_t(\xb) \exp(-\cE_t(\xb))}{\EE_{\xb_0}[\exp(-\beta \cE(\xb_0))]} {\int}_{\xb_0} p_{0t}(\xb_0 | \xb) \ub_{t0}(\xb | \xb_0) \frac{\exp(-\beta \cE(\xb_0))}{\exp(-\cE_t(\xb))}\mathrm d\xb_0}\label{eq:thm-step-2}\\
    &= \REV{-\mathrm{div} \cdot [q_t(\xb) \hat \ub_t(\xb)]} \notag, 
\end{align}
where the third and the fourth equation is due to the linearity of the divergence operator $\div{\cdot}$. \REV{The fifth equality is due to the definition of $q_t(\xb)$ and $\hat \ub_t(\xb)$}. Thus we conclude our proof by showing $\hat \ub_t(\xb)$ defined by~\eqref{eq:thm-step-2} can generate the guided distribution sequence $q_t(\xb)$. 
\end{proof}

\subsection{Proof of Theorem~\ref{thm:cond}}
\begin{proof}[Proof of Theorem~\ref{thm:cond}]
We start by fixing time $t$ and writing the gradient of $\|\vb_t^{\btheta}(\xb) - \hat \ub_t(\xb)\|_2^2$ with respect to $\btheta$ as 
\begin{align}
    \nabla_{\btheta} \|\vb_t^{\btheta}(\xb) - \hat \ub_t(\xb)\|_2^2 &= \nabla_{\btheta} \left\|\vb_t^{\btheta}(\xb)\right\|_2^2 - 2\left\langle \nabla_{\btheta} \vb_t^{\btheta}(\xb), \hat \ub_t(\xb)\right\rangle + \nabla_{\btheta}\|\hat \ub_t(\xb)\|_2^2 \notag \\
    &= \nabla_{\btheta} \left\|\vb_t^{\btheta}(\xb)\right\|_2^2 - 2\left\langle \nabla_{\btheta} \vb_t^{\btheta}(\xb), \hat \ub_t(\xb)\right\rangle,
\end{align}
where we drop the third term in the last equation due to the fact that $\nabla_{\btheta}\|\hat \ub_t(\xb)\|_2^2 = 0$. Consider the gradient of loss $\cL_{\mathrm{CEFM}}(\btheta)$, we have
\begin{align}
&\nabla_{\btheta}\cL_{\mathrm{CEFM}}(\btheta) \notag \\
&= \nabla_{\btheta} {\iint}_{\xb, \xb_0} p_0(\xb_0) p_{t0}(\xb| \xb_0) \frac{\exp(-\beta\cE(\xb_0))}{\EE_{p_0(\xb_0)} [\exp(-\beta \cE(\xb_0))]}\left\|\vb_t^{\btheta}(\xb) - \ub_{t0}(\xb | \xb_0) \right\|_2^2 \mathrm d \xb \mathrm d\xb_0 \notag \\
&= {\iint}_{\xb, \xb_0} p_t(\xb) p_{0t}(\xb_0| \xb) \frac{\exp(-\beta\cE(\xb_0))}{\EE_{p_0(\xb_0)} [\exp(-\beta \cE(\xb_0))]} \nabla_{\btheta}\left\|\vb_t^{\btheta}(\xb)\right\|_2^2 \mathrm d \xb \mathrm d \xb_0\notag \\
&\quad - 2{\iint}_{\xb, \xb_0} p_t(\xb) p_{0t}(\xb_0| \xb) \frac{\exp(-\beta\cE(\xb_0))}{\EE_{p_0(\xb_0)} [\exp(-\beta \cE(\xb_0))]} \left\langle \nabla_{\btheta} \vb_t^{\btheta}(\xb), \ub_{t0}(\xb | \xb_0)\right\rangle \mathrm d \xb \mathrm d \xb_0, \label{eq:cond-step-1}
\end{align}
where the second equation leverages the Bayesian rule $p_t(\xb) p_{0t}(\xb | \xb_0) = p(\xb, \xb_0) = p_0(\xb_0)p_{t0}(\xb | \xb_0)$ and the fact that $\nabla_{\btheta} \|\ub_{t0}(\xb | \xb_0)\|_2^2 = 0$. Noticing the fact that 
\begin{align}
    \exp(-\cE_t(\xb)) &= \EE_{p_{0t}(\xb_0 | \xb_t)}[\exp(-\beta \cE(\xb_0))] = \textstyle{\int}_{\xb_0} p_{0t}(\xb_0 | \xb) \exp(-\beta \cE(\xb_0))\mathrm d \xb_0, \label{eq:e1}\\
    \EE_{p_t(\xb)}[\exp(-\cE_t(\xb))] &= \textstyle{\iint}_{\xb, \xb_0} p_t(\xb) p_{0t}(\xb | \xb) \exp(-\beta \cE(\xb_0)) \mathrm d \xb_0 \mathrm d\xb = \EE_{p_0(\xb_0)}[\exp(-\beta \cE(\xb_0))]. \label{eq:e2}
\end{align}
By reorganizing the integral, \eqref{eq:cond-step-1} yields
\begin{align}
&\nabla_{\btheta}\cL_{\mathrm{CEFM}}(\btheta) \notag\\
&= {\int}_{\xb} p_t(\xb) \frac{\exp(- \cE_t(\xb))}{\EE_{p_t(\xb)}[\exp(-\cE_t(\xb))]}\nabla_{\btheta}\left\|\vb_t^{\btheta}(\xb)\right\|_2^2\mathrm d \xb \notag \\
&\ - 2 {\int}_{\xb}  \frac{p_t(\xb)\exp(- \cE_t(\xb))}{\EE_{p_t(\xb)}[\exp(-\cE_t(\xb))]} \left \langle\nabla_{\btheta} \vb_t^{\btheta}(\xb), {\int}_{\xb_0} p_{0t}(\xb_0 | \xb)\frac{\exp(-\beta\cE(\xb_0))}{\exp(-\cE_t(\xb))} \ub_{t0}(\xb | \xb_0)\mathrm d \xb_0 \right \rangle \mathrm d\xb \notag \\
&= {\int}_{\xb} p_t(\xb) \frac{\exp(- \cE_t(\xb))}{\EE_{p_t(\xb)}[\exp(-\cE_t(\xb))]}\left(\nabla_{\btheta}\|\vb_t^{\btheta}(\xb)\|_2^2 - 2 \langle \nabla_{\btheta}\vb_t^{\btheta}(\xb), \hat \ub_t(\xb)\rangle\right) \mathrm d \xb \notag \\
&= {\int}_{\xb} p_t(\xb) \frac{\exp(- \cE_t(\xb))}{\EE_{p_t(\xb)}[\exp(-\cE_t(\xb))]}\nabla_{\btheta} \|\vb_t^{\btheta}(\xb) - \hat \ub_t(\xb)\|_2^2 \mathrm d \xb \notag \\
&= \nabla_{\btheta} \cL_{\mathrm{EFM}}(\btheta),\label{eq:cond-2} 
\end{align}
where the second equation is the combination of~\eqref{eq:e2} and the definition of $\hat \ub_t(\xb)$ declared in~\eqref{eq:energy-guided-flow} in Theorem~\ref{thm:main}. Thus we complete the proof.
\end{proof}

\subsection{Proof of Corollary~\ref{col:gaussian-path}}
\begin{proof}[Proof of Corollary~\ref{col:gaussian-path}]
We start by denoting $f_t = \dot{\mu_t} / \mu_t$ and $g_t^2 = (\dot{\mu_t} \sigma_t - \mu_t \dot{\sigma_t}) \sigma_t / \mu_t$. According to Lemma~\ref{lm:diff-flow-connection}, we have 
\begin{align}
    \hat \ub_t(\xb) = f_t\xb + g_t^2 \nabla_{\xb} \log q_t(\xb), \quad \ub_{t0}(\xb | \xb_0) = f_t\xb + g_t^2 \nabla_{\xb} \log p_{t0}(\xb | \xb_0), \label{eq:fg}
\end{align}
then for the RHS in~\eqref{eq:energy-diffusion},
\begin{align}
    &{\int}_{\xb_0} p_{0t}(\xb_0 | \xb) \ub_{t0}(\xb | \xb_0) \frac{\exp(-\beta \cE(\xb_0))}{\exp(-\cE_t(\xb))} \mathrm d \xb_0 \notag \\
    =& {\int}_{\xb_0} p_{0t}(\xb_0 | \xb) \left(f_t\xb + g_t^2 \nabla_{\xb} \log p_{t0}(\xb | \xb_0)\right) \frac{\exp(-\beta \cE(\xb_0))}{\exp(-\cE_t(\xb))} \mathrm d \xb_0 \notag \\
    =& f_t\xb {\int}_{\xb_0} p_{0t}(\xb_0 | \xb) \frac{\exp(-\beta \cE(\xb_0))}{\EE_{p_{0t}} [\exp(-\beta \cE(\xb_0)]} \mathrm d \xb_0 \notag \\
    &\quad + g_t^2 {\int}_{\xb_0} p_{0t}(\xb_0 | \xb) \nabla_{\xb} \log p_{t0}(\xb | \xb_0)\frac{\exp(-\beta \cE(\xb_0))}{\exp(-\cE_t(\xb))} \mathrm d \xb_0 \notag \\
    =& f_t\xb + g_t^2 {\int}_{\xb_0} p_{0t}(\xb_0 | \xb) \nabla_{\xb} \log p_{t0}(\xb | \xb_0)\frac{\exp(-\beta \cE(\xb_0))}{\exp(-\cE_t(\xb))} \mathrm d \xb_0, \label{eq:col1-1}
\end{align}
where the first equation is due to equation~\eqref{eq:fg}. The second equation is due to the linearity of the integral and the third equation utilizes the fact that $\textstyle{\int}_{\xb_0} p_{0t}(\xb_0 | \xb) \exp(-\beta \cE(\xb_0))\mathrm d \xb_0 = \EE_{p_{0t}(\xb_0 | \xb)}[\exp(-\beta\cE(\xb_0))] = \exp(-\cE_t(\xb))$. Plugging~\eqref{eq:col1-1} back into Theorem~\ref{thm:main} and utilizing~\eqref{eq:fg} yields
\begin{align}
    f_t\xb + g_t^2\nabla_{\xb} \log q_t(\xb) = \hat \ub(\xb) = f_t\xb + g_t^2 {\int}_{\xb_0} p_{0t}(\xb_0 | \xb) \nabla_{\xb} \log p_{t0}(\xb | \xb_0)\frac{\exp(-\beta \cE(\xb_0))}{\exp(-\cE_t(\xb))} \mathrm d \xb_0, \notag 
\end{align}
which completes by canceling the $f_t\xb$ term on both side of the equation. 
\end{proof}
\subsection{Proof of Corollary~\ref{col:cwd}}
\begin{proof}
The proof is similar with the proof of Theorem~\ref{thm:cond} by replacing $\vb_t^{\btheta}$ with $\sbb_t^{\btheta}$, replacing $\ub_{t0}(\xb | \xb_0)$ with $\nabla_{\xb} \log p_{t0}(\xb | \xb_0)$. Following the same rule in~\eqref{eq:cond-step-1} and~\eqref{eq:cond-2} yields 
\begin{align*}
&\nabla_{\btheta}\cL_{\mathrm{CED}}(\btheta) \notag \\
&= \nabla_{\btheta} {\iint}_{\xb, \xb_0} p_0(\xb_0) p_{t0}(\xb| \xb_0) \frac{\exp(-\beta\cE(\xb_0))}{\EE_{p_0(\xb_0)} [\exp(-\beta \cE(\xb_0))]}\left\|\sbb_t^{\btheta}(\xb) - \nabla_{\xb} \log p_{t0}(\xb | \xb_0) \right\|_2^2 \mathrm d \xb \mathrm d\xb_0 \notag \\
&= {\iint}_{\xb, \xb_0} p_t(\xb) p_{0t}(\xb_0| \xb) \frac{\exp(-\beta\cE(\xb_0))}{\EE_{p_0(\xb_0)} [\exp(-\beta \cE(\xb_0))]} \nabla_{\btheta}\left\|\sbb_t^{\btheta}(\xb)\right\|_2^2 \mathrm d \xb \mathrm d \xb_0\notag \\
&\ - 2{\iint}_{\xb, \xb_0} p_t(\xb) p_{0t}(\xb_0| \xb) \frac{\exp(-\beta\cE(\xb_0))}{\EE_{p_0(\xb_0)} [\exp(-\beta \cE(\xb_0))]} \left\langle \nabla_{\btheta} \sbb_t^{\btheta}(\xb), \nabla_{\xb} \log p_{t0}(\xb | \xb_0)\right\rangle \mathrm d \xb \mathrm d \xb_0\\
&= {\int}_{\xb} p_t(\xb) \frac{\exp(- \cE_t(\xb))}{\EE_{p_t(\xb)}[\exp(-\cE_t(\xb))]}\nabla_{\btheta}\left\|\sbb_t^{\btheta}(\xb)\right\|_2^2\mathrm d \xb \notag \\
&\ - 2 {\int}_{\xb}  \frac{p_t(\xb)\exp(- \cE_t(\xb))}{\EE_{p_t(\xb)}[\exp(-\cE_t(\xb))]} \left \langle\nabla_{\btheta} \sbb_t^{\btheta}(\xb), {\int}_{\xb_0} p_{0t}(\xb_0 | \xb)\frac{\exp(-\beta\cE(\xb_0))}{\exp(-\cE_t(\xb))} \nabla_{\xb} \log p_{t0}(\xb | \xb_0)\mathrm d \xb_0 \right \rangle \mathrm d\xb \notag \\
&= {\int}_{\xb} p_t(\xb) \frac{\exp(- \cE_t(\xb))}{\EE_{p_t(\xb)}[\exp(-\cE_t(\xb))]}\left(\nabla_{\btheta}\|\vb_t^{\btheta}(\xb)\|_2^2 - 2 \langle \nabla_{\btheta}\sbb_t^{\btheta}(\xb), \nabla_{\xb} \log q_t(\xb)\rangle\right) \mathrm d \xb \notag \\
&= {\int}_{\xb} p_t(\xb) \frac{\exp(- \cE_t(\xb))}{\EE_{p_t(\xb)}[\exp(-\cE_t(\xb))]}\nabla_{\btheta} \|\sbb_t^{\btheta}(\xb) - \nabla_{\xb} \log q_t(\xb)\|_2^2 \mathrm d \xb \notag \\
&= \nabla_{\btheta} \cL_{\text{ED}}(\btheta)
\end{align*}
\end{proof}
\subsection{Proof of Lemma~\ref{lm:compare}}
We present the proof for the classifier guidance (CG) and classifier-free guidance (CFG) presented in~\eqref{eq:CG1} in this subsection. The score function for Contrastive Energy Prediction (CEP) and Energy-weighted Diffusion (ED) presented in~\eqref{eq:CG2} has been extensively discussed in Corollary~\ref{col:gaussian-path}.
\begin{proof}[Proof of~\eqref{eq:CG1} in Lemma~\ref{lm:compare}]
    In order to sample from the guided distribution $q$, CG and CFG estimate the score function $\nabla_{\xb} q_t(\xb)$ as
\begin{align}
    \nabla_{\xb} \log q_t(\xb) &= \nabla_{\xb} \log p_t(\xb) + \beta \nabla_{\xb} \log p_t(c | \xb) = (1 - \beta) \nabla_{\xb} \log p_t(\xb) + \beta \nabla_{\xb} \log p_t(\xb | c), \label{eq:classifier-based}
\end{align}
where the second equation is given by the classifier free guidance. Applying the Bayesian rule, $p_t(c | \xb)$ can be decomposed by
\begin{align}
    p_t(c | \xb) = \textstyle{\int}_{\xb_1} p(\xb_1 | \xb) p(c | \xb_1) \mathrm d \xb_1 = \EE_{p_{1t}(\xb_1 | \xb)}p(c | \xb_1). \label{eq:b}
\end{align}
Plugging~\eqref{eq:b} into~\eqref{eq:classifier-based}, the score function $\nabla_{\xb} \log q_t(\xb)$ is written by
\begin{align*}
    \nabla_{\xb} \log q_t(\xb) &= \nabla_{\xb} \log p_t(\xb) + \beta \nabla_{\xb} \log \EE_{p_{1t}(\xb_1 | \xb)}p(c | \xb_1) \\
    &= \nabla_{\xb} \log p_t(\xb) + \nabla_{\xb} \left[\log \EE_{p_{1t}(\xb_1 | \xb)}p(c | \xb_1)\right]^\beta
\end{align*}
where the last equation absorbs the $\beta$ into logarithmic operator, which concludes the proof. 
\end{proof}
\subsection{Proof of Lemma~\ref{lm:eq}}
\begin{proof}
Plugging $\cE(\xb_0)) = - \log p(c | \xb_0)$ and $\beta = 1$ into the definition of $\cL_{\text{CED}}$ suggests that 
\begin{align}
    \cL_{\text{CED}}(\btheta) &= \EE_{t, \xb, \xb_0} \left[\frac{p(c | \xb_0)}{\EE_{p_0(\xb_0)}[p(c | \xb_0)]} \big\|\sbb_t^{\btheta}(\xb) - \nabla_{\xb} \log p_{t0}(\xb | \xb_0)\big\|_2^2\right] \notag \\
    &= {\iiint}_{t, \xb, \xb_0} \frac{\lambda(t) p(c | \xb_0) p_0(\xb_0) p_{t0}(\xb | \xb_0)}{p(c)}\big\|\sbb_t^{\btheta}(\xb) - \nabla_{\xb} \log p_{t0}(\xb | \xb_0)\big\|_2^2 \mathrm d t \mathrm d \xb \mathrm d \xb_0 \notag\\
    &= {\iiint}_{t, \xb, \xb_0} \lambda(t) p(\xb_0| c) p_{t0}(\xb | \xb_0)\big\|\sbb_t^{\btheta}(\xb) - \nabla_{\xb} \log p_{t0}(\xb | \xb_0)\big\|_2^2 \mathrm d t \mathrm d \xb \mathrm d \xb_0 \notag \\
    &= \EE_{t, \xb_0, \xb | c} \Big[\big\|\sbb_t^{\btheta}(\xb) - \nabla_{\xb} \log p_{t0}(\xb | \xb_0)\big\|_2^2\Big], \label{eq:g1}
\end{align}
where the second equation expands the expectation and write $\EE_{p_0(\xb)}[p(c | \xb_0)] = p(c)$ due to Bayesian rule. The third equation is due to $p(c | \xb_0) p_0(\xb_0)  / p(c) = p(\xb_0 | c)$. The fourth equation rewrites the integral back as expectation. The optimal solution for~\eqref{eq:g1} is $\sbb_t^{\btheta}(\xb) = \nabla_{\xb} \log p_t(\xb | c)$, given the analysis for the standard score-matching process~\citep{song2020score}.
\end{proof}
\section{Comparison between Classifier Guidance and Energy Guidance}\label{app:bandit}

We visualize the differences between the classifier-guidance and energy guidance in this section. 

\paragraph{Experiment setup.} We follow the setting in~\citet{lu2023contrastive} to sample the data distribution $p(\xb)$ and energy function $\cE(\xb)$. We shift the energy function by $\cE(\xb) \leftarrow \cE(\xb) - \min_{\xb} \cE(\xb)$ so that all energy $\cE(\xb) \ge 0$. We note that this translation does not change the distribution of $q(\xb) \propto p(\xb) \exp(-\beta \cE(\xb))$. According to $\cE(\xb) = -\log p(c | \xb)$ discussed in Section~\ref{sec:cfg}, we generate the binary class $c$ such that $p(c = 1 | \xb) = \exp(-\cE(\xb))$. The ground truth distribution of $q(\xb) \propto p(\xb) \exp(-\beta \cE(\xb)) = p(\xb) p^\beta(c = 1 | \xb)$ with different $\beta$, the posterior distribution $p(\xb | c = 0)$, $p(\xb | c = 1)$ is presented in Figure~\ref{fig:gt-app}. 
\begin{figure}[htbp]
\centering
\includegraphics[width=\linewidth]{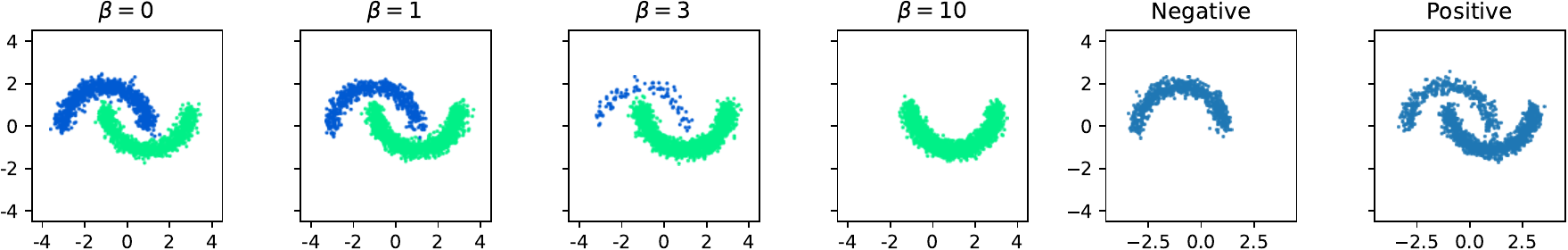}
\includegraphics[width=\linewidth]{fig/scatter_8gaussians-cropped.pdf}
\includegraphics[width=\linewidth]{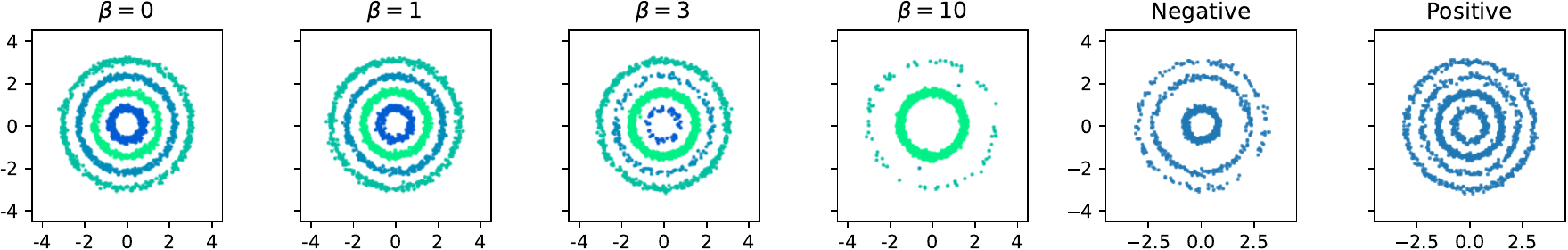}
\includegraphics[width=\linewidth]{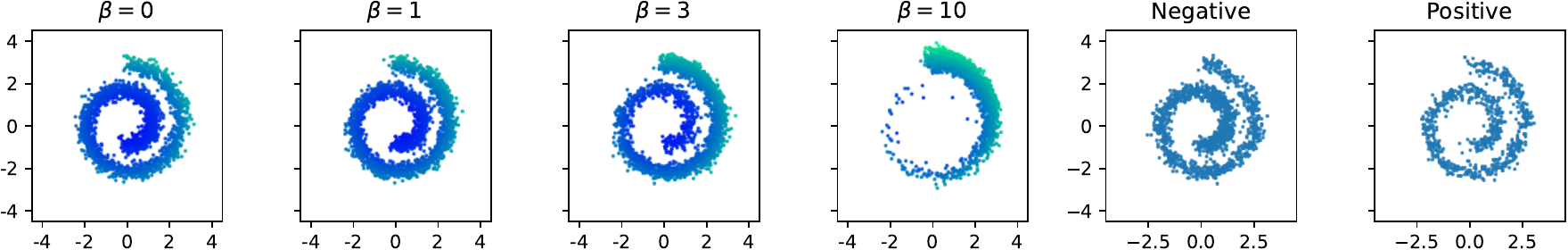}
\includegraphics[width=\linewidth]{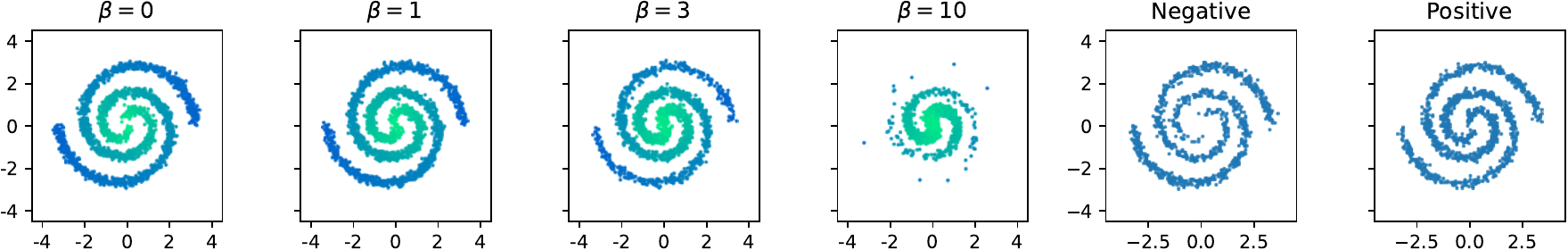}
\includegraphics[width=\linewidth]{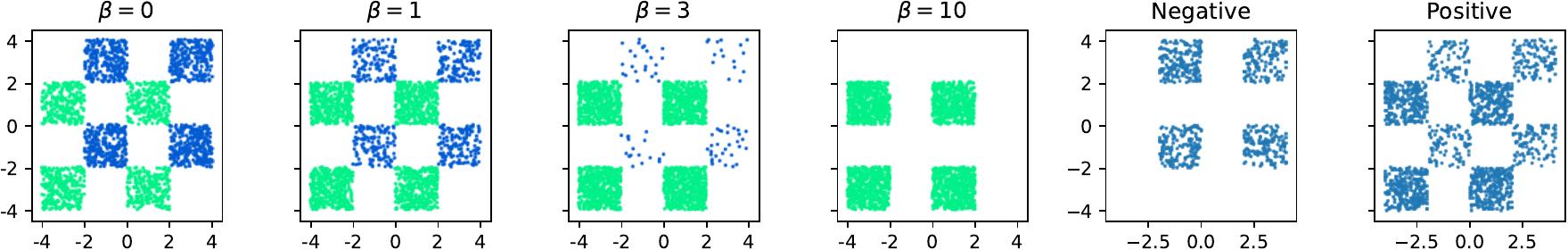}
\caption{The ground truth distribution of $q(\xb) \propto p(\xb) \exp(-\beta \cE(\xb)) = p(\xb) p^\beta(c = 1 | \xb)$ with different $\beta$, the posterior distribution $p(\xb | c = 0)$ (Negative) and $p(\xb | c = 1)$ (Positive) over 6 data distributions.}
\label{fig:gt-app}
\end{figure}

From Figure~\ref{fig:gt-app} we can find that when $\beta = 1$, $p(\xb) p(c = 1 | \xb) \propto p(\xb | c = 1)$. However, when $\beta \ge 1$, $p(\xb) \exp(-\beta\cE(\xb)) = p(\xb) p^\beta(c = 1 | \xb)$ will be more concentrated on the region with higher likelihood $p(c = 1 | \xb)$. We train the score function by optimizing $\cL_{\text{CED}}$ as of Algorithm~\ref{alg:main}, and the result is presented in Figure~\ref{fig:ed-app}. In parallel to the implementation of our methods, we train the score function $\sbb^{\btheta}_t(\xb, \emptyset)$ and $\sbb^{\btheta}_t(\xb, c = 1)$, and implement the classifier-free guidance by
\begin{align}
    \nabla_{\btheta} \log q_t(\xb) = (1 - \beta) \sbb^{\btheta}_t(\xb, \emptyset) + \beta \sbb^{\btheta}_t(\xb, c = 1)
\end{align}
as presented in Figure~\ref{fig:cfg-app}. 

\begin{figure}[htbp]
\centering
\begin{subfigure}[b]{0.49\textwidth}
\includegraphics[width=\linewidth]{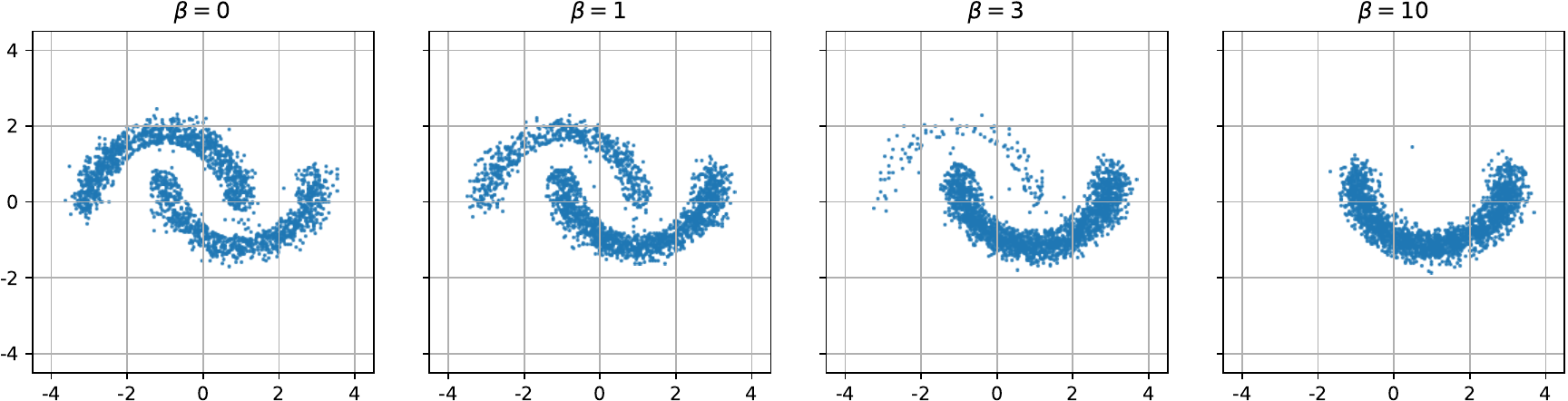}
\includegraphics[width=\linewidth]{fig/scatter_8gaussians_ours-cropped.pdf}
\includegraphics[width=\linewidth]{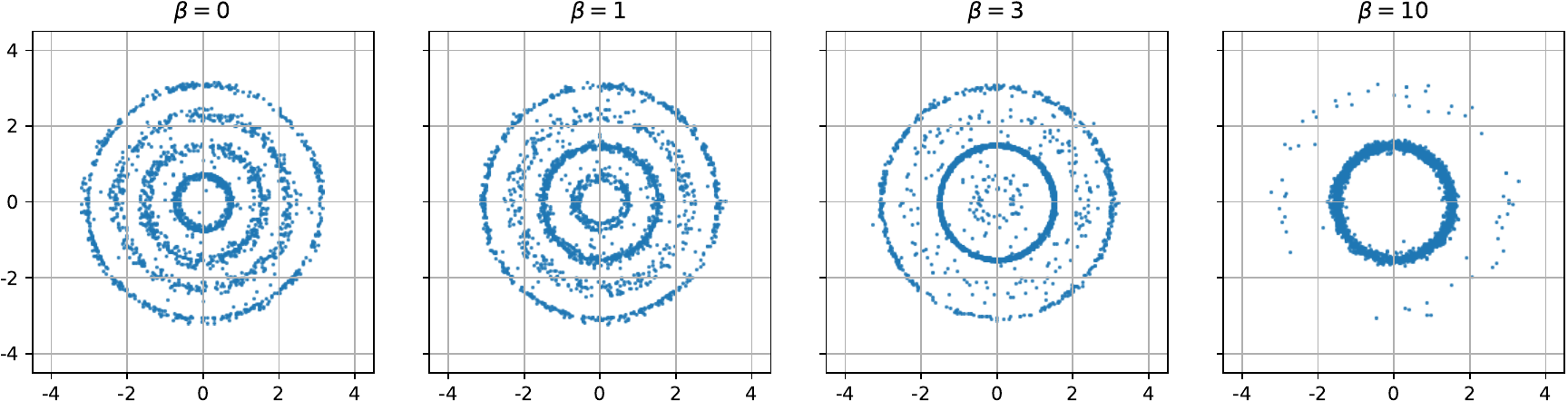}
\includegraphics[width=\linewidth]{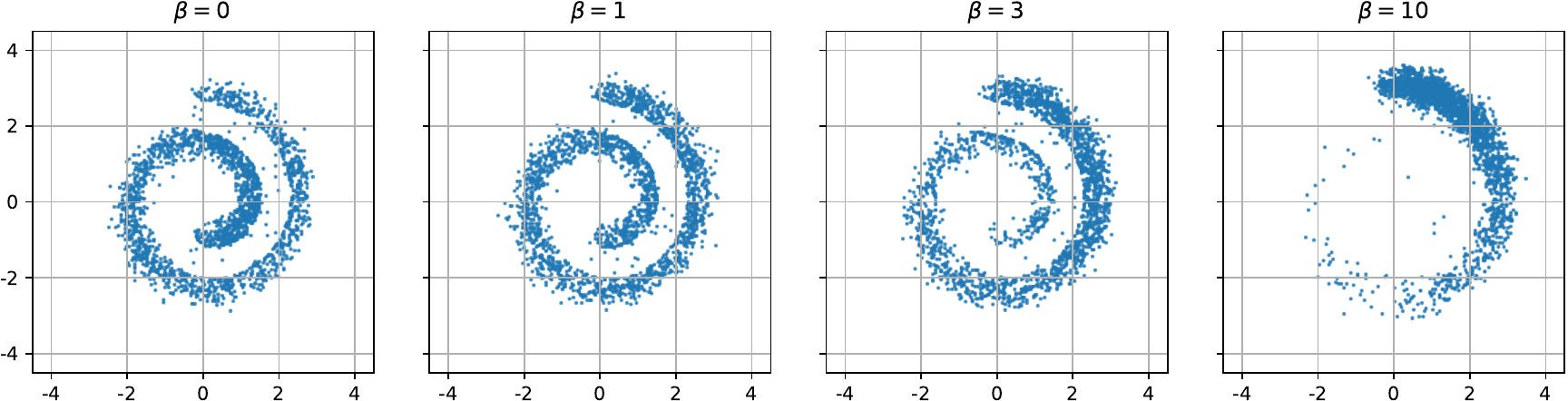}
\includegraphics[width=\linewidth]{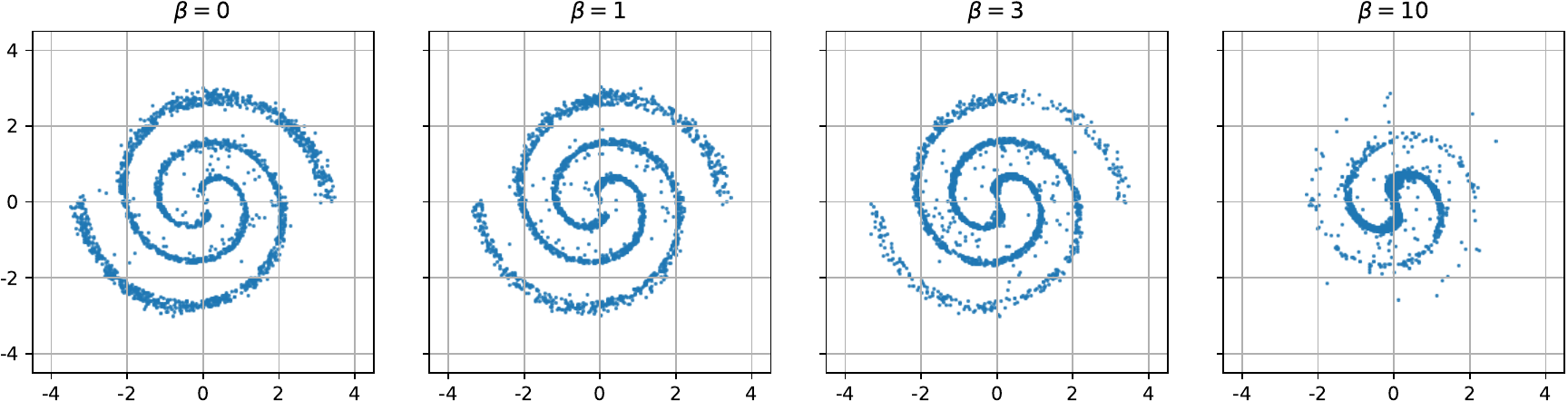}
\includegraphics[width=\linewidth]{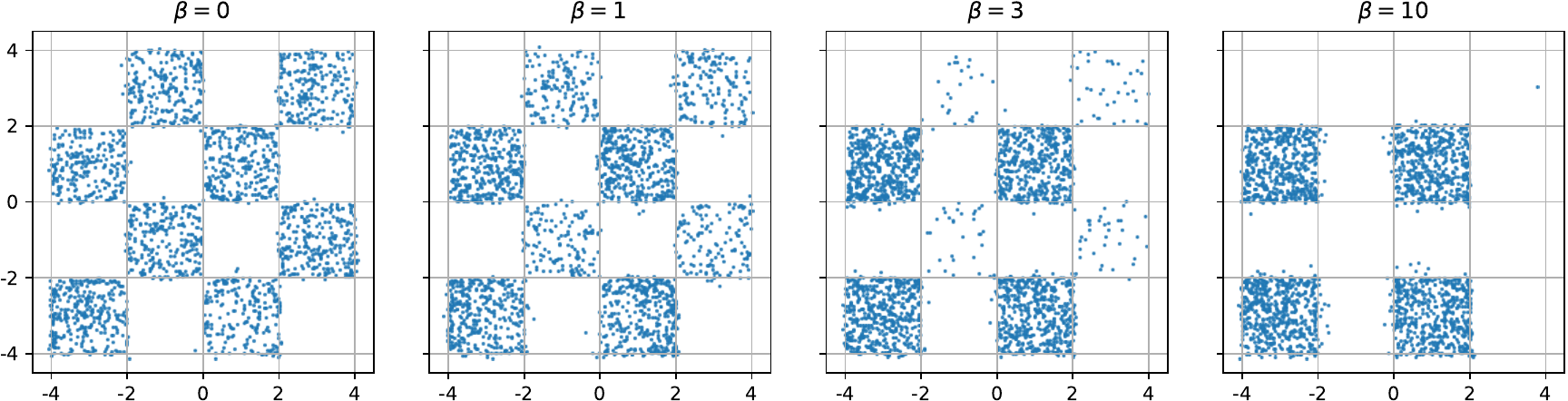}
\caption{Energy-weighted diffusion}\label{fig:ed-app}
\end{subfigure}
\hfill
\begin{subfigure}[b]{0.49\textwidth}
\includegraphics[width=\linewidth]{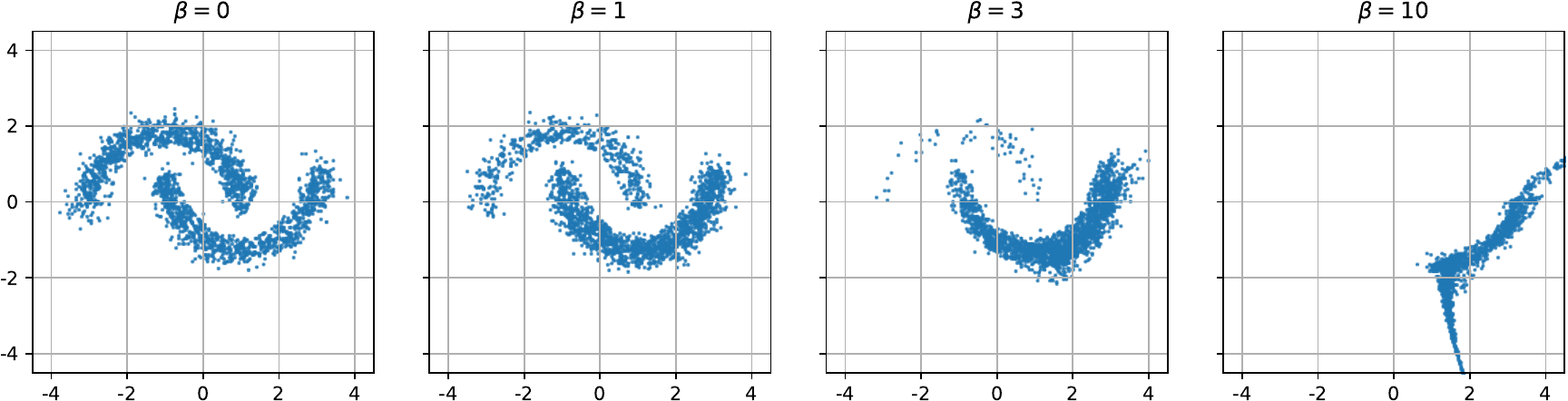}
\includegraphics[width=\linewidth]{fig/scatter_8gaussians_cfg-cropped.pdf}
\includegraphics[width=\linewidth]{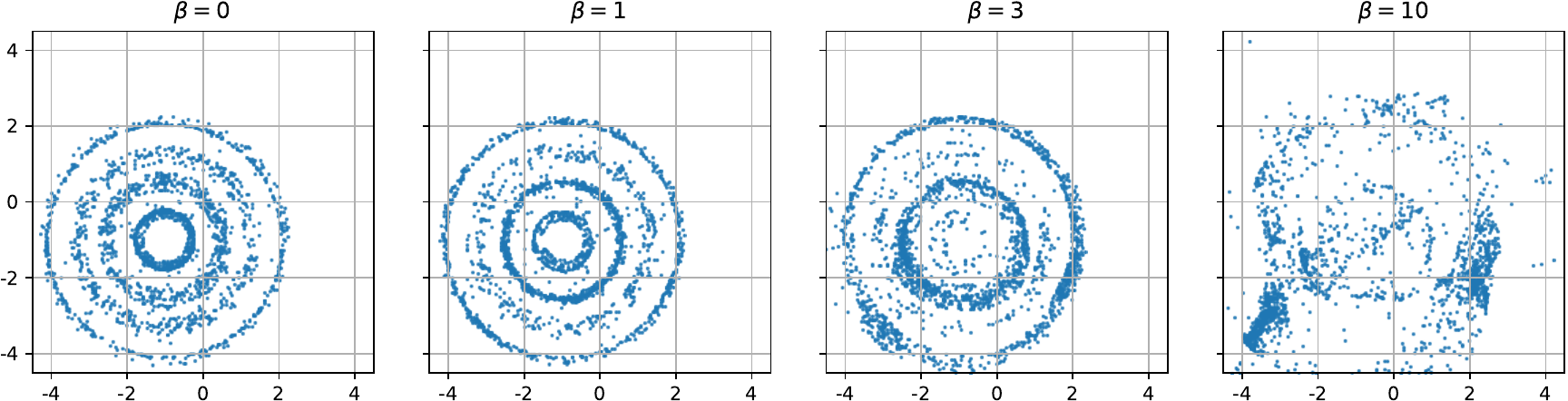}
\includegraphics[width=\linewidth]{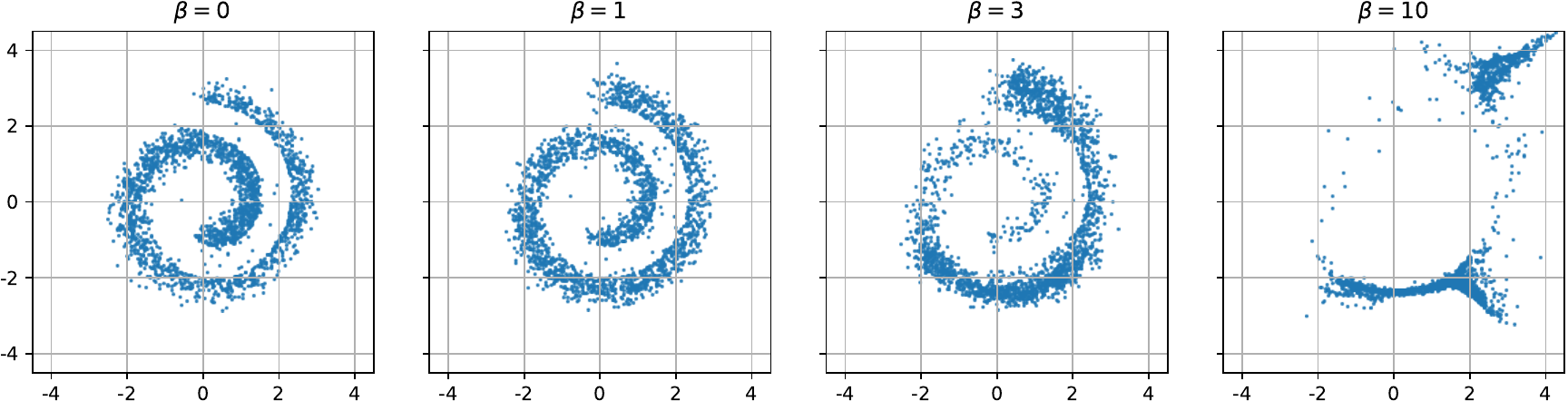}
\includegraphics[width=\linewidth]{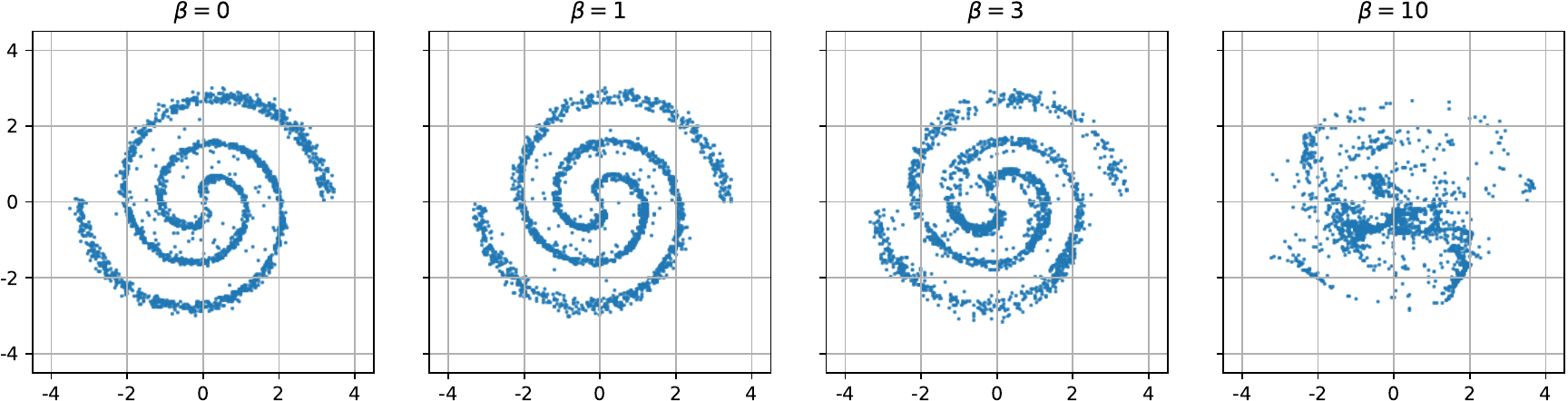}
\includegraphics[width=\linewidth]{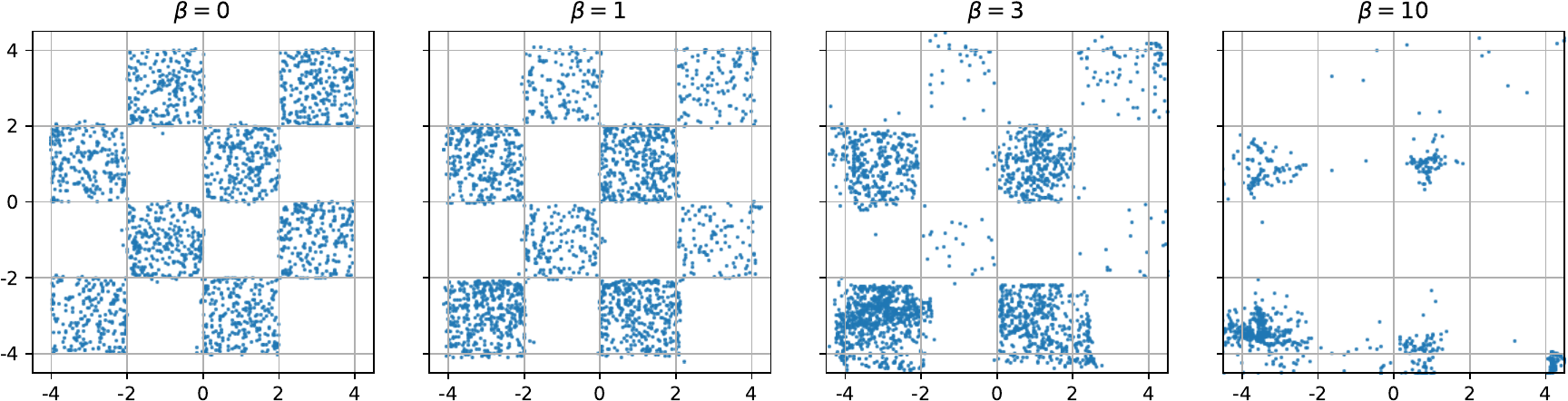}
\caption{Classifier-free guidance}\label{fig:cfg-app}
\end{subfigure}
\caption{The data sampled from the score function trained by energy-weighted diffusion and the score function composed by classifier-free guidance with different guidance scale $\beta$. We sample 2000 data points for each state.}
\end{figure}
\clearpage
\section{Detailed algorithms}\label{app:alg}
We present the detail of the algorithms that are skipped in Section~\ref{sec:bandit} and Section~\ref{sec:rl}. We present the energy-weighted flow matching model in Algorithms~\ref{alg:flow}.
\begin{algorithm}[ht]
\caption{Training Energy-Weighted Flow Matching Model}\label{alg:flow}
\begin{algorithmic}[1]
\REQUIRE Velocity vector field $\vb_t^{\btheta}(\cdot)$, guidance scale $\beta$, batch size $B$, time weight $\lambda(t)$
\REQUIRE Conditional vector field $\ub_{t0}(\xb | \xb_0)$, conditional distribution $p_{t0}$ generated by $\ub_{t0}$.
\FOR {batch $\{\xb_0^i, \cE(\xb_0^i)\}_i$}
\FOR {index $i \in [B]$}
\STATE Sample $t_i \sim U(0, 1)$, sample $\xb_i \sim p_{t_i0}(\cdot | \xb_0^i)$ 
\STATE Calculate guidance $g_i = \mathrm{softmax}(-\beta \cE(\xb_0^i)) = \exp(-\beta \cE(\xb_0^i)) / \sum_j \exp(-\beta \cE(\xb_0^j))$
\ENDFOR
\STATE Calculate and take a gradient step using $\cL_{\text{CEFM}}(\btheta) = \sum_i \lambda(t_i) g_i\|\vb^{\btheta}_{t_i}(\xb_{t_i}) - \ub_{t_i0}(\xb_i | \xb_0^i)\|_2^2$.
\ENDFOR
\end{algorithmic}
\end{algorithm}

\paragraph{Q-weighted Iterative Policy Optimization (QIPO).}
We use the in-support softmax $Q$-learning~\citep{lu2023contrastive} to implement the QIPO. We would like to emphasize that any $Q$ function given by the offline RL algorithm is compatible with our framework. The flow matching version is presented in Algorithm~\ref{alg:rl-flow} and the diffusion version is presented in Algorithm~\ref{alg:rl-diff}. We use $Q_{\perp}^{\bpsi}$ to represent the $Q$ function not contributing gradient to the loss, which is commonly used in TD-learning. 
\begin{algorithm}[ht]
\caption{Q-weighted iterative policy optimization for offline RL (Flow Matching)}\label{alg:rl-flow}
\begin{algorithmic}[1]
\REQUIRE Vector field function $\vb_t^{\btheta}(\cdot)$, guidance scale $\beta$, batch size $B$, weight on time $\lambda(t)$
\REQUIRE Conditional vector field $\ub_{t0}(\xb | \xb_0)$, conditional distribution $p_{t0}$ generated by $\ub_{t0}$.
\REQUIRE Offline RL dataset $\cD = \{(\xb, \ab, \xb', r)\}$, number of epochs $K_1, K_2$ and $K_3$, $Q$ function $Q^{\bpsi}$
\REQUIRE Support action set size $M$, support action set renew frequency $K_{\text{renew}}$
\FOR {diffusion model warm-up epoch $k \in [K_1]$}
\FOR {batch $\{(\xb_i, \ab_i, \xb'_i, r_i)\}_i \subset \cD$}
\STATE Sample $t_i \sim U(0, 1), \ab_{i, t} \sim p_{t_i0}(\cdot | \ab_{i})$ for all $i \in [B]$
\STATE Calculate and optimize $\cL_{\text{QF, warmup}}(\btheta) = \sum_{i=1}^B \lambda(t_i)\|\vb_t^{\btheta}(\ab_{i, t}; \xb_i) - \ub_{t_i0}(\ab_{i, t} | \ab_i)\|_2^2$
\ENDFOR
\ENDFOR
\FOR {Q-learning warm-up epoch $k \in [K_2]$}
\FOR {batch $\{(\xb_i, \ab_i, \xb'_i, r_i)\}_i \subset \cD$}
\STATE Sample support action set $\ab'_{ij}$ using vector field $\vb_t^{\btheta}$ for $j \in [M]$, denote $\ab'_{i0} = \ab'_i$
\ENDFOR
\STATE Calculate $\cL_{Q}(\bpsi) =  \sum_i \left\|Q^{\bpsi}(\xb_i, \ab_{i}) - r_i - \gamma\frac{\sum_{j=0}^M\exp(\beta Q^{\bpsi}_{\perp}(\xb'_i, \ab'_{ij}))Q^{\bpsi}_{\perp}(\xb'_i, \ab'_{ij})}{\sum_{j=0}^M \exp(\beta Q^{\bpsi}_{\perp}(\xb'_i, \ab'_{ij}))}\right\|_2^2$
\STATE Update $\bpsi$ using the gradient of $\cL_Q(\bpsi)$
\ENDFOR
\FOR {policy improvement step $k \in [K_3]$}
\FOR {batch $\{(\xb_i, \ab_i, \xb'_i, r_i)\}_i \subset \cD$}
\IF {$k\mod K_{\text{renew}} = 1$}
\STATE Sample support action set $\ab_{ij}$ using vector field $\vb^{\btheta}(\cdot |\xb_i)$ for all $i \in [B], j \in [M]$ 
\STATE Denote $\ab_{i0} = \ab_i$, sample $t_{ij} \sim U(0, 1)$ and $\ab_{ij, t} \sim p_{t_{ij}0}(\cdot | \ab_{ij})$ for all $j \in \overline{[M]}$
\STATE Calculate guidance $g_{ij} = \text{softmax}_j(\beta Q^{\bpsi}(\xb_i, \ab_{ij})) = \frac{\exp(\beta Q^{\bpsi}(\xb_i, \ab_{ij}))}{\sum_{j=0}^M\exp(\beta Q^{\bpsi}(\xb_i, \ab_{ij}))}$
\ENDIF
\STATE Calculate loss $ \cL_{\text{QF}}(\btheta) = \sum_{i=1, j=0}^{B, M} 
\lambda(t_{ij})g_{ij} \big\|\vb_{t_{ij}}^{\btheta}(\ab_{ij, t_{ij}}; \xb_i) - \ub_{t_{ij}0}(\ab_{ij, t_{ij}} | \ab_{ij})\big\|_2^2$
\STATE Update $\btheta$ using the gradient of $\cL_{\text{QF}}(\btheta)$
\ENDFOR
\ENDFOR 
\end{algorithmic}
\end{algorithm}

\begin{algorithm}[ht]
\caption{Q-weighted iterative policy optimization for offline RL (Diffusion)}\label{alg:rl-diff}
\begin{algorithmic}[1]
\REQUIRE Score function $\sbb_t^{\btheta}(\cdot)$, guidance scale $\beta$, batch size $B$, weight on time $\lambda(t)$, schedule $(\mu_t, \sigma_t)$
\REQUIRE Offline RL dataset $\cD = \{(\xb, \ab, \xb', r)\}$, number of epochs $K_1, K_2$ and $K_3$, $Q$ function $Q^{\bpsi}$
\REQUIRE Support action set size $M$, support action set renew frequency $K_{\text{renew}}$
\FOR {diffusion model warm-up epoch $k \in [K_1]$}
\FOR {batch $\{(\xb_i, \ab_i, \xb'_i, r_i)\}_i \subset \cD$}
\STATE Sample $t_i \sim U(0, 1)$, calculate $\mu_{t_i}, \sigma_{t_i}$, sample $\bepsilon_i \sim \cN(0, \Ib_d)$ and $\ab_{i, t_i} = \mu_{t_i} \ab_{i} + \sigma_{t_i}\bepsilon_i$
\STATE Calculate loss $\cL_{\text{QD, warmup}}(\btheta) = \sum_{i=1}^B \lambda(t_i)\|\sbb_t^{\btheta}(\ab_{i, t}; \xb_i) + \bepsilon_i\|_2^2$
\STATE Update $\btheta$ using the gradient of $\cL_{\text{QD, warmup}}(\btheta)$ \label{ln:warmup}
\ENDFOR
\ENDFOR
\FOR {Q-learning warm-up epoch $k \in [K_2]$}
\FOR {batch $\{(\xb_i, \ab_i, \xb'_i, r_i)\}_i \subset \cD$}
\STATE Sample support action set $\ab'_{ij}$ using score function $\sbb_t^{\btheta}$ for $j \in [M]$, denote $\ab'_{i0} = \ab'_i$
\ENDFOR
\STATE Calculate $\cL_{Q}(\bpsi) =  \sum_i \left\|Q^{\bpsi}(\xb_i, \ab_{i}) - r_i - \gamma\frac{\sum_{j=0}^M\exp(\beta Q^{\bpsi}_{\perp}(\xb'_i, \ab'_{ij}))Q^{\bpsi}_{\perp}(\xb'_i, \ab'_{ij})}{\sum_{j=0}^M \exp(\beta Q^{\bpsi}_{\perp}(\xb'_i, \ab'_{ij}))}\right\|_2^2$
\STATE Update $\bpsi$ using the gradient of $\cL_Q(\bpsi)$
\ENDFOR
\FOR {policy improvement step $k \in [K_3]$}
\FOR {batch $\{(\xb_i, \ab_i, \xb'_i, r_i)\}_i \subset \cD$}
\IF {$k\mod K_{\text{renew}} = 1$}
\STATE Sample support action set $\ab_{ij}$ using score function $\sbb^{\btheta}(\cdot |\xb_i)$ for all $i \in [B], j \in [M]$ 
\STATE Denote $\ab_{i0} = \ab_i$, sample $t_{ij} \sim U(0, 1)$, calculate $\mu_{t_{ij}}, \sigma_{t_{ij}}$
\STATE Sample $\bepsilon_{ij} \sim \cN(0, \Ib_d)$ for $i \in [B], j \in [M]$, let $\ab_{ij, t_{ij}} = \mu_{t_{ij}}\ab_{ij} + \sigma_{t_{ij}}\bepsilon_{ij}$ 
\STATE Calculate guidance $g_{ij} = \text{softmax}_j(\beta Q^{\bpsi}(\xb_i, \ab_{ij})) = \frac{\exp(\beta Q^{\bpsi}(\xb_i, \ab_{ij}))}{\sum_{j=0}^M\exp(\beta Q^{\bpsi}(\xb_i, \ab_{ij}))}$
\ENDIF
\STATE Calculate loss $ \cL_{\text{QD}}(\btheta) = \sum_{i=1, j=0}^{B, M} 
\lambda(t_{ij})g_{ij} \big\|\sbb_{t_{ij}}^{\btheta}(\ab_{ij, t_{ij}}; \xb_i) + \bepsilon_{ij} / \sigma_{t_{ij}}\big\|_2^2$
\STATE Update $\btheta$ using the gradient of $\cL_{\text{QD}}(\btheta)$
\ENDFOR
\ENDFOR 
\end{algorithmic}
\end{algorithm}

\section{Additional Experiment Results}\label{app:exp0}
We first present the comparison between QIPO and other algorithm in Table~\ref{tbl:rl_results_app}.

\begin{table}[t]
\centering
\caption{Evaluation numbers of D4RL benchmarks (normalized as suggested by \citet{fu2020d4rl}). We report mean $\pm$ standard deviation of algorithm performance across 8 random seeds. The result for ablation study with $\beta = 10$ are the averaged performance over 3 random seeds. Together with the benchmark algorithms presented in Table~\ref{tbl:rl_results}, the highest performance is \colorbox{red!50}{\textbf{boldfaced highlighted.}} The performance within 5\% of the maximum absolute value in every individual task are \colorbox{red!30}{highlighted.} To ensure fairness, ablation results with $\beta = 10$ are excluded from the best performance comparison. However they are still highlighted if they meet the 5\% performance criterion.
}
\label{tbl:rl_results_app}
\resizebox{1.0\textwidth}{!}{%
\begin{tabular}{llccccccccc}
\toprule
\multicolumn{1}{c}{\bf Dataset} & 
\multicolumn{1}{c}{\bf Environment} & 
\multicolumn{1}{c}{BEAR} &
\multicolumn{1}{c}{TD3+BC} & 
\multicolumn{1}{c}{IQL} & 
\multicolumn{1}{c}{Diffuser} &
\multicolumn{1}{c}{Diffusion-QL}&
\multicolumn{1}{c}{\textbf{QIPO-Diff (ours)}}&
\multicolumn{1}{c}{\textbf{QIPO-Diff (ours, $\beta = 10$)}} \\
\midrule
Medium-Expert & HalfCheetah & $53.4$ &  $90.7$ & $86.7$ & $79.8$  & \RS$96.8$ & \RS$94.14 \pm 0.48$ & \RS $94.96$\\
Medium-Expert & Hopper &  $96.3$ & $98.0$ & $91.5$ & \RS$107.2$ & \RS$111.1$ & \RF$\textbf{112.12} \pm\textbf{0.42}$ & 103.56\\
Medium-Expert & Walker2d & $40.1$ & \RS$110.1$ & \RS$109.6$ & \RS$108.4$ & \RS$110.1$ & \RS$110.14\pm 0.51$ & \RS$112.09$\\
\midrule
Medium & HalfCheetah & $41.7$ & $48.3$ & $47.4$ & $44.2$ & $51.1$ & $48.19\pm 0.20$ & $53.11$\\
Medium & Hopper & $52.1$ & $59.3$ &  $66.3$ & $58.5$ & $90.5$ & $89.53\pm 9.96$ & $92.01$\\
Medium & Walker2d & $59.1$ & \RS$83.7$ & $78.3$ & $79.7$ & \RS$87.0$ & \RS$84.99\pm0.46$ & \RS$86.36$ \\
\midrule
Medium-Replay & HalfCheetah & $38.6$ & $44.6$ & $44.2$ & $42.2$ & $47.8$ & $45.27\pm0.42$ & \RF$\textbf{50.10}$\\
Medium-Replay & Hopper & $33.7$ & $60.9$ & $94.7$ & \RS$101.3$ & \RS$100.7$ & \RS$101.23\pm0.47$ & \RS $99.57$\\
Medium-Replay & Walker2d & $19.2$ & $81.8$ & $73.9$ & $61.2$ & \RF$95.5$ & $90.08\pm4.53$ & $89.45$\\
\midrule
\multicolumn{2}{c}{\bf Average (Locomotion)} & $51.9$ & $75.3$ & $76.9$ & $75.3$ & \RF$88.0$ & \RS$86.20$ & \RS$86.80$\\
\midrule
Default & AntMaze-umaze & $73.0$ & $78.6$ & $87.5$ & - & \RS$93.4$ & \RF$\textbf{97.5} \pm \textbf{0.53}$ & \RF $\textbf{98.33}$\\
Diverse & AntMaze-umaze & $61.0$ & $71.4$ & $62.2$ & - & $66.2$ & $73.88 \pm 6.42$ & \RF$\textbf{83.00}$ \\
\midrule  
Play & AntMaze-medium  & $0.0$ & $10.6$ & $71.2$ & - & $76.6$ & \RS$82.75 \pm 3.24$ & \RF$\textbf{92.67}$\\
Diverse & AntMaze-medium  & $8.0$ & $3.0$ & $70.0$ & - & $78.6$ & \RS$\textbf{86.00} \pm \textbf{8.65}$ & \RF$\textbf{89.00}$\\
\midrule
Play & AntMaze-large & $0.0$ & $0.2$ & $39.6$ & - & $46.4$ & \RF$\textbf{73.25} \pm \textbf{10.90}$ & $62.67$\\
Diverse & AntMaze-large & $0.0$ & $0.0$ & $47.5$ & - & $56.6$ & $40.5 \pm 20.40$ & $59.00$\\
\midrule
\multicolumn{2}{c}{\bf Average (AntMaze)} & $23.7$ & $27.3$ & $63.0$ & - & $69.6$ & \RS$77.3$ & \RF$\textbf{80.78}$ \\
\bottomrule
\end{tabular}
}
\end{table}
\subsection{Experiment Configurations}\label{app:exp}
\paragraph{QIPO-Diff configurations}
We adopt the same network structure and training configuration as \citet{lu2023contrastive} during the score network warmup $K_1$ and Q-function training $K_2$ for a fair comparison and to ensure consistency across experiments. We finetune the score network $\sbb_t^{\btheta}$ obtained after warm-up (Line~\ref{ln:warmup}, Algorithm~\ref{alg:rl-diff}) with learning rate $10^{-4}$ to perform the Q-weighted iterative policy optimization. The schedule of the diffusion is the same with~\citet{lu2023contrastive} and we use the DPM-Solver~\citep{lu2022dpma} with \texttt{diffusion-step = 15} to accelerate the generation step, which is also the same as~\citep{lu2023contrastive}.

\paragraph{QIPO-OT configurations} We use the optimal transport VF~\citep{lipman2022flow} defined by
\begin{align*}
    \ub_{t0}(\xb | \xb_0) = \frac{\xb_0 - (1 - \sigma_{\min})\xb}{1 - (1 - \sigma_{\min})t}, \sigma_{\min} = 0.0054,
\end{align*}
note we switched the notation between $t = 0$ and $t = 1$ compared with~\citep{lipman2022flow}. According to Lemma~\ref{lm:diff-flow-connection}, we defined the network of $\vb_t^{\btheta}$ as
\begin{align*}
    \vb_t^{\btheta}(\xb) = (1 - t)^{-1}\log t \xb + (1 - t)^{-1}t\left(t\log t - (1 - t)^{-1}\right)\vb_t^{\btheta}(\xb),
\end{align*}
where $\sbb_t^{\btheta}$ is the same with QIPO-Diff implementation. Similar with QIPO-Diff, we employ the DPM-Solver~\citep{lu2022dpma} for the second-order Runge–Kutta–Fehlberg method for solving the reverse-time ODE $\frac{\mathrm d\xb}{\mathrm d t} = \vb_t^{\btheta}(\xb)$.

\subsection{Individual Timing Comparison between QGPO and QIPO} \label{app:time}
We present the timing comparison between QGPO~\citep{lu2023contrastive} and QIPO in this section, as shown in Figure~\ref{fig:T}. It is evident that QIPO-OT consistently outperforms QIPO-Diff in terms of speed, with QGPO being the slowest due to the additional back-propagation. The advantage of optimal-transport flow matching over diffusion models aligns with the claims made by \citet{lipman2022flow} and \citet{zheng2023guided}, which suggest that OT can accelerate the generation process.
\begin{figure}[htbp]
\centering
\begin{subfigure}[b]{0.49\textwidth}
\centering
\includegraphics[width=\textwidth]{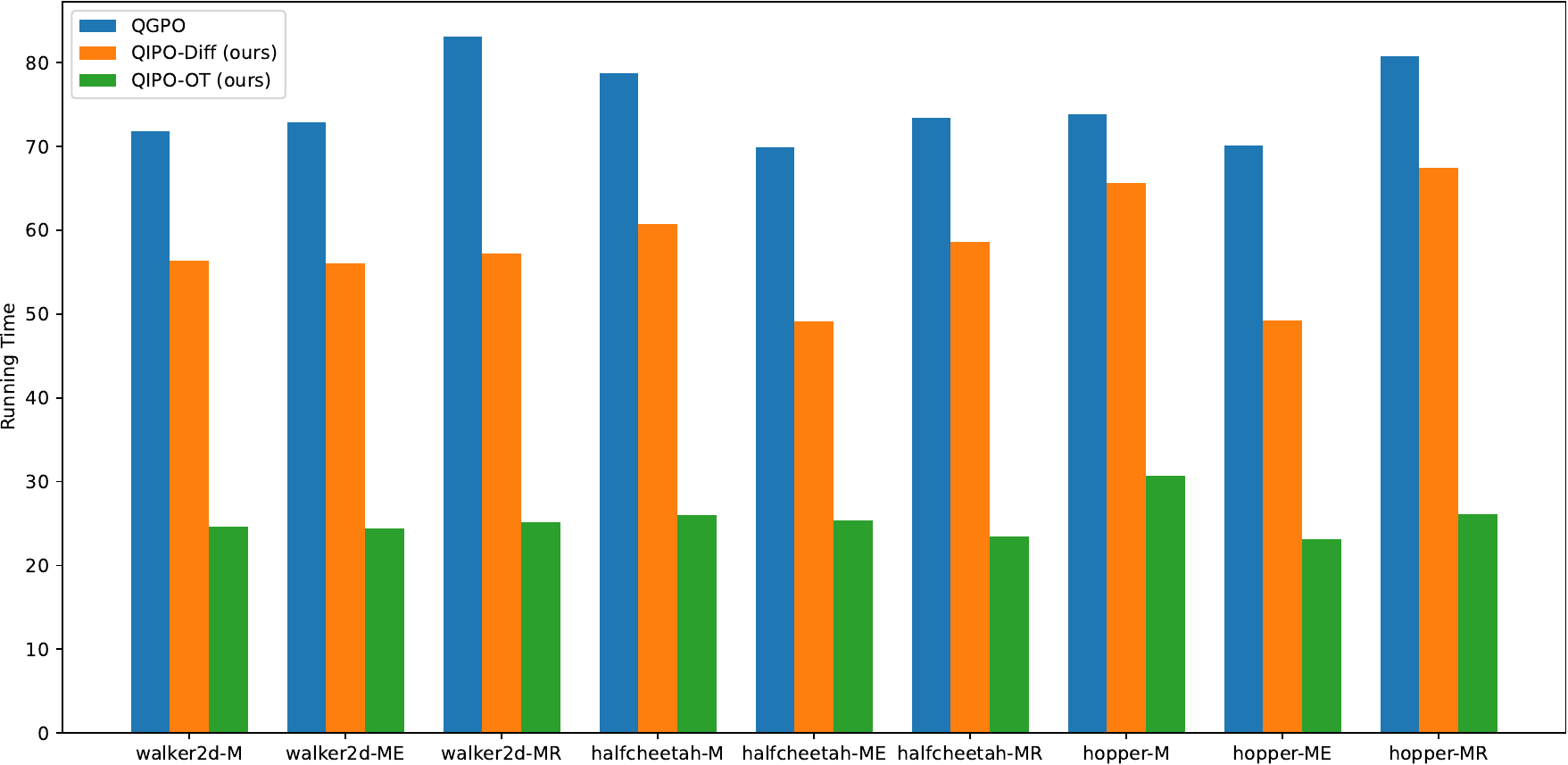}
\caption{Locomotion Tasks.}
\end{subfigure}
\hfill
\begin{subfigure}[b]{0.49\textwidth}
\centering
\includegraphics[width=\textwidth]{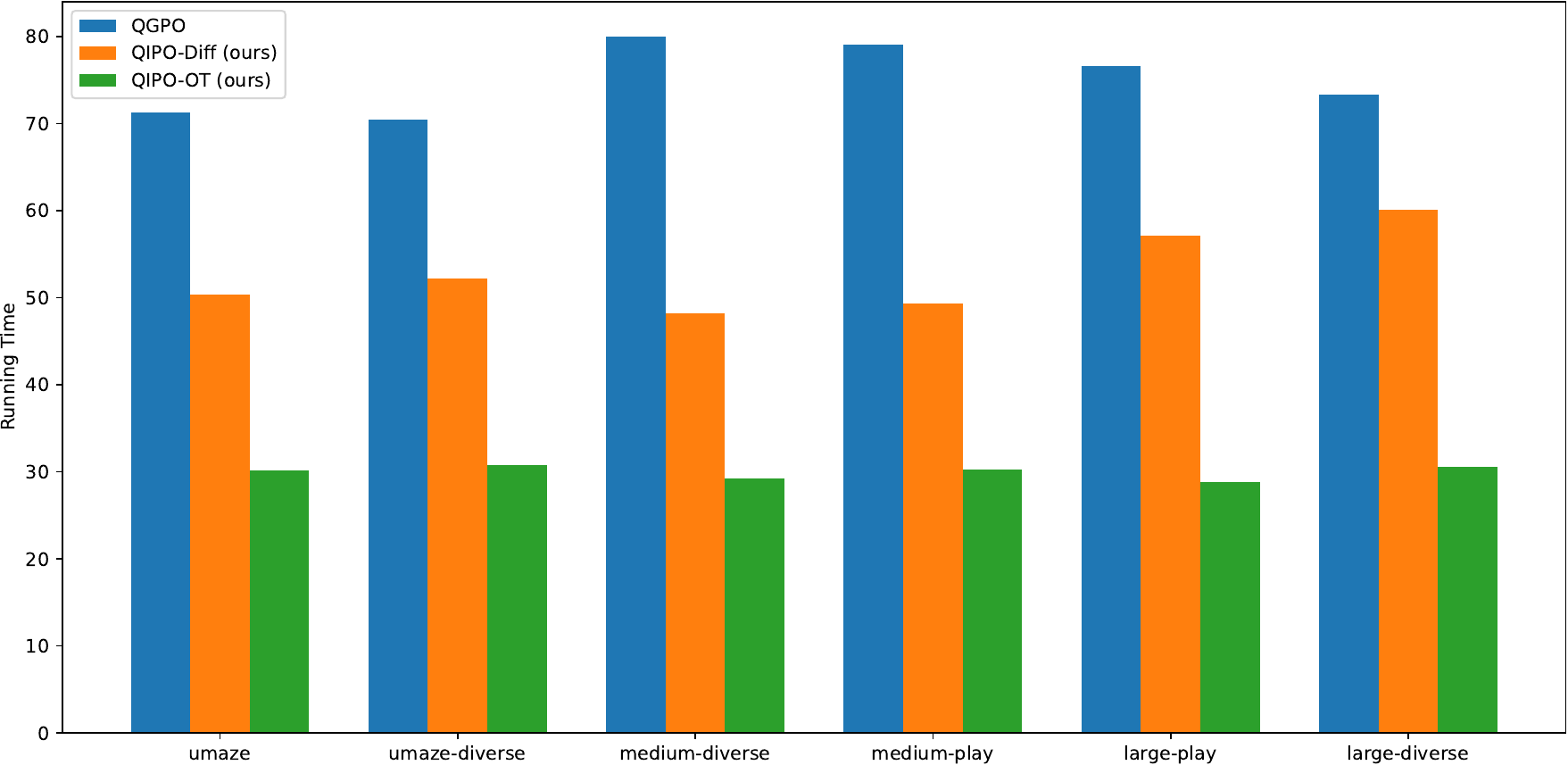}
\caption{AntMaze Tasks}
\end{subfigure}
\caption{Comparison of the time complexity of the action selecting function for CEP, QIPO-Diff and QIPO-OT across various tasks. The results are averaged over 3 individual runs.}
\label{fig:T}
\end{figure}

\paragraph{Iterative Policy Optimization}
We perform the iterative policy optimization (Line~\ref{ln:start}, Algorithm~\ref{alg:rl}) with $K_3 = 100$ and evaluate the performance of the agent every $5$ epochs. We renew the support set with period $K_{\text{renew}} = 10$. The rewards reported in Table~\ref{tbl:rl_results} and Table~\ref{tbl:rl_results_app} are the best performance of the reported performance over $K_3 = 100$ episodes. Since the QIPO procedure progressively improve the guidance scale beta, as discussed in~\eqref{eq:iter}, this is similar to automatically tune the guidance scale $s$ in~\citet{lu2023contrastive} as a hyper-parameter. We use the soft update~\citep{lillicrap2015continuous}with $\lambda = 0.005$ to stabilize the update of the score network, 

\subsection{Ablation Studies for QIPO} \label{app:ablate}
\paragraph{Changing the guidance scale $\beta$}. We conduct an ablation study by changing the guidance scale from $\beta = 3$ to $\beta = 10$ for QIPO-Diff, with the results included in the last column of Table~\ref{tbl:rl_results_app}. The algorithm demonstrates robustness to changes in the guidance scale, as the iterative policy optimization process naturally adapts and improves the guidance. Additionally, we observed that the average performance over 3 random seeds sometimes surpasses the original setting with $\beta = 3$, while in other cases it is slightly below the original performance.

\paragraph{Changing the support action set $M$.} We ablate the size of the support action set, increasing it from $M = 16$ to $M = 32$ and $M = 64$, as shown in Figure~\ref{fig:M}. The results suggest that a larger support set $M$ may lead to slightly improved performance. However, since sampling the support action set requires running the reverse process of the diffusion model or the score matching model, we choose to keep $M = 16$ to balance efficiency and effectiveness.
\begin{figure}[htbp]
\centering
\begin{subfigure}[b]{0.49\textwidth}
\centering
\includegraphics[width=\textwidth]{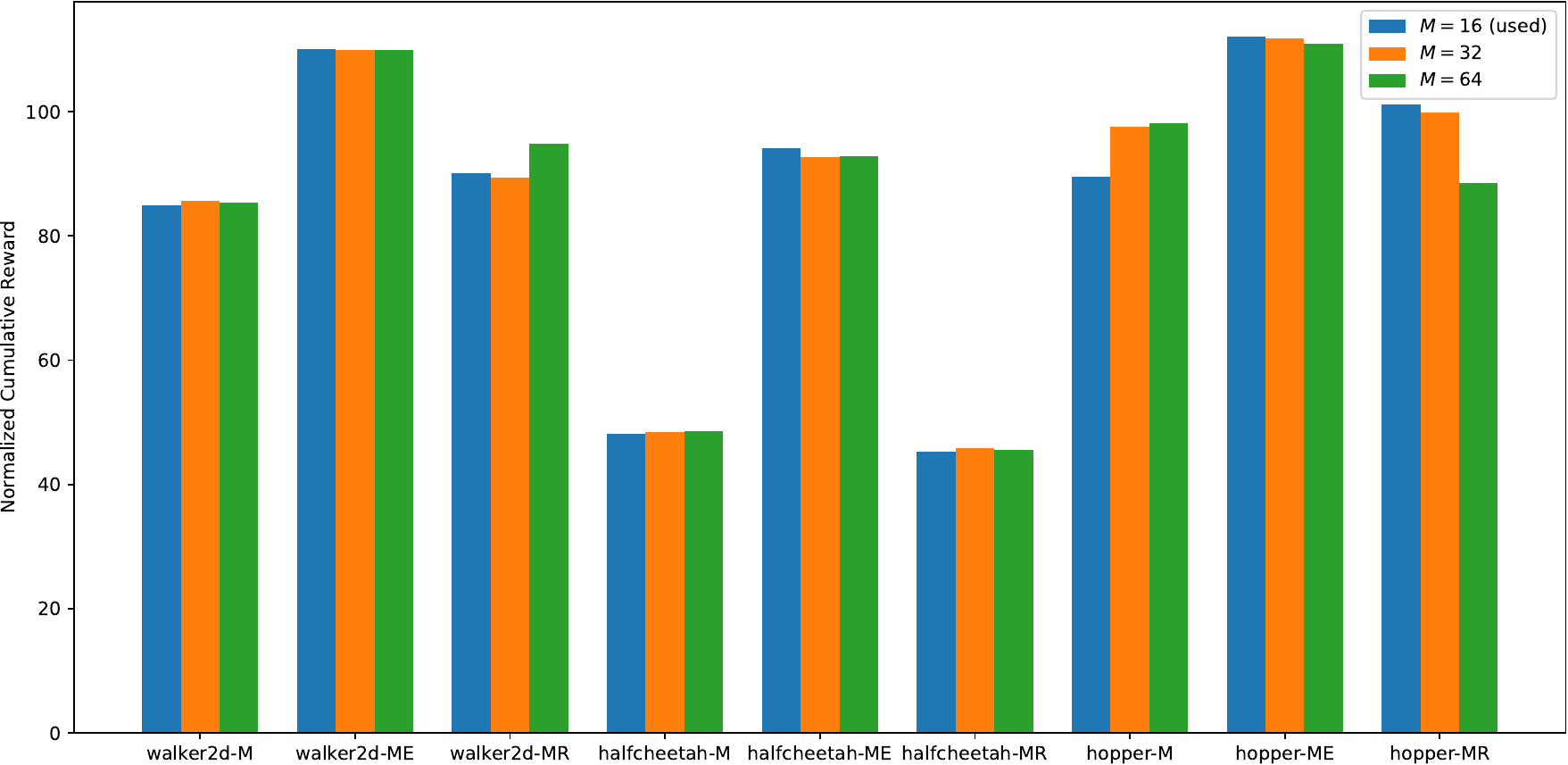}
\caption{Locomotion Tasks.}
\end{subfigure}
\hfill
\begin{subfigure}[b]{0.49\textwidth}
\centering
\includegraphics[width=\textwidth]{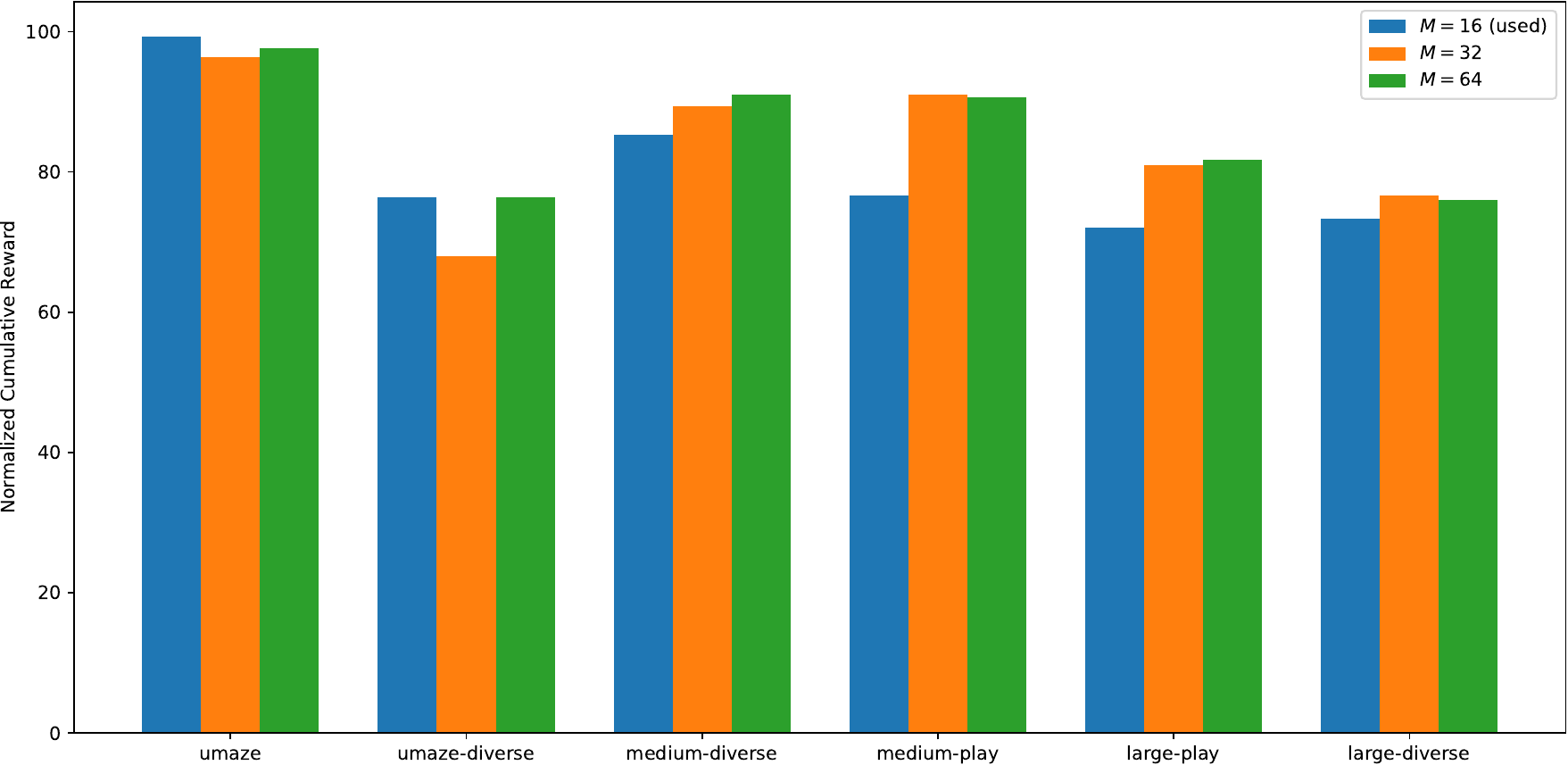}
\caption{AntMaze Tasks}
\end{subfigure}
\caption{Comparison of normalized cumulative rewards for different support set sizes ($M = 16$, $M = 32$, and $M = 64$) across various tasks. The results are averaged over 3 individual runs.}
\label{fig:M}
\end{figure}

\paragraph{Changing the renew support action period} $K_{\text{renew}}$\textbf{.} We ablate the period for changing the support action set within $K_{\text{renew}} \in \{5, 10, 20 ,50\}$. Intuitively, a larger $K_{\text{renew}}$ may result in a better-learned score network but slower updates to the guidance scale. As shown in Figure~\ref{fig:K}, we observe that smaller $K$ values generally lead to improved performance. We hypothesize that this is because the score network for the D4RL tasks is relatively easy for the neural network to learn.
\begin{figure}[htbp]
\centering
\begin{subfigure}[b]{0.49\textwidth}
\centering
\includegraphics[width=\textwidth]{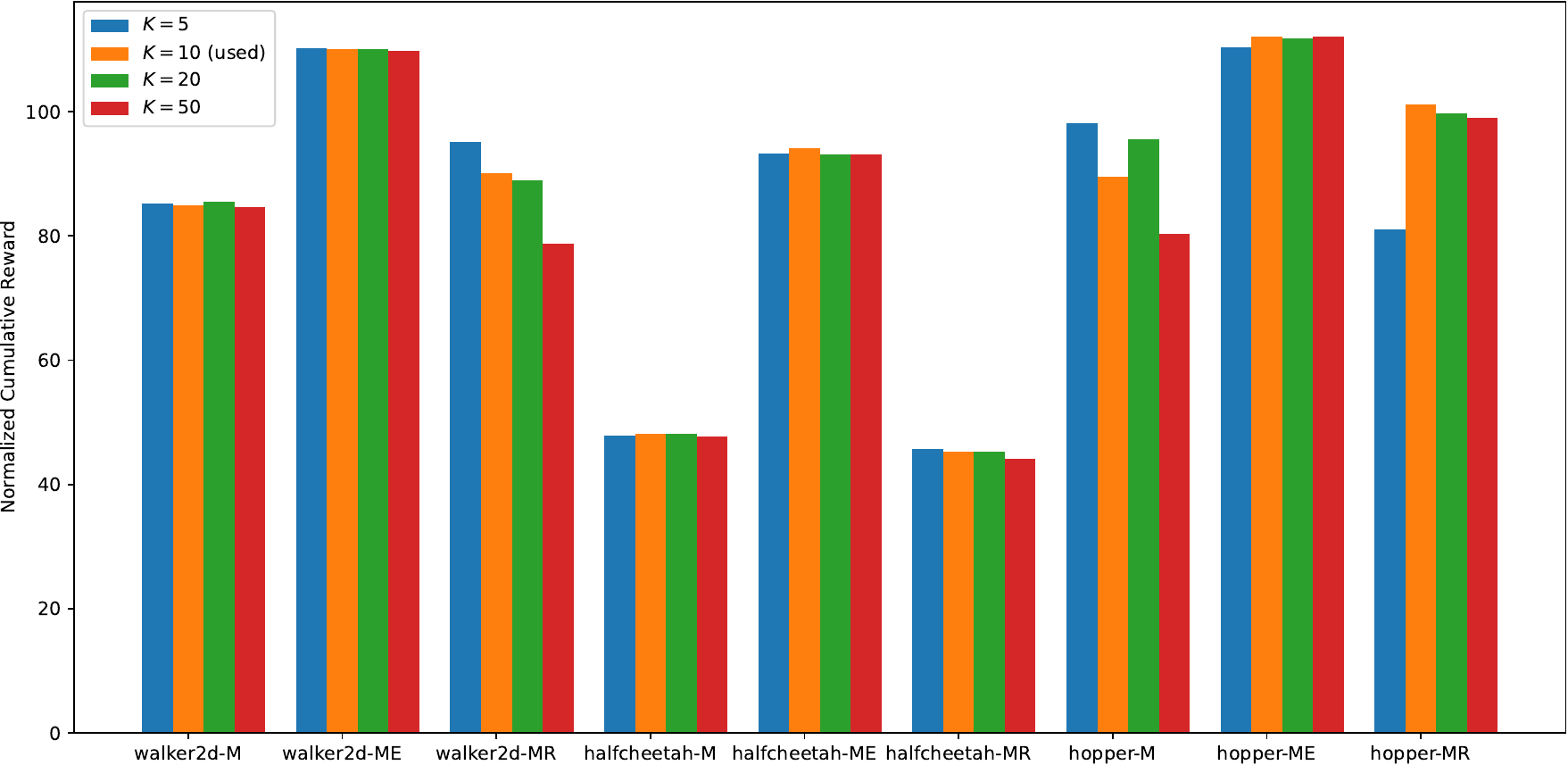}
\caption{Locomotion Tasks.}
\end{subfigure}
\hfill
\begin{subfigure}[b]{0.49\textwidth}
\centering
\includegraphics[width=\textwidth]{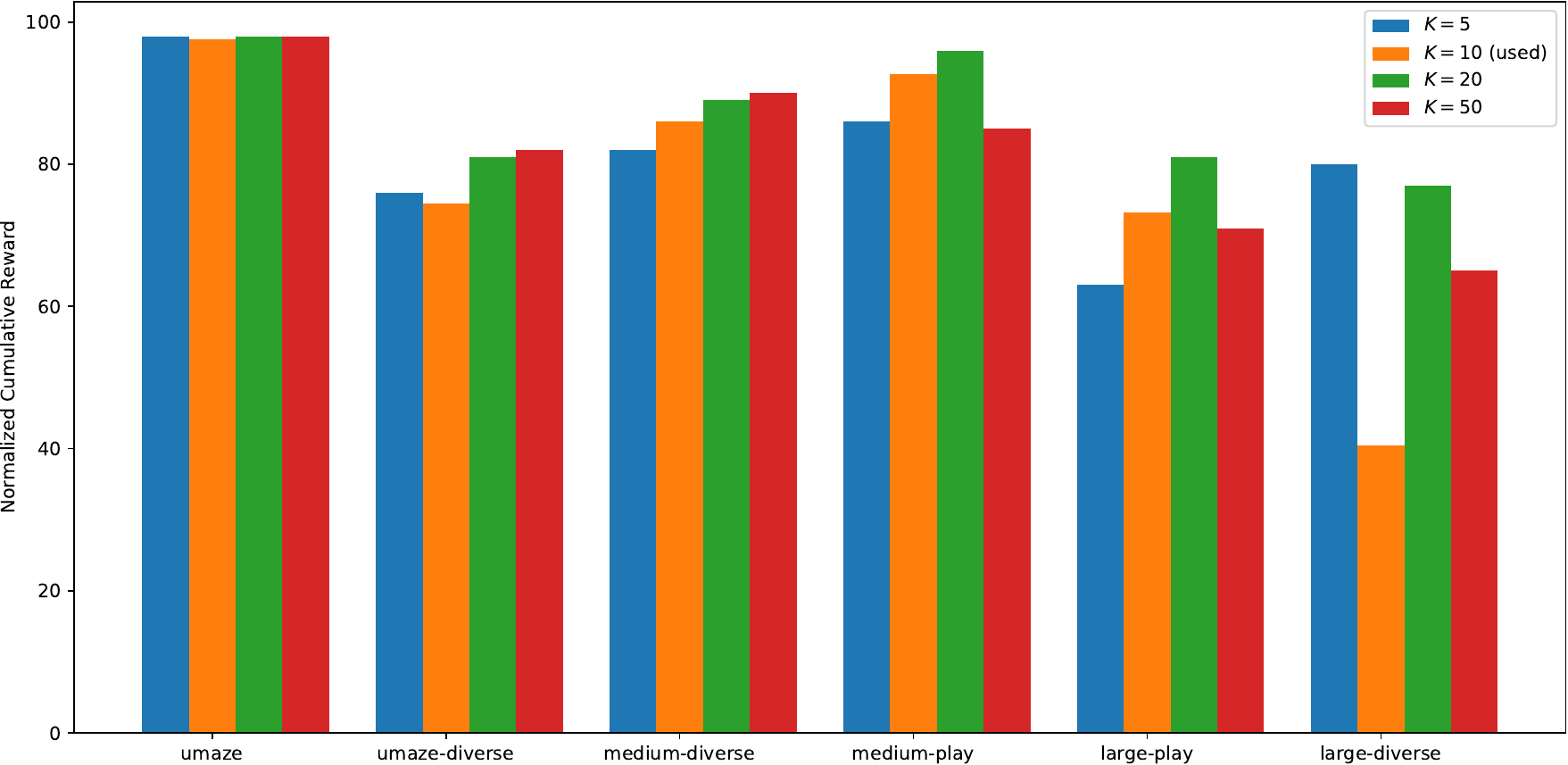}
\caption{AntMaze Tasks}
\end{subfigure}
\caption{Comparison of normalized cumulative rewards for different support action set renewal period ($K_{\text{renew}} \in \{5, 10, 20 ,50\}$) across various tasks. The results are averaged over 3 individual runs.}
\label{fig:K}
\end{figure}
\REV{
\section{Additional Experiments}
\subsection{Changing the energy landscape}
We conduct an ablation study to analyze the performance of the energy-weighted diffusion model under varying energy landscapes. Specifically, we use the 8-Gaussian example shown in Figure~\ref{fig:cfg-app} and modify the energy landscape as illustrated in Figure~\ref{fig:abl1}. The sampling results obtained from running Algorithm~\ref{alg:main} are depicted in Figure~\ref{fig:abl2}. These results suggest that the proposed energy-weighted diffusion model is robust to changes in the energy landscape.}
\begin{figure}
\centering
\begin{subfigure}[b]{0.49\textwidth}
\includegraphics[height=50pt]{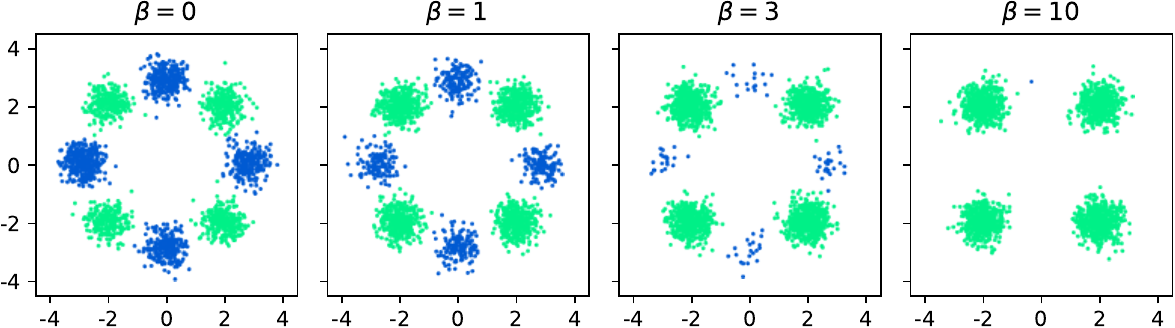}
\caption{Ground truth of the new distribution}\label{fig:abl1}
\end{subfigure}
\hfill
\begin{subfigure}[b]{0.49\textwidth}
\includegraphics[height=50pt]{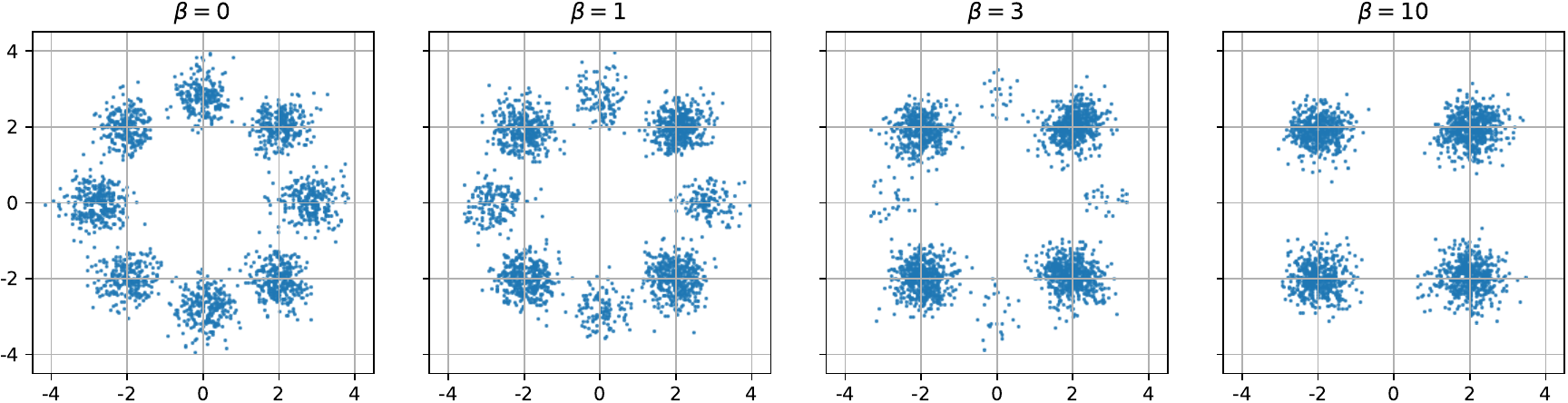}
\caption{Sampled result on the new distribution} \label{fig:abl2}
\end{subfigure}
\caption{Ground truth and sampling results on the 8-Gaussian dataset with a modified energy landscape.}
\label{fig:enter-label}
\end{figure}
\REV{
\subsection{Take $\beta$ as a score network input}
Although Algorithms~\ref{alg:flow} and \ref{alg:main} require the guidance scale $\beta$ as an input for simplicity of presentation, we propose an additional algorithm that unifies different choices of $\beta$ across the range from $0$ to $\beta_{\max}$. As described in Algorithm~\ref{alg:beta}, this algorithm treats $\beta$ as a network input and, for each minibatch, randomly selects a $\beta$ value from the uniform distribution $U(0, \beta_{\max})$. The encoding of $\beta$ follows the same method used for encoding time $t$ in the network, represented as:
\begin{align}
\tilde \xb = \xb \oplus \text{emb}(t) \oplus \text{emb}(\beta / \beta_{\max}), \notag
\end{align}
where $\text{emb}(\cdot)$ denotes the time embedding function and $\oplus$ indicates concatenation. The reverse (generation) process remains unchanged, except that the score function is modified to $\sbb^{\btheta}(\cdot; \cdot, \beta)$, using a specific guidance scale $\beta$. We conducted experiments on the bandit case illustrated in Figure~\ref{fig:beta}. The results indicate that the performance of the proposed algorithm is comparable to that of Algorithms~\ref{alg:flow} and \ref{alg:main}, suggesting that the score model can effectively learn multiple $\beta$ values simultaneously. }
\begin{algorithm}[h]
\caption{Training Energy-Weighted Diffusion Model with $\beta$ as Input}\label{alg:beta}
\begin{algorithmic}[1]
\REQUIRE Score function $\sbb{\btheta}(\cdot; \cdot, \cdot)$, {\color{red} maximum guidance scale $\beta_{\max}$}
\REQUIRE Batch size $B$, time weight $\lambda(t)$, schedule $(\mu_t, \sigma_t)$
\FOR {batch $\{\xb_0^i, \cE(\xb_0^i)\}_i$}
\STATE Sample $\beta \sim U(0, \beta_{\max})$
\FOR {index $i \in [B]$}
\STATE Calculate guidance $g_i = \mathrm{softmax}(-\beta \cE(\xb_0^i)) = \exp(-\beta \cE(\xb_0^i)) / \sum_j \exp(-\beta \cE(\xb_0^j))$
\STATE Sample $t_i \sim U(0, 1)$, calculate $\mu_{t_i}, \sigma_{t_i}$, sample $\bepsilon_i \sim \cN(0, \Ib_d)$ and $\xb_{t_i} = \mu_{t_i} \xb_0^i + \sigma_{t_i}\bepsilon_i$
\ENDFOR
\STATE Calculate and take a gradient step using $\cL_{\text{CED}}(\btheta) = \sum_i \lambda(t_i) g_i\|\sbb^{\btheta}(\xb_{t_i}; t_i, \beta) + \bepsilon_i / \sigma_{t_i}\|_2^2$.
\ENDFOR
\end{algorithmic}
\end{algorithm}
\begin{figure}[htbp]
\centering
\begin{subfigure}[b]{0.49\textwidth}
\includegraphics[width=\linewidth]{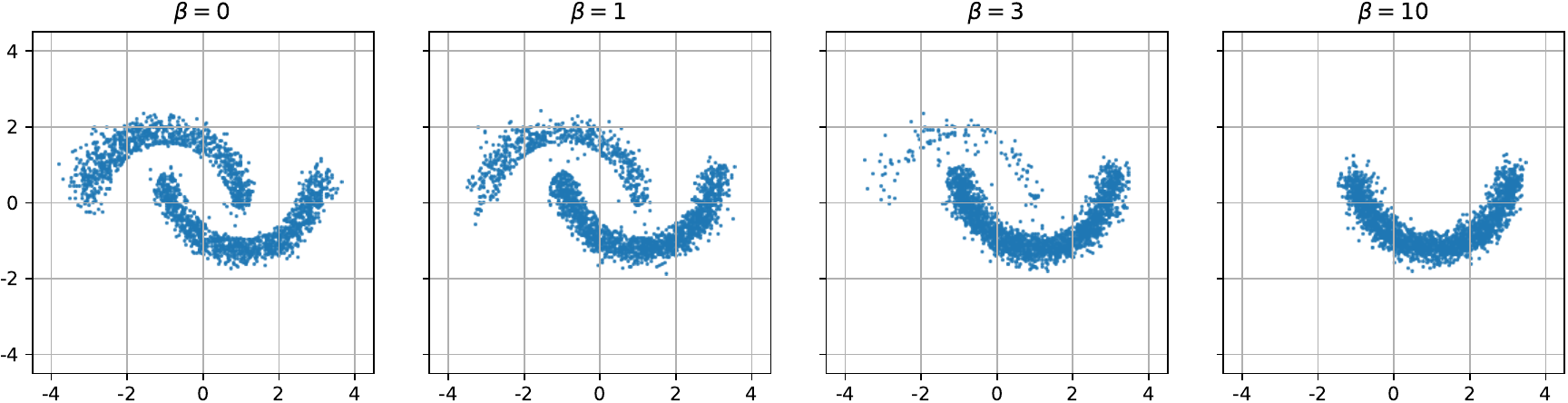}
\includegraphics[width=\linewidth]{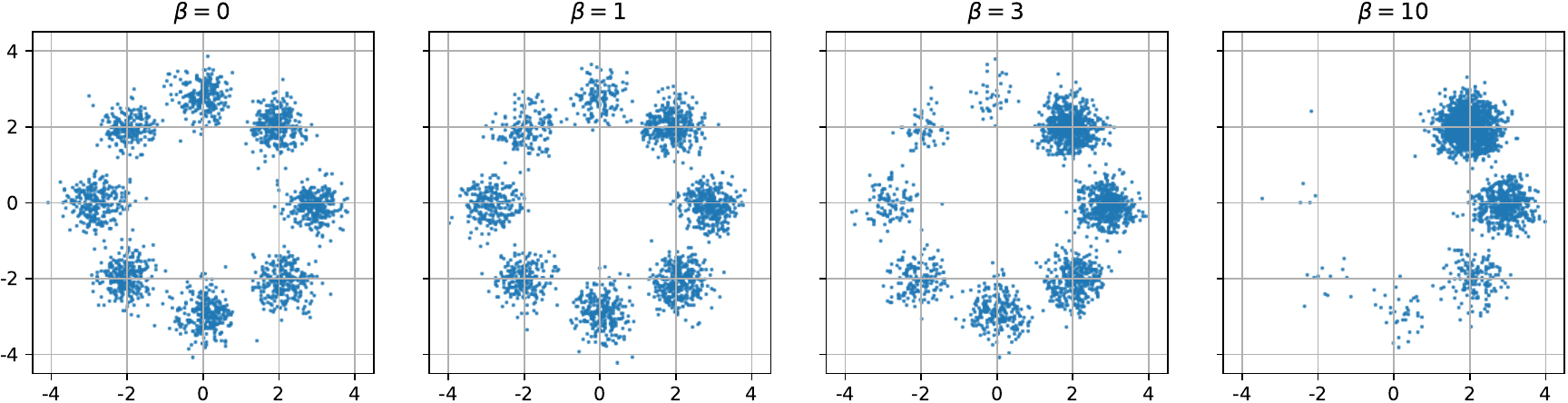}
\includegraphics[width=\linewidth]{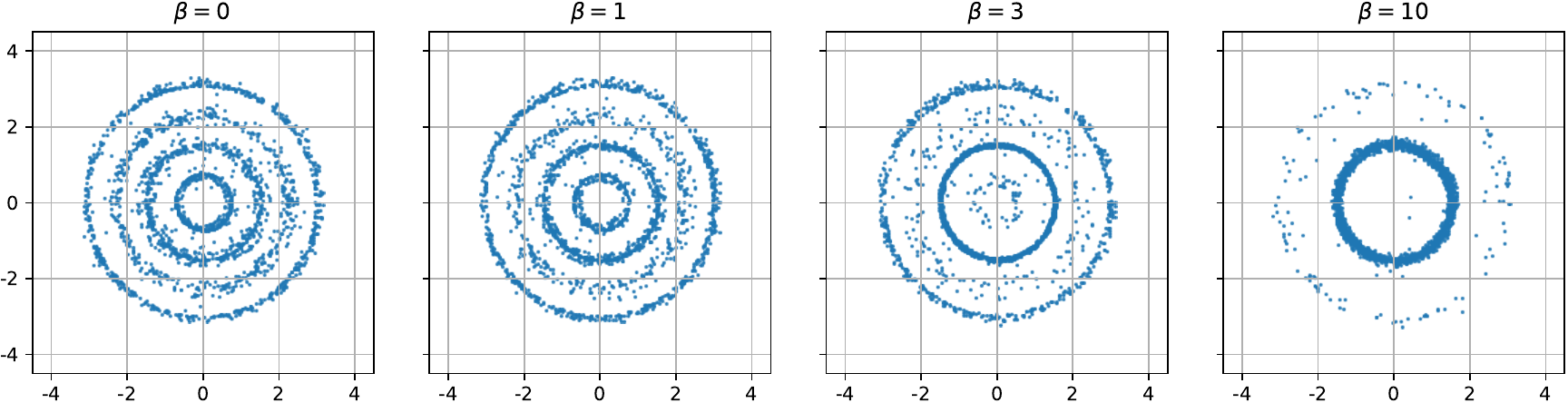}
\end{subfigure}
\hfill
\begin{subfigure}[b]{0.49\textwidth}
\includegraphics[width=\linewidth]{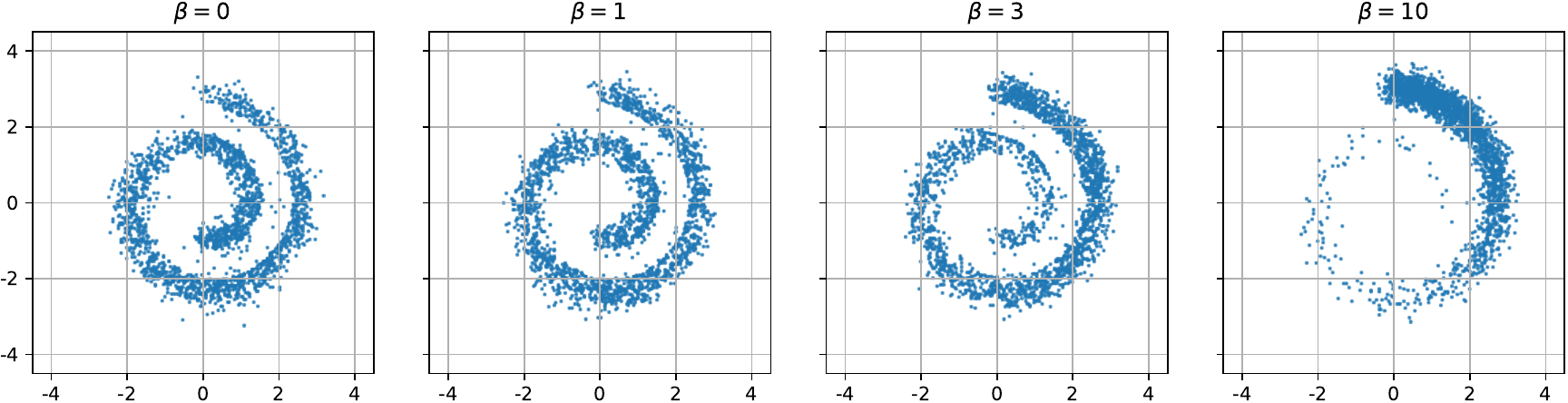}
\includegraphics[width=\linewidth]{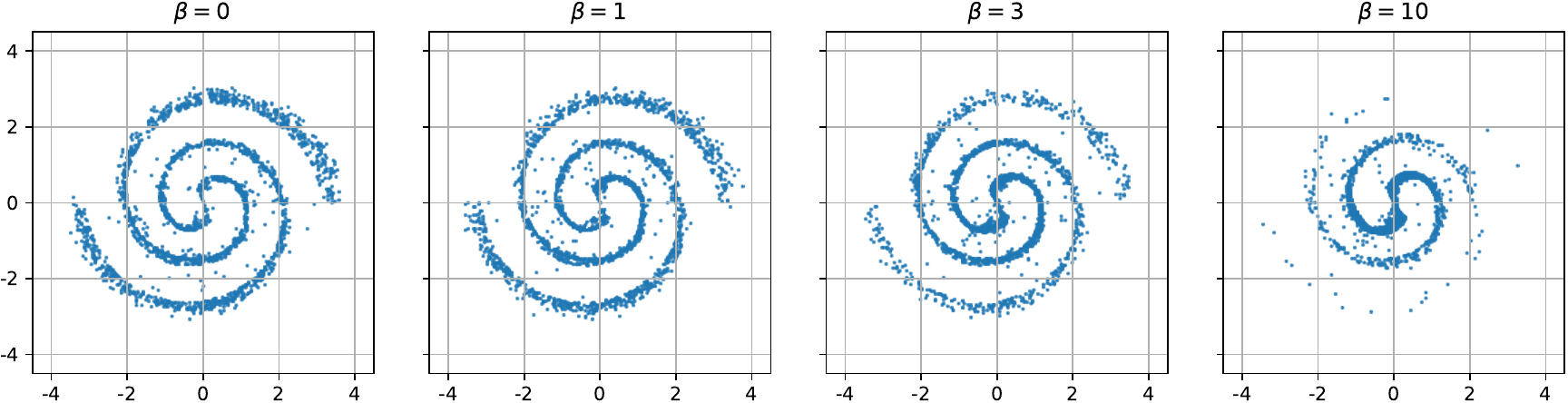}
\includegraphics[width=\linewidth]{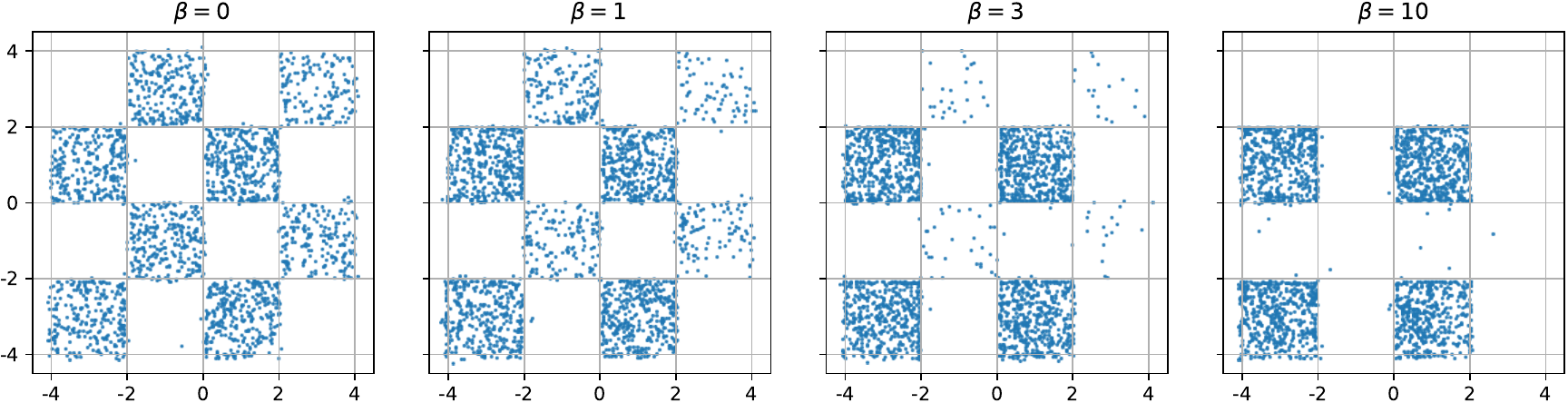}
\end{subfigure}
\caption{Data sampled from the score function trained using energy-weighted diffusion with $\beta$ as an input. A total of 2,000 data points were sampled for each state.}\label{fig:beta}
\end{figure}
\REV{
\subsection{Image Synthesis}

To evaluate the performance of the proposed methods in guiding the generative process for more complex tasks, we conduct image synthesis experiments using the \texttt{smithsonian-butterflies-subset} dataset\footnote{\url{https://huggingface.co/datasets/huggan/smithsonian_butterflies_subset}}. Similar to \citet{lu2023contrastive}, the energy function is defined as the average hue distance between the image and the target hues red, blue, and green, expressed as $\cE(\xb) = \overline{\|h(\xb) - h_{\text{target}}\|_1} \in [0, 1]$. We use the energy-weighted diffusion to train for the target distribution described as $q(\xb) \propto p(\xb) \exp(-\beta \cE(\xb))$ with $\beta = 50$ in the experiments. The results, presented in Figure~\ref{fig:butterfly}, demonstrate that our methods effectively guide the image generation process.}
\begin{figure}[htbp]
\centering
\begin{subfigure}[b]{\textwidth}
\includegraphics[width=\textwidth]{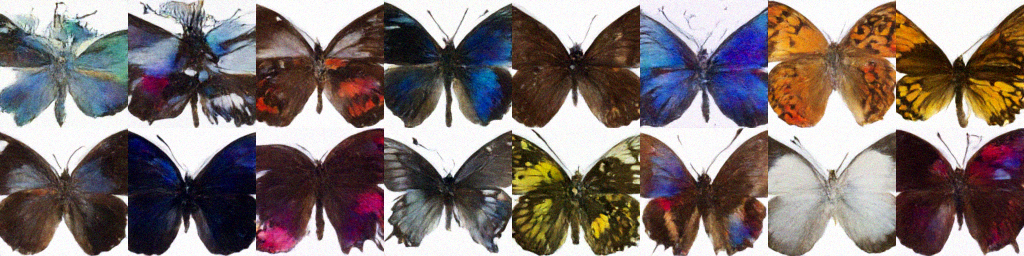}
\caption{Original (without guidance)}
\end{subfigure}
\begin{subfigure}[b]{\textwidth}
\includegraphics[width=\textwidth]{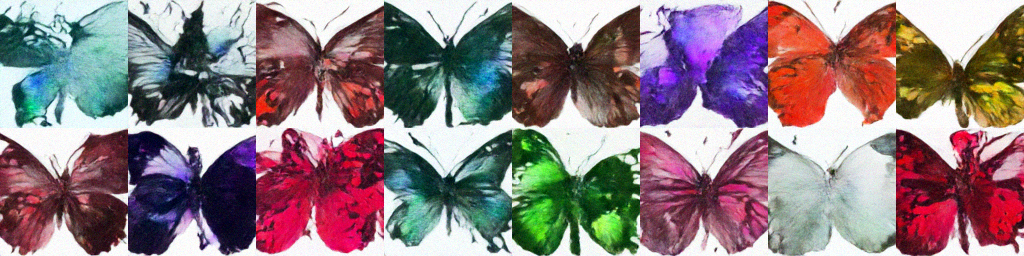}
\caption{Red guidance}
\end{subfigure}
\begin{subfigure}[b]{\textwidth}
\includegraphics[width=\textwidth]{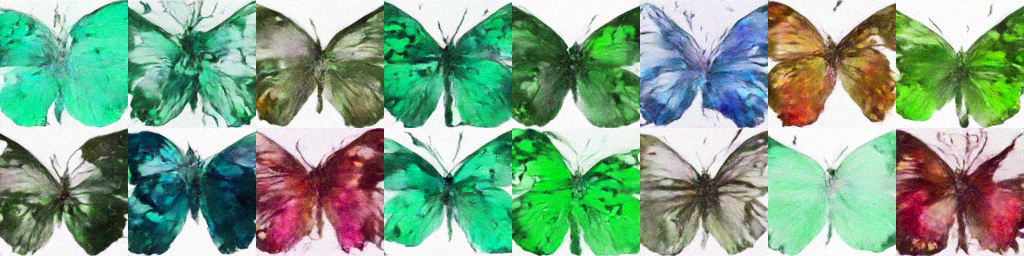}
\caption{Green guidance}
\end{subfigure}
\begin{subfigure}[b]{\textwidth}
\includegraphics[width=\textwidth]{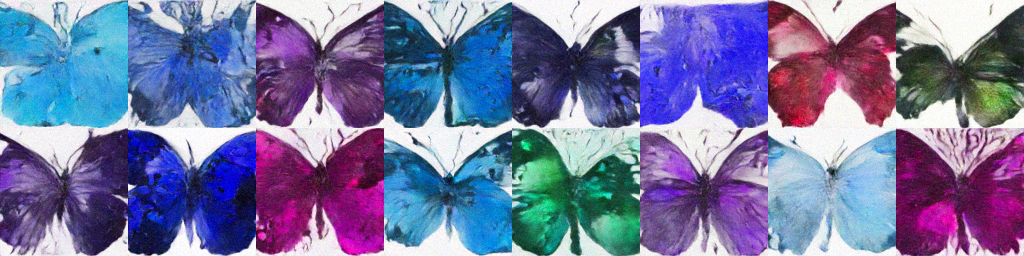}
\caption{Blue guidance}
\end{subfigure}
\caption{Generated butterflies, the samples are generated from the same 16 random seeds with different guidance or no guidance using reverse process with $T = 1000$} \label{fig:butterfly}.
\end{figure}
\REV{
\subsection{Molecular Structure Generation}

We further evaluate the performance of the proposed weighted diffusion algorithms on molecular structure generation using $\text{E}(3)$ diffusion networks~\citep{hoogeboom2022equivariant}. The normalized properties—namely, isotropic polarizability ($\alpha$), highest occupied molecular orbital energy ($\epsilon_{\text{HOMO}}$), lowest unoccupied molecular orbital energy ($\epsilon_{\text{LUMO}}$), and the energy gap $\Delta_\epsilon$ between $\epsilon_{\text{HOMO}}$ and $\epsilon_{\text{LUMO}}$—are used to guide the generation process. For instance, when the energy function is set as $E(\xb) = \alpha$, the diffusion model is guided to generate molecular structures with higher isotropic polarizability. The energy-guided is target for generating the distribution $q(\xb) \propto p(\xb) \exp(\beta \cE(\xb))$ in this case. Similar to~\citep{hoogeboom2022equivariant}, we train a separate EGNN~\citep{satorras2021n} to predict the normalized properties of the generated molecules. The property normalization method follows~\citep{hoogeboom2022equivariant}, and we report the performance for $\beta \in {0, 1, 3, 10}$ in Table~\ref{tab:mol}. The results indicate that the energy-weighted diffusion model can successfully generate molecular structures with enhanced desired properties.
}
\begin{table}[h]
\centering
\begin{tabular}{c|cccc|c}
\toprule
Property & $\beta = 0$ & $\beta = 1$ & $\beta = 3$ & $\beta = 10$ & \textbf{error} \\
\midrule
$\alpha$ & $-0.1977$ & $0.6196$ & $2.2965$ & $2.2956$ & $0.1034$ \\
$\Delta_{\epsilon}$ & $-0.1910$ & $0.4687$& 1.7025 & 1.7129 & $0.0667$\\
$\epsilon_{\text{HOMO}}$ & $0.1142$ &  $1.6074$ & $2.7396$ &  $2.6798$ & $0.0413$\\
$\epsilon_{\text{LUMO}}$ & $-0.2150$ & $0.4089$ & $1.7605$& $1.7669$ & $0.0362$ \\
\bottomrule
\end{tabular}
\caption{Averaged predicted normalized properties for molecular generation. The values are averaged over 1,000 generated molecular structures. The column \textbf{error} represents the prediction error of the classifier. All properties are normalized by their mean and standard deviation from the training dataset.}
\label{tab:mol}
\end{table}
\REV{
\subsection{Training Curves of OT and Diffusion}
In this subsection we plot the training curves for learning the behavioral policy $\mu(a | s)$ following the flow matching (OT) and diffusion loss to further demonstrate the comparison between OT and diffusion when applied in offline RL tasks. In particular, Figure~\ref{fig:loss-a} compares the diffusion and OT training loss on the 6 ant-maze tasks. Since it is impossible to find a $\vb_{\btheta}^t(\ab_t; \xb)$ (or $\sbb_{\btheta}^t(\ab_t; \xb)$ matching all  $\ub_{t0}(\ab_t | \ab_0)$ (or $\nabla_{\ab_t} \log p_{t0}(\ab_t | \ab_0)$) for all $\ab_0$, neither the flow matching loss $\|\vb_{\btheta}^t(\ab_t; \xb) - \ub_{t0}(\ab_t | \ab_0)\|_2^2$ nor the score matching (diffusion) loss $\|\sbb_{\btheta}^t(\ab_t; \xb) - \nabla_{\ab_t} \log p_{t0}(\ab_t | \ab_0)\|_2^2$ converge to zero, and the remaining training loss is different between OT and diffusion. Therefore we normalize the loss and presented in Figure~\ref{fig:loss-b} by shifting the loss at the last epoch (600-th epoch) as $0$ and the loss at the first epoch as $1$. 

The experiment results as presented in Figure\ref{fig:loss-a, fig:loss-b} suggest the following findings. First, on the Antmaze tasks, flow matching using OT loss converges similar but slightly faster to the stable point. Second, despite of this difference on the convergence rate, both OT and diffusion can correctly learn the behavioral policy.
}
\begin{figure}[htbp]
\centering
\begin{subfigure}[b]{0.32\textwidth}
\includegraphics[width=\textwidth]{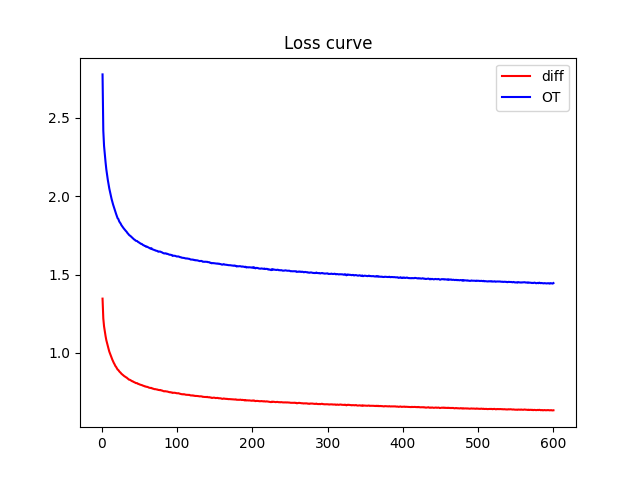}
\caption{Antmaze-Umaze}
\end{subfigure} \hfill
\begin{subfigure}[b]{0.32\textwidth}
\includegraphics[width=\textwidth]{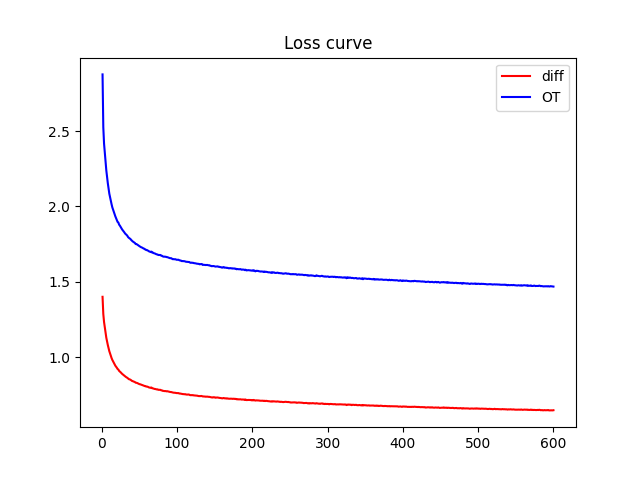}
\caption{Antmaze-Medium-Play}
\end{subfigure} \hfill
\begin{subfigure}[b]{0.32\textwidth}
\includegraphics[width=\textwidth]{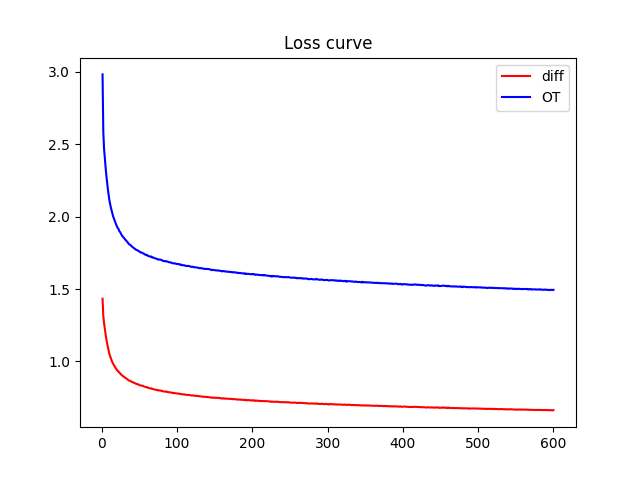}
\caption{Antmaze-Large-Play}
\end{subfigure}
\begin{subfigure}[b]{0.32\textwidth}
\includegraphics[width=\textwidth]{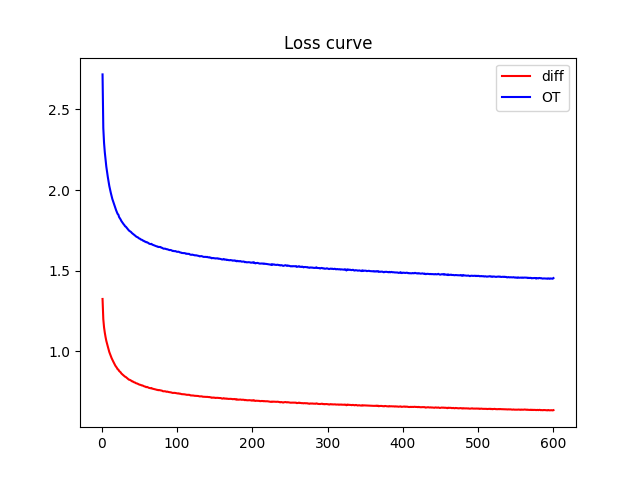}
\caption{Antmaze-Umaze-Diverse}
\end{subfigure} \hfill
\begin{subfigure}[b]{0.32\textwidth}
\includegraphics[width=\textwidth]{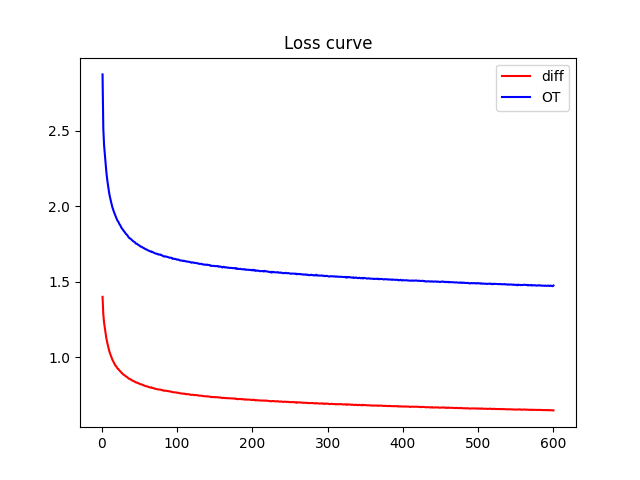}
\caption{Antmaze-Medium-Diverse}
\end{subfigure} \hfill
\begin{subfigure}[b]{0.33\textwidth}
\includegraphics[width=\textwidth]{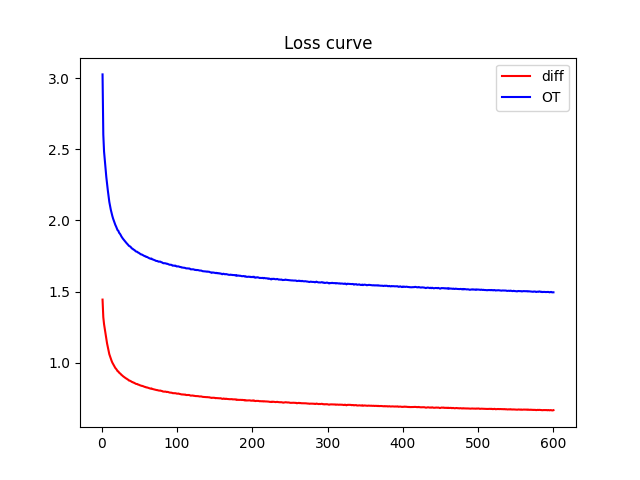}
\caption{Antmaze-Large-Diverse}
\end{subfigure}
\caption{Unnormalized loss w.r.t. number of epochs for the first 600 epochs in learning the behavioral policy $\mu$. The blue curve indicates the OT schedule for flow matching and the red curve indicates the diffusion schedule.} \label{fig:loss-a}.
\end{figure}
\begin{figure}[htbp]
\centering
\begin{subfigure}[b]{0.32\textwidth}
\includegraphics[width=\textwidth]{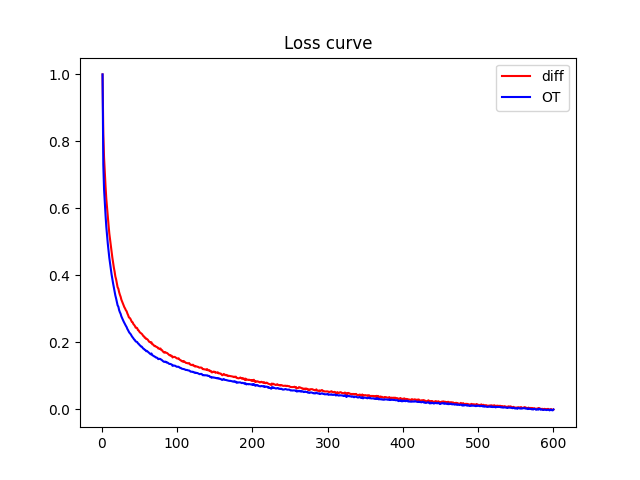}
\caption{Antmaze-Umaze}
\end{subfigure} \hfill
\begin{subfigure}[b]{0.32\textwidth}
\includegraphics[width=\textwidth]{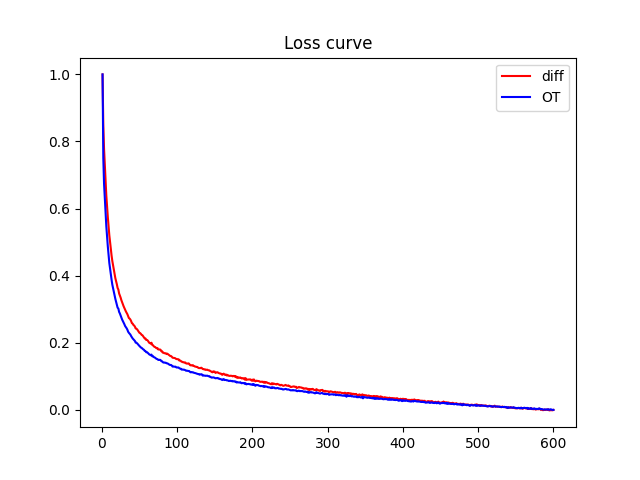}
\caption{Antmaze-Medium-Play}
\end{subfigure} \hfill
\begin{subfigure}[b]{0.32\textwidth}
\includegraphics[width=\textwidth]{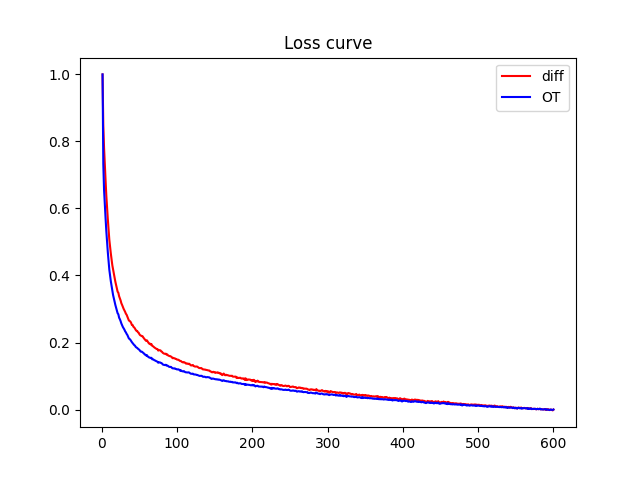}
\caption{Antmaze-Large-Play}
\end{subfigure}
\begin{subfigure}[b]{0.32\textwidth}
\includegraphics[width=\textwidth]{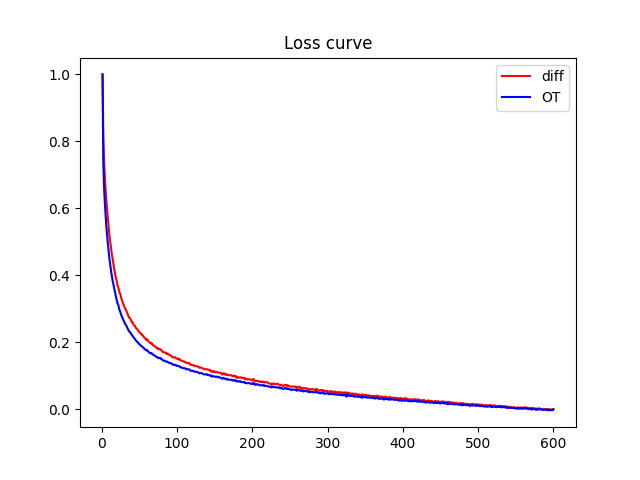}
\caption{Antmaze-Umaze-Diverse}
\end{subfigure} \hfill
\begin{subfigure}[b]{0.32\textwidth}
\includegraphics[width=\textwidth]{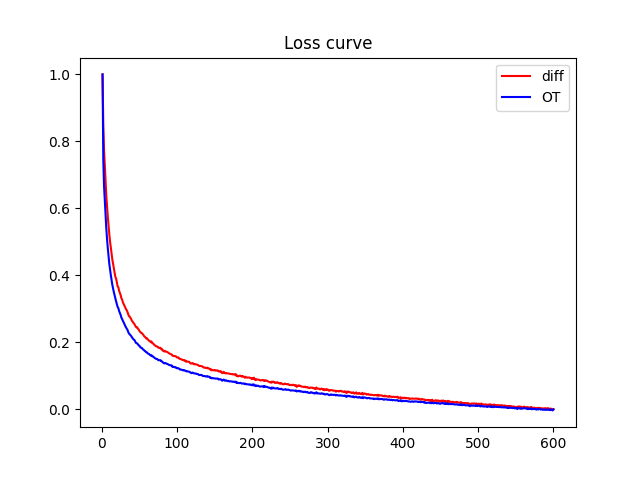}
\caption{Antmaze-Medium-Diverse}
\end{subfigure} \hfill
\begin{subfigure}[b]{0.33\textwidth}
\includegraphics[width=\textwidth]{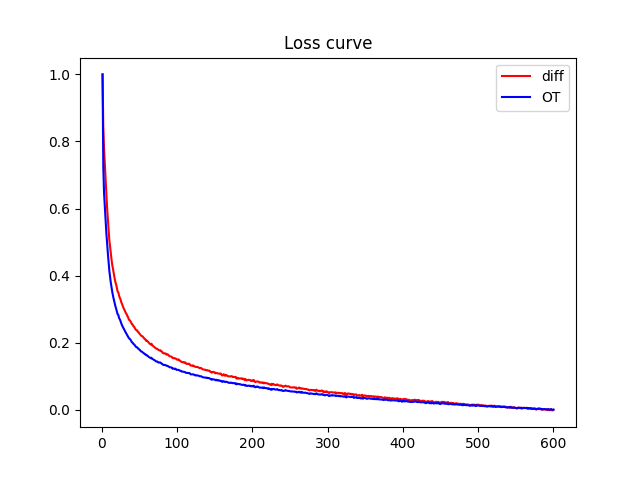}
\caption{Antmaze-Large-Diverse}
\end{subfigure}
\caption{Normalized loss w.r.t. number of epochs for the first 600 epochs in learning the behavioral policy $\mu$. The blue curve indicates the OT schedule for flow matching and the red curve indicates the diffusion schedule. The normalization is conducted by rescaling the loss at last iteration to 0 and the loss at the first iteration to 1. } \label{fig:loss-b}.
\end{figure}
\end{document}